\documentclass{article}
\pdfoutput=1
\usepackage{arxiv}
\setcitestyle{authoryear,open={(},close={)}}
\renewcommand{\cite}{\citep}

\usepackage[utf8]{inputenc} %
\usepackage[T1]{fontenc}    %
\usepackage{hyperref}       %
\usepackage{url}            %
\usepackage{booktabs}       %
\usepackage{amsfonts}       %
\usepackage{nicefrac}       %
\usepackage{microtype, color}      %

\usepackage{tabulary}
\usepackage{tabularx}
\usepackage{array}
\usepackage{algorithm, algorithmicx}
\usepackage[noend]{algpseudocode}
\usepackage{amsmath}
\usepackage{amssymb}
\usepackage{amsthm}
\usepackage{bbm}
\usepackage{graphicx}
\usepackage{indentfirst}
%\mathtoolsset{showonlyrefs}

\newlength\savewidth\newcommand\shline{\noalign{\global\savewidth\arrayrulewidth
		\global\arrayrulewidth 1pt}\hline\noalign{\global\arrayrulewidth\savewidth}}
\newcommand{\tablestyle}[2]{\setlength{\tabcolsep}{#1}\renewcommand{\arraystretch}{#2}\centering\footnotesize}

\newtheorem{theorem}{Theorem}[section]
\newtheorem{claim}[theorem]{Claim}
\newtheorem{definition}[theorem]{Definition}
\newtheorem{lemma}[theorem]{Lemma}

\newtheorem{proposition}[theorem]{Proposition}
\newtheorem{remark}[theorem]{Remark}

\newtheorem{assumption}[theorem]{Assumption}

\makeatletter
\newtheorem*{rep@definition}{\rep@title}
\newcommand{\newrepdefinition}[2]{%
	\newenvironment{rep#1}[1]{%
		\def\rep@title{#2 \ref{##1}}%
		\begin{rep@definition}}%
		{\end{rep@definition}}}
\makeatother

\newrepdefinition{definition}{Definition}
\newrepdefinition{lemma}{Lemma}
\newrepdefinition{proposition}{Proposition}

\newcommand{\loss}[1]{\mathcal{L}_{#1}}
\newcommand{\reg}{\sigma} % strength of the regularizer
\newcommand{\Exp}[2]{\mathop{\mathbb{E}}_{#1}\left[#2\right]}
\newcommand{\E}[1]{\mathop{\mathbb{E}}_{#1}}
\newcommand{\Esymbol}{\mathrm{\mathbb{E}}}
\newcommand{\ppos}{P_+}
\newcommand{\pdata}{P_\data}
\newcommand{\norm}[1]{\left\lVert#1\right\rVert}
\newcommand{\imatrix}{I}
\newcommand{\Real}{\mathbb{R}}
\newcommand{\data}{\mathcal{X}}
\newcommand{\integer}{\mathbb{Z}^+}
\newcommand{\lossscl}{\mathcal{L}_{\textup{scl}}}
\DeclareMathOperator{\Tr}{Tr}
\newcommand{\tildeF}{\widetilde{F}}
\newcommand{\norA}{\bar{{A}}}
\newcommand{\Ndata}{N}
\newcommand{\id}[1]{\mathbbm{1}\left[#1\right]}
\newcommand{\one}{\mathbf{1}}
\newcommand{\cluster}{\tau}
\newcommand{\clusterf}[1]{\cluster({#1})}
\newcommand{\y}{y}
\newcommand{\yf}[1]{\y({#1})}
\newcommand{\eigval}{\lambda}
\newcommand{\headmatrix}{B}
\newcommand{\head}{b}
\newcommand{\nclass}{r}
\newcommand{\ncluster}{m}
\newcommand{\idvec}[1]{e_{#1}}
\newcommand{\laplacian}{\mathcal{L}}
\newcommand{\source}{S}
\newcommand{\target}{T}
\newcommand{\precond}{\Sigma}
\newcommand{\psource}{\mathcal{P}_\source}
\newcommand{\ptarget}{\mathcal{P}_\target}
\newcommand{\pred}{g}
\DeclareMathOperator*{\argmax}{arg\,max}

\newcommand{\Prob}{\mathcal{P}_\data}
 % distribution after random walking for $#2$ steps starting from #1

\newcommand{\graph}{G}

\newcommand{\Err}{\mathcal{E}}
\newcommand{\wvec}{g}
\newcommand{\clusterset}{C}
\newcommand{\expansion}{\gamma}

\def\shownotes{1}  \ifnum\shownotes=0
\newcommand{\authnote}[2]{{[#1: #2]}}
\else
\newcommand{\authnote}[2]{}
\fi

\def\shownotes{0}  \ifnum\shownotes=1
\newcommand{\authornotenonurgent}[2]{{[#1: #2]}}
\else
\newcommand{\authornotenonurgent}[2]{}
\fi

\ifdefined\usebigfont

\usepackage{times}
\usepackage[fontsize=13pt]{scrextend}
\AtBeginDocument{
	\newgeometry{left=1.56in,right=1.56in,top=1.71in,bottom=1.77in}
}
\pagenumbering{gobble}
\else
\fi

\begin{document}

%\maketitle
\begin{center}
	{\LARGE Beyond Separability: Analyzing the Linear \\
		\vspace{0.1cm}
		Transferability of Contrastive Representations to \\
				\vspace{0.1cm}
		Related Subpopulations} \\
	\vspace{.8cm}
	{\large Jeff Z. HaoChen ~~~~ Colin Wei ~~~~ Ananya Kumar ~~~~ Tengyu Ma} \\
	\vspace{.4cm}
	{\large Stanford University} \\
	\vspace{.05cm}
	Department of Computer Science \\
	\vspace{.4cm}
	\texttt{\{jhaochen,\,colinwei,\,ananya,\,tengyuma\}@cs.stanford.edu}
	\vspace{1cm}
\end{center}

	\begin{abstract}%
Contrastive learning is a highly effective method for learning representations from unlabeled data. Recent works show that contrastive representations can transfer across domains, leading to simple state-of-the-art algorithms for unsupervised domain adaptation. In particular, a linear classifier trained to separate the representations on the source domain can also predict classes on the target domain accurately, even though the representations of the two domains are far from each other.
We refer to this phenomenon as \textit{linear transferability}. This paper analyzes when and why contrastive representations exhibit linear transferability in a general unsupervised domain adaptation setting. 
We prove that linear transferability can occur when data from the same class in different domains (e.g., photo dogs and cartoon dogs) are more related with each other than data from different classes in different domains (e.g., photo dogs and cartoon cats) are. 
Our analyses are in a realistic regime where the source and target domains can have unbounded density ratios and be weakly related, and they have distant representations across domains. 
\end{abstract}

%\jnote{compare with shen et al theory}
%\jnote{cite prior work to justify population vs empirical. Is this important enough to be included in the intro though? }

\section{Introduction}
In recent years, contrastive learning and related ideas have been shown to be highly effective for representation learning~\citep{chen2020simple, chen2020big, he2020momentum, caron2020unsupervised, chen2020improved,gao2021simcse, su2021tacl,chen2020exploring}.
Contrastive learning trains representations on \emph{unlabeled data} by encouraging positive pairs (e.g., augmentations of the same image) to have closer representations than negative pairs (e.g., augmentations of two random images).  The learned representations are almost \textit{linearly separable}: one can train a linear classifier on top of the fixed representations and achieve strong performance on many natural downstream tasks~\citep{chen2020simple}.
Prior theoretical works analyze contrastive learning by proving that semantically similar
datapoints (e.g., datapoints from the same class) are mapped to geometrically nearby representations~\citep{arora2019theoretical,tosh2020contrastive,tosh2021contrastive,haochen2021provable}. In other words, representations form clusters in the Euclidean space that respect the semantic similarity; therefore, they are linearly separable for downstream tasks where datapoints in the same semantic cluster have the same label. 

%\tnote{edited the previous sentence a bit}
%\ak{I wonder if people might ask: is contrastive learning really effective? E.g., what if we get reviewers who work on old-school domain adaptation theory like kernel mean matching, Sugiyama, Gretton, Ben-David, style learning. Or even co-training folk. They may not really know how much they should value contrastive learning.}
% \tnote{removed one sentence}
%Fine-tuning these representations on small labeled datasets has led to state-of-the-art results on popular benchmarks such as ImageNet~\citep{chen2020simple}.
%representations form semantic clusters that contain datapoints with similar semantic meanings. %\ak{semantically similar is a word that really confused me when I first read this paper, I didn't quite understand what it meant} %\tnote{tweaked the last sentence a bit }
%Note that the datapoints are typically not clustered or linearly separable in the original input space. \tnote{is the last sentence here necessary?} \tnote{edited the first few sentences} %\jnote{@Tengyu: Is this paragraph good enough to introduce ``cluster''? I think it's good to avoid introducing positive pair graph in the intro, so here I try to introduce the idea of cluster without talking about clustering in a graph}

\begin{figure}
	\includegraphics[width=1.0\textwidth]{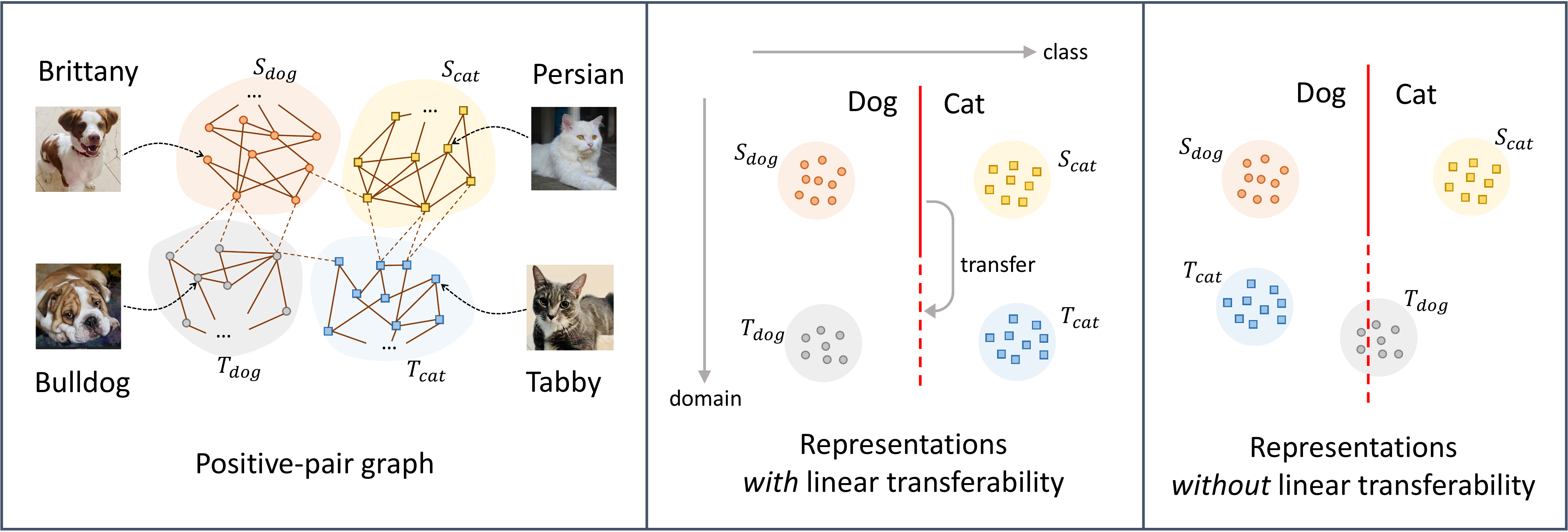}
	\caption{ \textbf{The linear transferability of representations.}
%		\tnote{I think you messed up the figure and the legend; need to be careful here..also the color doesn't match in any case...} 
		We demonstrate the linear transferability of representations when the unlabeled data contains images of two breeds of dogs (Brittanys, Bulldogs) and two breeds of cats (Persians, Tabbies).
		\textbf{Left:} A visualization of the positive-pair graph with four semantic clusters. Inter-cluster edges (dashed) have a much smaller weight than intra-cluster edges (solid). Inter-cluster edges between two breeds of dogs (or cats) have more weight than that between a dog cluster and a cat cluster.
		\textbf{Middle and right:} A visualization of two different types of representations: both have linear separability, but only the middle one has linear transferability. 
		The red line is the decision boundary of a dog-vs-cat linear classifier trained in the representation space on \emph{labeled} Brittanys ($S_\text{dog}$) vs. Persians ($S_\text{cat}$) images. The representation has linear transferability if this classifier is  accurate on \emph{unlabeled} Bulldogs ($T_\text{dog}$) vs. Tabbies ($T_\text{cat}$) images.	\label{figure:main_figure}
	}
\end{figure}

Intriguingly, recent empirical works show that contrastive representations carry richer information \textit{beyond} the cluster memberships---they can transfer across domains in a linear way as elaborated below. Contrastive learning is used in many unsupervised domain adaptation algorithms\citep{ thota2021contrastive, sagawa2021wilds} and the transferability leads to simple state-of-the-art algorithms~\citep{shen2021does, park2020jointcontrastive, wang2021crossdomain}. 
In particular, \citet{shen2021does} observe that the relationship between two clusters can be captured by their relative positions in the representation space. 
 For instance, as shown in Figure~\ref{figure:main_figure} (middle), suppose $S_\text{dog}$ and $S_\text{cat}$ are two classes in a \textit{source} domain (e.g., Brittany dogs and Persian cats), and $T_\text{dog}$ and $T_\text{cat}$ are two classes in a \textit{target} domain (e.g., Bulldogs and Tabby cats). A \textit{linear} classifier trained to separate the representations of $S_\text{dog}$ and $S_\text{cat}$ turns out to classify $T_\text{dog}$ and $T_\text{cat}$ as well. This suggests the four clusters of representations are not located in the Euclidean space randomly (e.g., as in Figure~\ref{figure:main_figure} (right)), but rather in a more aligned position as in Figure~\ref{figure:main_figure} (middle). 
 We refer to this phenomenon as the \textit{linear transferability} of contrastive representations. 
% Contrastive learning has also played a key role in other unsupervised domain adaptation methods~\citep{park2020jointcontrastive, wang2021crossdomain}, highlighting the importance of studying this %phenomenon. 

%This observation leads to a simple algorithm for unsupervised domain adaptation that achieves state-of-the-art performance.

%\citet{shen2021does} analyze how inter-cluster relationships influence linear transferability in a special case where the graph has 4 clusters, each of which consists of a single datapoint. This example can also be generalized to the case where graph edges are generated according to the stochastic block model. However, the analysis requires leveraging properties very specific to the stochastic block model to exactly compute the representations found by spectral clustering and therefore does not generalize to broader settings.

%Despite empirical success, a theoretical understanding of contrastive learning for domain adaptation is lacking. In this paper, we theoretically study the \textit{linear transferability} phenomenon of contrastive representations, i.e., a linear head trained on labeled data from a \textit{source domain} $\source$ can accurately predict the labels on a \textit{target domain} $\target$. 

This paper analyzes when and why contrastive representations exhibit linear transferability in a general unsupervised domain adaptation setting. 
Evidently, linear transferability can only occur when clusters corresponding to the same class in two domains (e.g., Brittany dogs and Bulldogs) are somewhat \textit{related} with each other. 
Somewhat surprisingly, we found that a weak relationship suffices: linear transferability occurs as long as corresponding classes in different domains are more related than different classes in different domains. 
%We show that a relatively weak relationship between corresponding classes suffices for linear transferability: when data from the same class in different domains are more likely to form positive pairs than data from different classes in different domains\jnote{rephrased this sentence to avoid talking about positive-pair graph}, 
Concretely, under this assumption (Assumptions~\ref{assumption:uda} or~\ref{assumption:uda_multistep_interclass}), a linear head learned with labeled data on one domain (Algorithm~\ref{algorithm:1}) can successfully predict the classes on the other domain (Theorems~\ref{theorem:uda} and~\ref{theorem:uda_multistep}). Notably, our analysis provably shows that representations from contrastive learning do not only encode cluster identities but also capture the inter-cluster relationship, hence explains the empirical success of contrastive learning for domain adaptation.

 %We prove that linear transferability (hence domain adaptation) can occur when neither is true.
 %whereas our 
 %In our analysis, the domains can be very different---even more different than the classes (e.g., examples from different domains $S_\text{dog}$ and $T_\text{dog}$ are less likely to form positive pairs than examples from different classes $S_\text{dog}$ and $S_\text{cat}$), and the source and target clusters can be very far apart even in representation space. 
 
Compared to previous theoretical works on unsupervised domain adaptation~\citep{shimodaira2000improving,huang2006correcting,sugiyama2007covariate, gretton2008covariate, ben2010theory, mansour2009domain, kumar2020gradual,chen2020selftraining, cai2021theory}, our results analyze a modern, practical algorithm with weaker and more realistic assumptions.
We do not require bounded density ratios or overlap between the source and target domains, which were assumed in some classical works~\citep{sugiyama2007covariate, ben2010theory, zhang2019bridging, zhao2019learning}. 
Another line of prior works~\citep{kumar2020gradual,chen2020selftraining} assume that data is Gaussian or near-Gaussian, whereas our result allows more general data distribution.  \cite{cai2021theory} analyze pseudolabeling algorithms for unsupervised domain adaptation, but require that the same-class cross-domain data are more related with each other 
%\tnote{closer is ambiguous and might be misleading to me; certainly you don't want to give people feeling that we need closer in raw space?} 
(i.e., more likely to form positive pairs) 
than cross-class same-domain data are. 
We analyze a contrastive learning algorithm with strong empirical performance, and only require that the same-class cross-domain data are more related with each other than cross-class \textit{cross-domain} data, which is intuitively and empirically more realistic as shown in~\citet{shen2021does}. (See related work and discussion below Assumption~\ref{assumption:uda} for details). %\jnote{Tengyu: This paragraph used to be in related works, now I move all of them into the intro. Would this look too much?}

Technically, we significantly extend the framework of~\citet{haochen2021provable} to allow distribution shift---our setting only has labels on one subpopulation of the data (the source domain). Studying transferability to unlabeled subpopulations requires both novel assumptions (Assumptions~\ref{assumption:uda} and~\ref{assumption:uda_multistep_interclass}) and novel analysis techniques (as discussed in Section~\ref{section:sketch}). %\tnote{edited}

Our analysis also introduces a variant of the linear probe—instead of training the linear head with the logistic loss, we learn it by directly computing the average representations within a class, multiplied by a preconditioner matrix (Algorithm~\ref{algorithm:1}). We empirically test this linear classifier on benchmark datasets and show that it achieves superior domain adaptation performance in Section~\ref{sec:experiments}.

%Our analysis also relies on a generalized version of contrastive loss (Equation~\eqref{eqn:12}) 
%and a technical improvement over \citet{haochen2021provable} (Lemma~\ref{theorem:alpha2bound}), which may be of independent interest. 

%\jnote{move some experiments to the main body, if space allows}
%Our analysis also introduces a variant of the linear probe---instead of training the linear head with logistics loss, we learn it by directly computing the average representations within a class, multiplied by a preconditioner matrix (Algorithm~\ref{algorithm:1}). We empirically test this linear classifier on benchmark datasets and show that it achieves superior domain adaptation performance (see Section~\ref{sec:experiments}). 

%\section{Related Works}

\paragraph{Additional Related Works.}
A number of papers have analyzed the linear separability of representations from contrastive learning~\citep{arora2019theoretical,tosh2020contrastive,tosh2021contrastive,haochen2021provable} and self-supervised learning~\citep{lee2020predicting}, whereas we analyze the linear transferability. %(linear classifiers trained on one subpopulation work well on a \emph{different} but related subpopulation). 
~\citet{shen2021does} also analyze the linear transferability but only for toy examples where the data is generated by a stochastic block model. Their technique requires a strong symmetry of the positive-pair graph (which likely does not hold in practice) so that top eigenvectors can be analytically derived.  Our analysis is much more general and does not rely on explicit, clean form of the eigenvectors (which is impossible for general graphs).

Empirically, pre-training on a larger unlabeled dataset and then fine-tuning on a smaller labeled dataset is one of the most successful approaches for handling distribution shift~\citep{blitzer2007biographies, ziser2018deep, ziser2017neural, ben2020perl, chen2012marginalized, xie2020n, jean2016combining, hendrycks2020pretrained, kim2022broad, kumar2022fine, sagawa2021wilds, thota2021contrastive, shen2021does}. Recent advances in the scale of unlabeled data, such as in BERT and CLIP, have increased the importance of this approach~\citep{wortsman2022model, wortsman2021robust}. Despite the empirical progress, there has been limited theoretical understanding of why pre-training helps domain shift. Our work provides the first analysis that shows pre-trained representations with a supervised linear head trained on one domain can provably generalize to another domain.

%\vspace{-4pt}
\section{Preliminaries}\label{section:preliminaries}
%\vspace{-4pt}

\newcommand{\ww}[1]{w(#1)}
\newcommand{\phiavg}{\phi}
\newcommand{\phimax}{\bar{\phi}}
\newcommand{\phimin}{\underline{\phi}}

In this section, we introduce the contrastive loss, define the positive-pair graph, and introduce the basic assumptions on the clustering structure in the positive-pair graph. 

%\vspace{-8pt}
%\subsection{Positive pairs and contrastive loss}
%\label{subsec:positivepair}
\paragraph{Positive pairs.} 
Contrastive learning algorithms rely on the notion of ``positive pairs'', which are pairs of semantically similar/related data. 
Let $\data$ be the set of population data and $\ppos$ be the distribution of positive pairs of data satisfying $\ppos(x,x') = \ppos(x', x)$ for any $x, x'\in\data$. We note that though a positive pair typically consists of semantically related data, the vast majority of semantically related pairs are \textit{not} positive pairs. In the context of computer vision problems~\citep{chen2020simple}, these pairs are usually generated via data augmentation on the same image.

For the ease of exposition, we assume $\data$ is a finite but large set (e.g., all real vectors in $\Real^d$ with bounded precision) of size $N$. We use $\pdata$ to denote the marginal distribution of $\ppos$, i.e.,  $\pdata(x) := \sum_{x'\in\data} \ppos(x, x')$. 
Following the terminology in the literature~\citep{arora2019theoretical}, we call $(x, x')$ a ``negative pair'' if $x$ and $x'$ are independent random samples from $\pdata$. 

%In the context of computer vision problems~\citep{chen2020simple}, $\data$ corresponds to the set of all augmented images, and $\ppos$ is the distribution of two independent random augmentations of the same natural image. 
%Our analysis only depends on the properties of $\ppos$ and is oblivious to how the positive pairs are generated. However, we will ensure our assumptions on $\ppos$ to be reasonably realistic for the vision applications---e.g., allowing $x$ and $x'$ to be highly correlated, which is an important property of augmented pairs. 

\paragraph{Generalized spectral contrastive loss.} 
Contrastive learning trains a representation function (feature extractor) by minimizing a certain form of contrastive loss.
Formally, let $f:\data\rightarrow\Real^k$ be a mapping from data to $k$-dimensional features. 
%A sequence of recent empirical works~\citep{chen2020simple, zbontar2021barlow, chen2020exploring} proposed various losses to train $f$, based on the idea of encouraging positive pairs to have more similar representations than negative pairs. 
In this paper, we consider a more general version of the spectral contrastive loss proposed in~\citet{haochen2021provable}. 
Let $\imatrix_{k\times k}$ be the $k$-dimensional identity matrix. We consider the following loss with regularization strength $\reg > 0$:
%\vspace{-8pt}
	\begin{align}
		\loss{\reg}(f) = \underset{(x, x^+)\sim \ppos}{\Esymbol}\big[\norm{f(x)-f(x^+)}_2^2\big] + \reg \cdot R(f), \label{eqn:12}
	\end{align}
where the regularizer is defined as
\begin{align}	
R(f) = 	\Big\|\underset{x\sim \pdata}{\Esymbol} \big[f(x)f(x)^\top\big]- \imatrix_{k\times k}\Big\|_F^2.
\end{align}
The loss $\loss{\reg}$ intuitively minimizes the closeness of positive pairs via its first term,  while regularizing the representations' covariance to be identity,  avoiding all the representations to collapse to the same point. Simple algebra shows that $\loss{\reg}$ recovers the original spectral contrastive loss when $\reg=1$ (see Proposition~\ref{proposition:generalized_loss} for a formal derivation).
We note that this loss is similar in spirit to the recently proposed Barlow Twins loss~\citep{zbontar2021barlow}. 

%\vspace{-7pt}
\paragraph{The positive-pair graph. }
One useful way to think of positive pairs is through a graph defined by their distribution. Let the \emph{positive-pair graph} be a weighted undirected graph $\graph(\data, w)$ such that the vertex set is $\data$, and for $x, x'\in\data$, the undirected edge $(x, x')$ has weight $\ww{x,x'}=\ppos(x, x')$. 
This graph was introduced by~\citet{haochen2021provable} as the augmentation graph when the positive pairs are generated from data augmentation. We introduce a new name to indicate the more general applications of the graph into other use cases of contrastive learning (e.g. see~\citet{gao2021simcse}).
We use $\ww{x}=\pdata(x) = \sum_{x'\in\data} \ww{x,x'}$ to denote the total weight of edges connected to a vertex $x$. 
We call $\norA\in\Real^{\Ndata\times\Ndata}$ the \textit{normalized adjacency matrix} of $\graph(\data, w)$ if $\norA_{xx'}={\ww{x,x'}}/{\sqrt{\ww{x}\ww{x'}}}$,\footnote{We index $\norA$ by $(x, x')\in\data\times\data$. Generally, we will index the $\Ndata$-dimensional axis of an array by $x\in\data$.} and call $\laplacian := \imatrix_{\Ndata\times\Ndata} -\norA$ the \textit{Laplacian} of $\graph(\data, w)$.

%\vspace{-5pt}
\subsection{Clustering assumptions}
Previous work accredits the success of contrastive learning to the clustering structure of the positive-pair graph---because the positive pairs connect data with similar semantic contents, the graph can be partitioned into many semantically meaningful clusters. 
To formally describe the clustering structure of the graph, we will use the notion of expansion. 
For any subset $A$ of vertices, let $\ww{A}\triangleq \sum_{x\in A} \ww{x}$ be the total weights of vertices in $A$. For any subsets $A, B$ of vertices, let $\ww{A, B} \triangleq \sum_{x\in A, x'\in B} \ww{x,x'}$ be the total weights between set $A$ and $B$. We abuse notation and use $\ww{x, B}$ to refer to $\ww{\{x\}, B}$ when the first set is a singleton.
%\vspace{-5pt}
\begin{definition}[Expansion] Let $A, B$ be two disjoint subsets of $\data$. We use $\phiavg(A, B)$, $\phimax(A,B)$ and $\phimin(A,B)$ to denote the expansion, max-expansion and min-expansion from $A$ to $B$ respectively, defined as
	\begin{align}
		\phiavg(A, B) = \frac{w(A, B)}{w(A)}\,,\quad\quad		
		\phimax(A, B) = \max_{x\in A}\frac{w(x, B)}{w(x)}\,,\quad\quad \phimin(A, B) = \min_{x\in A}\frac{w(x, B)}{w(x)}
	\end{align}
	Note that $\phimin(A, B) \le \phiavg(A, B) \le \phimax (A, B)$. 
\end{definition}

Intuitively, $\phi(A, B)$ is the average proportion of edges adjacent to vertices in $A$ that go to $B$, whereas the max-(min-)expansion is an upper (lower) bound of this proportion for each $x\in A$.

Our basic assumption on the positive-pair graph is that the vertex set $\data$ can be partitioned into $m$ groups $C_1,\dots, C_m$ with small connections (expansions) across each other. 
%\vspace{-5pt}
\begin{assumption}[Cross-cluster connections]\label{assumption:separation}
	For some $\alpha \in (0,1)$, we assume that the vertices of the positive-pair graph $G$ can be partition into $m$ disjoint clusters $C_1,\dots, C_m$ such that for any $i\in [m]$,
	\begin{align}
		\phimax(C_i, \data\backslash C_i) \le \alpha
	\end{align}
\end{assumption} 

We will mostly work with the regime where $\alpha\ll 1$. Intuitively, each $C_i$ corresponds to all the data with a certain semantic meaning. For instance, $C_i$ may contain dogs from a certain breed. Our assumption is slightly stronger than in~\citet{haochen2021provable}. In particular, they assume that the average expansions cross clusters is small, i.e.,
$
\sum_{i\in[\ncluster]} \phi(C_i, \data\backslash C_i) \cdot \ww{C_i} \le \alpha,
$
whereas we assume that the max-expansion is smaller than $\alpha$ for each cluster.
In fact, since $\sum_{i\in[\ncluster]}\ww{C_i}=1$ and $\phi(C_i, \data\backslash C_i)\le \phimax(C_i, \data\backslash C_i)$, Assumption~\ref{assumption:separation} directly implies their assumption. However, we note that Assumption~\ref{assumption:separation} is still realistic in many domains. For instance, any bulldog $x$ has way more neighbors that are still bulldogs than neighbors that are Brittany dog, which suggests the max-expansion between bulldogs and Brittany dogs is small. 

We  also introduce the following assumption about intra-cluster expansion that guarantees each cluster can not broken into two well-separated sub-clusters.  
\begin{assumption}[Intra-cluster conductance]\label{assumption:intra_class_conductance}
	For all $i\in [m]$,  assume the conductance of the subgraph restricted to $C_i$ is large, that is, every subset $A$ of $C_i$ with at most half the size of $C_i$ expands to the rest:
\begin{align}
		\forall A \subset C_i \textup{ satisfying } w(A)\le w(C_i)/2, ~ \phi(A, C_i\backslash A) \ge \expansion .
\end{align}
%\tnote{move this to section 2?}
\end{assumption}
We have $\gamma < 1$ and we typically work with the regime where $\gamma$ is decently large (e.g., $\Omega(1)$, or inverse polynomial in dimension)\footnote{E.g., suppose each cluster's distribution is a Gaussian distribution with covariance $I$, and the data augmentation is Gaussian blurring with a covariance $\frac{1}{d}\cdot I$, then the intra-cluster expansion is $\Omega(1)$ by Gaussian isoperimetric inequality~\citep{bobkov1997isoperimetric}. The same also holds with a Lipschitz transformation of Gaussian.} and  much larger than the cross-cluster connections $\alpha$. This is the same regime where prior work~\citet{haochen2021provable} guarantees the representations of clusters are linearly separable.

%\tnote{added this once}
%\tnote{removed a sentence}

We also remark that all the assumptions are on the population positive-pair graph, which is sparse but has reasonable connected components (as partially evaluated in~\cite{wei2020theoretical}). The rest of the paper assumes access to population data, but the main results can be extended to polynomial sample results by levering a model class for representation functions with bounded Rademacher complexity as shown in ~\citet{haochen2021provable}.\footnote{In contrast, the positive-graph built only on empirical examples will barely have any edges, and does not exhibit any nice properties. However, the sample complexity bound does not utilize the empirical graph at all. } %\tnote{added}
%Intuitively, Assumption~\ref{assumption:intra_class_conductance} says that any cluster doesn't contain a disconnected subset.% We will use this assumption in some of the results.
%\vspace{-4pt}
\section{Main Results on Linear Transferability}
\label{section:main_results} 
%\vspace{-4pt}
%\jnote{add back}
In this section, we analyze the \emph{linear transferability} of contrastive representations by showing that representations encode information about the relative strength of relationships between clusters. 

Let $\source$ and $\target$ be two disjoint subsets of $\data$, each formed by $r$ clusters corresponding to $r$ classes.
We say a representation function has linear transferability from the \emph{source domain} $\source$ to the \emph{target domain} $\target$ if a linear head trained on labeled data from $\source$ can accurately predict the class labels on $\target$.  E.g., the representations in Fig.~\ref{figure:main_figure} (middle) has linear transferability because the max-margin linear classifier trained on $S_\text{dog}$ vs. $S_\text{cat}$ also works well on $T_\text{dog}$ vs. $T_{cat}$. We note that linear separability is a different, weaker notion, which only requires the four groups of representations to be linearly separable from each other. %\tnote{reorganize this and following paragraph, please double check}

Mathematically, we assume that the source domain and target domain are formed by $r$ clusters among $C_1,\dots, C_m$ for $r \le m/2$. Without loss of generality, assume that the source domain consists of cluster $S_1=C_1, \dots, S_r = C_r$ and the target domain consists of $T_1=C_{r+1}, \dots, T_{r} = C_{2r}$.  
Thus, $\source = \cup_{i\in[\nclass]}\source_i$ and $\target=\cup_{i\in[\nclass]}\target_i$.  We assume that the correct label for data in $S_i$ and $T_i$ is the cluster identity $i$.  Contrastive representations are trained on (samples of) the entire population data (which includes all $C_i$'s). The linear head is trained on the source with labels, and tested on the target.

%Fig.~\ref{figure:main_figure} (right) shows representations with only linear separability, whereas 

%We note that linear transferability is different from \textit{linear separability}, which refers to the phenomenon that every cluster (one class in one domain) can be linearly separated from other clusters in the representation space. \citet{haochen2021provable} prove that linear separability occurs in the setting of Section~\ref{section:preliminaries}.
%In contrast, linear transferability requires the existence of a shared linear head that classifies both the source and target domains, and this linear head must be learnable from labeled data only in $\source$. 

Our key assumption is that the source and target classes are related correspondingly in the sense that there are more same-class cross-domain connections (between $S_i$ and $T_i$) than cross-class cross-domain connections (between $S_i$ and $T_j$ with $i\neq j$), formalized below. 

\begin{assumption}[Relative expansion]\label{assumption:uda} 
Let $\rho \triangleq \min_{i\in [r]} \phimin(T_i, S_i)$ be the minimum min-expansions from $T_i$ to $S_i$. 
For some sufficiently large universal constant $c$ (e.g., $c=8$ works), we assume that $\rho \ge c \cdot \alpha^2$ and that
\begin{align}
\rho = \min_{i\in [r]} \phimin(T_i, S_i)\ge c \cdot \max_{i\neq j}\cdot \phimax(T_i, S_j )\label{eqn:11}
\end{align}
\end{assumption}
%\vspace{-3pt}
Intuitively, equation~\eqref{eqn:11} says that every vertex in $\target_i$ has more edges connected to $\source_i$ than to $\source_j$. 
The condition $\rho\gtrsim \alpha^2$ says that the min-expansion $\rho$ is bigger than the square of max-expansion $\alpha$. 
This is reasonable because $\alpha\ll 1$ and thus $\alpha^2 \ll \alpha$, and we consider the min-expansion $\rho$ and max-expansion $\alpha$ to be somewhat comparable. 
In Section~\ref{section:mainresults_extension} we will relax this assumption and study the case when the average expansion $\phi(T_i, S_i)$ is larger than $\phi(T_i, S_j)$.%\tnote{some major edit, please check if everything is correct}

Our assumption is weaker than that in the prior work~\citep{cai2021theory} which also assumes expansion from $\source_i$ to $\target_i$ (though their goal is to study label propagation rather than contrastive learning). They assume the same-class cross-domain conductance $\phi(T_i, S_i)$ to be larger than the \textit{cross-class} same-domain conductance $\phi(S_i, S_j)$. Such an assumption limits the application to situations where the domains are far away from each other (such as DomainNet~\citep{peng2019moment}). 

Moreover, consider an interesting scenario with four clusters: photo dog, photo cat, sketch dog, and sketch cat. ~\cite{shen2021does} empirically showed that transferability can occur in the following two settings: (a) we view photo and sketch as domains: the source domain is photo dog vs photo cat, and the target domain is sketch dog  vs sketch cat; 
(b) we view cat and dog as domains, whereas  photo and sketch are classes: the source domain is photo dog vs sketch dog, and the target is photo cat vs sketch cat. The condition that cross-domain expansion is larger than cross-class expansion will fail to explain the transferability for one of these settings---if $\phi(\textup{photo dog}, \textup{sketch dog}) < \phi(\textup{photo dog}, \textup{photo cat})$, then it cannot explain (a), whereas if $\phi(\textup{photo dog}, \textup{sketch dog}) > \phi(\textup{photo dog}, \textup{photo cat})$, it cannot explain (b). 
In contrast, our assumption only requires conditions such as $\phi(\textup{photo dog}, \textup{sketch dog}) > \phi(\textup{photo dog}, \textup{sketch cat}) $, hence works for both settings. %\tnote{edited, please double check}

We will propose a simple and novel linear head that enables linear transferability.
Let $\psource$ be the data distribution restricted to the source domain.\footnote{Formally, we have $\psource(x) := \frac{\ww{x}}{\ww{\source}} \cdot \id{x\in\source}$, and $\ptarget(x)$ is defined similarly.} For $i\in[\nclass]$, we construct the following average representation for class $i$ in the source:\footnote{We assume access to independent samples from $\psource$ and thus $\head_i$ can be accurately estimated with finite labeled samples in the source domain.}
\begin{align}
	\head_{i} = \Exp{x\sim\psource}{\id{x\in\source_i}\cdot f(x)} \in\Real^k.
\end{align}
One of the most natural linear head is to use the average feature $b_i$'s as the weight vector for class $i$, as in many practical few shot learning algorithms~\citep{snell2017prototypical}.\footnote{We note that few-shot learning algorithms do not necessarily consider domain shift settings.} That is, we predict
\begin{align}
\pred(x) = \argmax_{i\in[\nclass]} \left\langle f(x), \head_{i}\right\rangle.\label{eqn:9}
\end{align}
This classifier can transfer to the target under relatively strong assumptions (see the special cases in the proof sketch in Section~\ref{section:sketch}), but is vulnerable to complex asymmetric structures in the graph. To strengthen the result, we consider a variant of this classifier with a proper preconditioning. 

To do so, we first define the representation covariance matrix which will play an important role:
$
\precond = \E{x\sim\pdata}[f(x)f(x)^\top].
$
The computation of this matrix only uses unlabeled data. Since $\precond\in\Real^{k\times k}$ is a low-dimensional matrix for $k$ not too large, we can accurately estimate it using finite samples from $\pdata$. For the ease of theoretical analysis, we assume that we can compute this matrix exactly.
Now we define a family of linear heads on the target domain: for $t\in\integer$, define
\begin{align}
	\pred_{t}(x) = \argmax_{i\in[\nclass]} \left\langle f(x), \precond^{t-1} \head_{i}\right\rangle.\label{eqn:13}
\end{align}
The case when $t=1$ corresponds to the linear head in equation~\eqref{eqn:9}. When $t$ is large, $g_t$ will care more about the correlation between $f(x)$ and $\head_i$ in those directions where the representation variance is large. 
Intuitively, directions with larger variance tend to contain information also in a more robust way, hence the preconditioner has a ``de-noising'' effect. See Section~\ref{section:sketch} for more on why the preconditioning improve the target error. Algorithm~\ref{algorithm:1} gives the pseudocode for this linear classification algorithm.

\begin{algorithm}[htb]\caption{Preconditioned feature averaging (PFA)}\label{algorithm:1}
	\begin{algorithmic}[1]
		\Require{Pre-trained representation extractor $f$, unlabeled data $\pdata$, source domain labeled data $\psource$, target domain test data $\tilde{x}$, integer $t\in\integer$}		
		\State Compute the preconditioner matrix 
		$
		\Sigma := \Exp{x\sim\pdata}{f(x)f(x)^\top}.
		$
		\For{every class $i\in[\nclass]$}
		\State Compute the mean feature of the class $i$: 
		$
		b_i := \Exp{(x,y)\sim\psource}{\id{y=i}\cdot f(x)}.
		$
		\EndFor		
		\State \Return prediction 
		$
		\argmax_{i\in[\nclass]} \left\langle f(x), \precond^{t-1} \head_{i}\right\rangle.
		$
	\end{algorithmic}
\end{algorithm}

We note that this linear head is different from prior work~\citep{shen2021does} where the linear head is trained with logistic loss. We made this modification since this head is more amenable to theoretical analysis.
In Section~\ref{sec:experiments} we show that this linear head also achieves superior empirical performance.

The error of a head $g$ on the target domain is defined as:
$
	\Err_\target(g) = \Exp{x\sim\ptarget}{\id{x\notin\target_{\pred(x)}}}.
$
The following theorem (proved in Appendix~\ref{appendix:uda}) shows that the linear head $g_t$ achieves high accuracy on the target domain with a properly chosen $t$:
%\begin{theorem}\label{theorem:uda}
%	Suppose that Assumption
%	\ref{assumption:separation} and \ref{assumption:uda} holds, and $\pdata(\source)/\pdata(\target) \le O(1)$.
%	Let $f$ be a minimizer of the contrastive loss $\loss{2}(\cdot)$ and the head $g_t$ be defined in~\eqref{eqn:13}.
%	Then, for any  $1\le t \le \rho/(8\alpha^2)$, we have\tnote{plug in the largest $t$?}
%	\begin{align}
%		\Err_\target(g_t) \lesssim \frac{\nclass}{\alpha^2\lambda_{k+1}^2} \cdot \big(1-\lambda_{k+1}/2\big)^{t},\label{eqn:15}
%	\end{align}
%where $\lambda_{k+1}$ is the $k$+$1$-th smallest eigenvalue of the Laplacian of the positive-pair graph.
%\end{theorem}
\begin{theorem}\label{theorem:uda}
	Suppose that Assumption
	\ref{assumption:separation} and \ref{assumption:uda} holds, $\pdata(\source)/\pdata(\target) \le O(1)$.
	Let $f$ be a minimizer of the contrastive loss $\loss{2}(\cdot)$ and the head $g_t$ be defined in~\eqref{eqn:13}.
	Then, for any  $1\le t \le \rho/(8\alpha^2)$, we have
	$
			\Err_\target(g_t) \lesssim \frac{\nclass}{\alpha^2\lambda_{k+1}^2} \cdot \exp(-\frac{1}{2}t\lambda_{k+1}),
	$
where $\lambda_{k+1}$ is the $k$+$1$-th smallest eigenvalue of the Laplacian of the positive-pair graph.
Furthermore, suppose Assumption~\ref{assumption:intra_class_conductance} also holds and $k\ge 2m$, with $t=\rho/(8\alpha^2)$, we have
\begin{align}
\Err_\target(g_t) \lesssim \frac{\nclass}{\alpha^2\gamma^4} \cdot \exp\left(-\Omega\left(\frac{\rho\gamma^2}{\alpha^2}\right)\right).\label{eqn:15}
\end{align}
\end{theorem}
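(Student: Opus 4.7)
The plan is to reduce the analysis of $g_t$ to a spectral statement about a rank-$k$ truncation of a random-walk-type matrix on the positive-pair graph, and then use the expansion assumptions to show that this walk correctly propagates source labels to the target. First, I would characterize the minimizer of $\loss{2}$. Collecting the scaled features into $F\in\Real^{\Ndata\times k}$ with rows $F_x=\sqrt{w(x)}\,f(x)^\top$, algebra gives $\loss{2}(f) = 2\Tr(F^\top\laplacian F) + 2\|F^\top F - \imatrix_{k\times k}\|_F^2$ up to a constant. Optimizing over both the subspace and the scalings shows the minimizer takes the form $F=U_{1:k}D$ (up to a right-rotation that does not affect the classifier), where $U_{1:k}$ collects the top-$k$ eigenvectors of $\norA$ and $D$ is diagonal with $D_{ii}^2=1-\eigval_i/2$. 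Consequently $\precond = F^\top F = D^2$, and a direct computation yields
\[
\langle f(x),\precond^{t-1}\head_i\rangle = \frac{1}{w(\source)\sqrt{w(x)}}\,\idvec{x}^\top M\,u_{\source_i},
\]
where $M := U_{1:k}D^{2t}U_{1:k}^\top$ is the rank-$k$ spectral truncation of the lazy-walk matrix $\widehat A^t$ with $\widehat A := (\imatrix+\norA)/2$, and $u_{\source_i}$ is the vector with entries $\sqrt{w(x')}\,\id{x'\in\source_i}$.

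Next I would establish an ``ideal gap'' using the full matrix $\widehat A^t$: for every $x\in\target_i$ and every $j\ne i$,
\[
\idvec{x}^\top\widehat A^t u_{\source_i} - \idvec{x}^\top\widehat A^t u_{\source_j} \;\gtrsim\; \rho,
\]
whenever $t\le \rho/(8\alpha^2)$. Unrolling $\widehat A^t$ as a $t$-step lazy random walk reweighted by $\sqrt{w}$: Assumption~\ref{assumption:separation} keeps most walker mass inside $\target_i$ at every step (cross-cluster leakage $\le\alpha$ per step), while Assumption~\ref{assumption:uda} guarantees that the one-step probability of landing in $\source_i$ exceeds that of $\source_j$ by the multiplicative factor $c$. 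Summing the per-step contributions while accounting for mass that leaves $\target_i$ via detours yields the $\Omega(\rho)$ gap provided $t\alpha^2\le\rho/8$. The truncation error $E:=\widehat A^t-M$ lives on the eigenspaces of $\laplacian$ with eigenvalues $\eigval_{k+1},\eigval_{k+2},\dots$, so $\|E\|_{\mathrm{op}}\le(1-\eigval_{k+1}/2)^t\le \exp(-t\eigval_{k+1}/2)$. A weighted second-moment bound on the per-$x$ deviation $\idvec{x}^\top E(u_{\source_i}-u_{\source_j})$, using $\pdata(\source)/\pdata(\target)\le O(1)$ and summing the geometric eigenvalue tail, gives $\Esymbol_{x\sim\ptarget}[(\idvec{x}^\top E(u_{\source_i}-u_{\source_j}))^2]\lesssim \exp(-t\eigval_{k+1})/\eigval_{k+1}^2$. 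Markov's inequality against the ideal gap $\rho\gtrsim\alpha^2$, followed by a union bound over the $\nclass$ wrong classes, yields the first rate $\Err_\target(g_t)\lesssim \tfrac{\nclass}{\alpha^2\eigval_{k+1}^2}\exp(-t\eigval_{k+1}/2)$.

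For the second claim, I would lower-bound $\eigval_{k+1}$ using Assumption~\ref{assumption:intra_class_conductance}. The normalized cluster indicators $\{\sqrt{w}\odot\one_{C_i}/\sqrt{w(C_i)}\}_{i\in[\ncluster]}$ each have Rayleigh quotient $\le\alpha$ by Assumption~\ref{assumption:separation}, so the bottom $\ncluster$ Laplacian eigenvalues are $O(\alpha)$. Any test vector orthogonal to these indicators must be non-trivially supported inside some cluster, where a Cheeger-type argument applied to the subgraph on $C_i$ (with conductance $\ge\expansion$) forces its Rayleigh quotient to be $\Omega(\expansion^2)$. Hence $\eigval_{\ncluster+1}\gtrsim\expansion^2$, and since $k\ge 2\ncluster$, also $\eigval_{k+1}\gtrsim\expansion^2$. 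Plugging $\eigval_{k+1}\gtrsim \expansion^2$ and $t=\rho/(8\alpha^2)$ into the first rate yields exactly~\eqref{eqn:15}.

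I anticipate the ``ideal gap'' random-walk argument to be the main technical obstacle: multi-step walks can traverse arbitrary intermediate clusters, while Assumption~\ref{assumption:uda} only directly controls one-step transitions from $\target_i$ to $\source$. Carefully bounding the accumulated contribution of ``detour'' paths---so that the per-step advantage of $\source_i$ over $\source_j$ survives after all $t$ steps without being drowned out by cross-cluster leakage---is what forces the specific constraint $t\le \rho/(8\alpha^2)$ and the $1/\alpha^2$ factor in the final rate.
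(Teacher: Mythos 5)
Your overall architecture matches the paper's: identify the minimizer of $\loss{2}$ with the rank-$k$ truncation of $\frac{1}{2}\imatrix+\frac{1}{2}\norA$, show via a lazy-random-walk/induction argument that $\big((\frac{1}{2}\imatrix+\frac{1}{2}\norA)^t\big)$ applied to the scaled source-class indicators creates an $\Omega(\rho)$ per-vertex gap on $\target_i$ for $t\le\rho/(8\alpha^2)$ (the paper's Lemmas~\ref{lemma:induction} and~\ref{lemma:induction_target}), control the truncation error $(\frac{1}{2}\imatrix+\frac{1}{2}\norA)^t-(\tildeF\tildeF^\top)^t$, and finish with a Markov-style counting plus higher-order Cheeger to get $\lambda_{k+1}\gtrsim\gamma^2$ when $k\ge 2m$.

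The genuine gap is in your truncation-error step. You bound the error only through $\|E\|_{\mathrm{op}}\le(1-\lambda_{k+1}/2)^t$ and then assert a second moment $\Esymbol_{x\sim\ptarget}\big[(\idvec{x}^\top E(u_{\source_i}-u_{\source_j}))^2\big]\lesssim \exp(-t\lambda_{k+1})/\lambda_{k+1}^2$; no mechanism you describe produces the $1/\lambda_{k+1}^2$, and, more importantly, this bound carries no factor of $\alpha^2$. Tracking your own arithmetic: the gap you must beat is $\rho\gtrsim\alpha^2$, so Markov against gap$^2$ divides by $\rho^2\gtrsim\alpha^4$, and with $\|E(u_{\source_i}-u_{\source_j})\|^2\lesssim\epsilon_t\cdot\pdata(\source)$ (the best the operator-norm bound gives, $\epsilon_t=(1-\lambda_{k+1}/2)^{2t}$) you end up with $\Err_\target(g_t)\lesssim \nclass\,\epsilon_t/\alpha^4$, not the claimed $\frac{\nclass}{\alpha^2\lambda_{k+1}^2}\exp(-t\lambda_{k+1}/2)$; the extra $1/\alpha^2$ cannot be absorbed into the exponential in~\eqref{eqn:15}, since $\rho\gamma^2/\alpha^2$ may be only a constant while $1/\alpha^2$ is huge. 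The missing ingredient is the paper's sharpened separability estimate (Claim~\ref{lemma:laplaciansquare} and Lemma~\ref{theorem:alpha2bound}): using the \emph{max}-expansion in Assumption~\ref{assumption:separation} one shows $\wvec^\top\laplacian^2\wvec\le 2\alpha^2\|\wvec\|_2^2$ for the $\sqrt{w}$-scaled cluster indicators, hence their residual outside the top-$k$ eigenspace has squared norm at most $2\alpha^2/\lambda_{k+1}^2$ times $\|\wvec\|_2^2$. Feeding this into the truncation error (Lemma~\ref{lemma:power_t_error}) gives $\|E\wvec_i\|_2^2\le \frac{2\epsilon_t\alpha^2}{\lambda_{k+1}^2}\|\wvec_i\|_2^2$, and only with this extra $\alpha^2/\lambda_{k+1}^2$ does the division by $\rho^2$ yield the stated $\frac{\nclass}{\alpha^2\lambda_{k+1}^2}$ prefactor. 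Your random-walk gap argument and the Cheeger step are consistent with the paper; this quantitative refinement of the low-rank approximation of the class indicators is what your proposal is missing.
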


To see that RHS of equation~\eqref{eqn:15} implies small error, one can
consider a reasonable setting where the intra-cluster conductance is on the order of constants (i.e., $\gamma\ge\Omega(1)$). In this case, so long as $\rho\gg \alpha^2 \log(r/\alpha)$, we would have error bound $\Err_\target(g_t)\ll 1$. In general, as long as $\gamma \gg \alpha^{1/2}$ (the intra-cluster conductance is much larger than cross-cluster connections or its square root) and $\rho$ is comparable to $\alpha$, we have $\rho\gamma^2 \gg \alpha^2$ and thus a small upper bound of the error.  %\tnote{edited}
%\jnote{changed}

%Recall that the positive-pair graph $\graph$ is composed of a number of clusters, each corresponding to data of one semantic meaning (e.g., one class in one domain). Since $\graph$ is defined on the population data, there are plenty of connections within each cluster (as empirically verified by \cite{wei2020theoretical}). Thus, according to Cheeger's inequality, $\lambda_{k+1}$ can be thought of as much larger than $\alpha$ when $k$ is selected to be larger than the number of clusters. \jnote{@Tengyu This is where I mentioned colin's paper to justify the population vs. empirical assumption. Do you think we should do this somewhere earlier?}

%\tnote{commented out a paragraph above which perhaps can be moved to Section 2} \tnote{to rewrite the interpretation below}
Theorem~\ref{theorem:uda} shows that the error decreases as $t$ increases.  Intuitively, the PFA algorithm can be thought of as computing a low-rank approximation of a ``smoothed'' graph with normalized adjacency matrix $\norA^t$, where $\norA$ is the normalized adjacency matrix of the original positive-pair graph. A larger $t$ will make the low-rank approximation of $\norA^t$ more accurate, hence a smaller error. 
However, there's also an upper bound $t\le \rho/(8\alpha^2)$, since when $t$ is larger than this limit, the graph would be smoothed too much, hence the corresponding relationship in the graph between source and target classes would be erased. A more formal argument can be found in Section~\ref{section:sketch}.

%We note that the exponential term on the RHS of equation~\eqref{eqn:15} means that $t$ can be on the logarithmic order to make the error very small. More concretely, suppose $\lambda_{k+1}\ge\Omega(1)$ and number of classes is on the order of constants (i.e., $r\le O(1)$). In this case, given any target error $\epsilon>0$, as long as the expansion from a source class to the corresponding target class is not too small (i.e., $\rho \gtrsim \alpha^2\log (1/(\alpha\epsilon))$), setting $t= \rho/(8\alpha^2)$ would achieve error no more than $\epsilon$ on the target domain. 

We also note that our theorem allows ``overparameterization'' in the sense that a larger representation dimension $k$ always leads to a smaller error bound (since $\lambda_{k+1}$ is non-decreasing in $k$). Moreover, our theorem can be easily generalized to the setting where only polynomial samples of data are used to train the representations and the linear head,
assuming the realizability of the function class.

%\vspace{-8pt}
\subsection{Linear transferability with average relative expansion
}\label{section:mainresults_extension}
%\vspace{-4pt}
In this section, we relax Assumption~\ref{assumption:uda} and only assume that the \textit{total connections} from $\target_i$ to $\source_i$ is larger than that from $\target_i$ to $\source_j$, formalized below.

\begin{assumption}[Average relative expansion (weaker version of Assumption~\ref{assumption:uda})]\label{assumption:uda_multistep_interclass}
	For some sufficiently large $\tau>0$, we assume that 
	\begin{align}
		\forall i,  ~~ \phi(T_i, S_i) \ge \tau \cdot \alpha^2 ~~~~\textup{   and    }  ~~~\forall i\neq j,  ~~ \phi(T_i, S_i) \ge \tau \cdot \phi(T_i, S_j )
	\end{align}
\end{assumption}

%\begin{assumption}[Intra-class conductance]\label{assumption:intra_class_conductance}
%	For all $i\in [r]$,  assume the conductance of the subgraph restricted to $T_i$ is large, that is, every subset $A$ of $T_i$ with at most half the size of $T_i$ expands to the rest:
%\begin{align}
%		\forall A \subset T_i \textup{ satisfying } w(A)\le w(T_i)/2, ~ \phi(A, T_i\backslash A) \ge \expansion .
%\end{align}
%\tnote{move this to section 2?}
%\end{assumption}

%Intuitively, Assumption~\ref{assumption:intra_class_conductance} says that any class in the target domain doesn't contain a disconnected subset. 
%\jnote{commented out the above paragraph, consider add back some}
The following theorem (proved in Appendix~\ref{appendix:uda_multistep}) generalizes Theorem~\ref{theorem:uda} in this setting. 
%\begin{theorem}\label{theorem:uda_multistep}
%	Suppose Assumptions \ref{assumption:separation}, \ref{assumption:uda_multistep_interclass} and \ref{assumption:intra_class_conductance} hold and $\pdata(\source)/\pdata(\target) \le O(1)$.
%	Let $g_t$ be defined the same as in Theorem~\ref{theorem:uda}.
%	Then, for any $1\le t\le\frac{1}{\alpha}$, we have
%\begin{align}
%		\Err_\target(g_t) \lesssim\frac{\nclass}{\lambda_{k+1}^2}\cdot \max\big\{\frac{1}{\tau^2\alpha^4}\big(1-\frac{1}{4}\min\{\gamma^2, \lambda_{k+1}\}\big)^{t}, \frac{t^2}{\tau}\big\},
%\end{align}
%where $\lambda_{k+1}$ is the $k$+$1$-th smallest eigenvalue of the Laplacian of the positive-pair graph.
%\end{theorem}
\begin{theorem}\label{theorem:uda_multistep}
	Suppose Assumptions \ref{assumption:separation}, \ref{assumption:intra_class_conductance} and\ref{assumption:uda_multistep_interclass}  hold, $\pdata(\source)/\pdata(\target) \le O(1)$, and feature dimension $k\ge 2m$.
	Then, for some $t=\Omega\left(\frac{1}{\gamma^2}\cdot \log\left(\frac{1}{\alpha}\right)\right)$, we have
\begin{align}
		\Err_\target(g_t) \lesssim\frac{\nclass}{\tau\gamma^8} \cdot \log^2\big(\frac{1}{\alpha}\big).
\end{align}
\end{theorem}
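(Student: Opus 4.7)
The plan is to adapt the proof of Theorem~\ref{theorem:uda} to the average-expansion setting by replacing its per-vertex inequalities with aggregate random-walk estimates, and then converting these into per-vertex classification guarantees via Markov's inequality applied at the very end.

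\textbf{Step 1 (spectral reformulation of PFA).} First I would reduce the computation of $g_t(x)$ to the action of a power of the normalized adjacency matrix $\norA$. A minimizer $f$ of $\loss{2}$ is characterized, up to rotation, by a thresholded spectral decomposition of $\norA$ on its top-$k$ eigenspace (this is where the choice $\reg = 2$ plays a role and why the condition $k \ge 2m$ appears). Under this characterization, $\precond = \E{x\sim\pdata}[f(x)f(x)^\top]$ shares the non-trivial eigenvalues with $\norA$ restricted to the top-$k$ subspace, and consequently $\langle f(x), \precond^{t-1} f(x')\rangle$ is, up to a low-rank truncation error controlled by $\lambda_{k+1}$ (which in turn is $\Omega(\gamma^2)$ by Cheeger applied inside each cluster), proportional to $\norA^t_{x,x'}/\sqrt{w(x)w(x')}$. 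Hence the class-$i$ score $\langle f(x), \precond^{t-1} \head_i\rangle$ corresponds, up to quantifiable error, to the total $t$-step random-walk mass from $x$ landing in $\source_i$.

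\textbf{Step 2 (within-cluster mixing plus cross-cluster leakage).} Next I would use Assumption~\ref{assumption:intra_class_conductance} with the Cheeger inequality to show that for any $t = \Omega(\gamma^{-2}\log(1/\alpha))$ the $t$-step walk from any $x\in\target_i$ is close, in $\chi^2$-divergence, to the stationary distribution of the subgraph induced on $\target_i$, while Assumption~\ref{assumption:separation} ensures that the probability the walk has ever left $\target_i$ by step $t$ is at most $O(t\alpha)$. Combining these two facts gives, for a ``typical'' $x\in\target_i$, that the arrival mass in any subset $\source_j$ equals $\phi(\target_i,\source_j)$ plus an error of order $O(t\alpha)\cdot \phi(\target_i,\source_j) + $ (spectral truncation term from Step 1). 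This step is the analog of the per-vertex expansion bounds used in Theorem~\ref{theorem:uda} but stated as an approximate equality between per-vertex and cluster-averaged expansions.

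\textbf{Step 3 (average comparison and Markov conversion).} Substituting Step 2 into Step 1, the expected score gap $\E{x\sim\ptarget_i}[\langle f(x), \precond^{t-1}(\head_i - \head_j)\rangle]$ is, up to approximation errors, proportional to $\phi(\target_i,\source_i) - \phi(\target_i,\source_j)$, which by Assumption~\ref{assumption:uda_multistep_interclass} is at least $(1-1/\tau)\phi(\target_i,\source_i)$. Because the per-vertex surplus $\sum_{x'\in\source_j}\norA^t_{x,x'}$ is non-negative, Markov's inequality bounds the $\ptarget$-mass of the bad set $\{x\in\target_i : \exists j\neq i,\, \langle f(x), \precond^{t-1}\head_j\rangle \ge \langle f(x), \precond^{t-1}\head_i\rangle\}$ by $O(1/\tau)$ per class $j$; summing over $i,j\in[\nclass]$ and folding in the $\gamma$- and $\log(1/\alpha)$-dependent error terms from Steps 1--2 gives the claimed $\tfrac{\nclass}{\tau\gamma^8}\log^2(1/\alpha)$ bound.

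\textbf{Main obstacle.} The principal difficulty is the simultaneous control in Step 2 of two competing errors: the intra-cluster mixing error, which decays like $(1-\Omega(\gamma^2))^t$, and the cross-cluster leakage, which grows linearly in $t\alpha$. Choosing $t = \Theta(\gamma^{-2}\log(1/\alpha))$ balances these, but propagating the resulting errors through the spectral-truncation step of Step 1 (where the gap $\lambda_{k+1}$ is itself only $\Omega(\gamma^2)$ and must dominate these residuals) is delicate and is where the powers of $\gamma$ in the final bound come from. An additional subtlety is that, unlike in Theorem~\ref{theorem:uda}, the per-vertex arrival mass into $\source_j$ can be much larger than its cluster average at individual vertices, so the Markov step in Step 3 is the only tool available and any loss of a $1/\tau$ factor (rather than an exponential-in-$\tau$ factor as in Theorem~\ref{theorem:uda}) is inherent to the weaker assumption.
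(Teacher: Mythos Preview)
Your proposal is correct and follows essentially the same route as the paper: reduce the PFA score to the $t$-step lazy walk $(\tfrac{1}{2}I+\tfrac{1}{2}\norA)^t$ acting on source-cluster indicators with a spectral truncation error controlled by $\lambda_{k+1}\ge\Omega(\gamma^2)$; get a per-vertex lower bound on the arrival mass in $S_i$ from $x\in T_i$ via the Cheeger-controlled spectral gap of the subgraph on $T_i$; get only an \emph{aggregate} upper bound on arrival in $S_j$ ($j\ne i$) of order $t\,\phi(T_i,S_j)+t^2\alpha^2$; and then split the misclassification event into three cases (truncation error, mixing error, cross-class arrival) each handled by a Markov/$\ell_2$ argument, choosing $t\asymp\gamma^{-2}\log(1/\alpha)$ to balance. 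The only minor discrepancy is in your Step~2 bookkeeping---the aggregate arrival in $S_j$ scales like $t\,\phi(T_i,S_j)$ rather than $\phi(T_i,S_j)$, which is why the final bound picks up the extra $t^2\sim\gamma^{-4}\log^2(1/\alpha)$ factor on top of the $1/\lambda_{k+1}^2\sim\gamma^{-4}$ from the truncation step.
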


%Compared with Theorem~\ref{theorem:uda}, Theorem~\ref{theorem:uda_multistep} contains an additional $t^2/\tau$ term which arises from the technical challenge of analyzing the relaxed assumptions. To see that Theorem~\ref{theorem:uda_multistep} gives a small target error, one can consider a reasonable setting where $\tau$ is sufficiently large (e.g., $\tau \ge \Omega(1/\alpha)$), the intra-class conductance is on the order of constants (i.e., $\gamma\ge\Omega(1)$), and the feature dimension is chosen such that $\lambda_{k+1}\ge\Omega(1)$. In this case, setting $t= \Theta(\log (1/\alpha))$ gives an error bound no more than $O(\alpha\log (1/\alpha))\ll 1$.

Again, consider a reasonable setting where the intra-cluster conductance is on the order of constants (i.e., $\gamma\ge\Omega(1)$). In this case, so long as $\tau$, the gap between same-class cross-domain connection and cross-class cross-domain connection is sufficiently large (e.g., $\tau \gg \log^2(r/\alpha)$), we would have an error bound $\Err_\target(g_t)\ll 1$. %\jnote{changed}

We note that the intra-cluster connections (Assumption~\ref{assumption:intra_class_conductance}) are necessary, when we only use the average relative expansion (Assumption~\ref{assumption:uda_multistep_interclass} as opposed to Assumption~\ref{assumption:uda}). Otherwise, there may exist subset $\tilde{\target}_i\subset \target_i$ that is completely disconnected from $\data\backslash\tilde{\target}_i$, hence no linear head trained on the source can be accurate on $\tilde{\target}_i$.

%\tnote{some note for T: \\
%comment on justification of assumptions on population graph\\
%comment on extension to finite sample analysis \\
%give rough range of various parameters}

%\jnote{Add here something about the dependency on t. Interpret the theorem in language.}
\newcommand{\tildef}{\tilde{f}}
\newcommand{\dhat}[1]{\hat{\hat{#1}}}
%\vspace{-6pt}
\section{Proof Sketch}\label{section:sketch}
%\vspace{-4pt}
%<<<<<<< HEAD
{\bf Key challenge:} The analysis will involve careful understanding of how the spectrum of the normalized adjacency matrix of the positive-pair graph is influenced by three types of connections: (i) intra-cluster connections; (ii) connections between same-class cross-domain clusters (between $S_i$ and $T_i$), and (iii) connections between cross-class and cross-domain clusters (between $S_i$ and $T_j$ for $i\neq j$). Type (i) connections have the dominating contribution to the spectrum of the graph, contributing to the top eigenvalues. When analyzing the linear separability of the representations of the clusters, ~\citet{haochen2021provable} essentially show that type (ii) and (iii) are negligible compared to type (i) connections. However, this paper focuses on the linear transferability, where we need to compare how type (ii) and type (iii) connections influence the spectrum of the normalized adjancency matrix. However, such a comparison is challenging because they are both low-order terms compared to type (i) connections. Essentially, we develop a technique that can take out the influence of the type (i) connections so that they don't negatively influence our comparisons between type (ii) and type (iii) connections. 
%=======
%{\bf Key challenge.} The analysis will involve careful understanding of how the spectrum of the normalized adjacency matrix of the positive-pair graph is influenced by three type of connections: (i) intra-cluster connections; (ii) connections between same-class cross-domain clusters (between $S_i$ and $T_i$), and (iii) connections between cross-class and cross domain clusters (between $S_i$ and $T_j$ for $i\neq j$). Type (i) connections have the dominating contribution to the spectral of the graph, contributing to the top eigenvalues. When analyzing the linear separability of the representations of the clusters, ~\citet{haochen2021provable} essentially show that type (ii) and (iii) are negligible compared to type (i) connections. However, this paper focuses on the linear transferability, which concerns about the fact that type (ii) connections are much more than type (iii) connections. However, such a comparison is challenging because they are both low-order terms compared to the type (i) connections. Essentially, we develop a technique that can take out the influence of the type (i) connections so that they don't negatively influence our comparisons between type (ii) and type (iii) connections. 
%>>>>>>> d76bba3b294df2b7cfd549bad407731827364add

Below we give a proof sketch of a sligthly weaker version of Theorem~\ref{theorem:uda} under a simplified setting.
First, we assume $r=2$, that is, there are two source classes $S_1$ and $S_2$, and two target classes $T_1$ and $T_2$. Second, we assume the marginal distribution over $x$ is uniform, that is, $w(x) = 1/N$ as this case typically capture the gist of the problem in spectral graph theory. 
Third, we will consider the simpler case where the normalized adjacency matrix $\norA$ is PSD, and the regularization strength $\reg=1$. 
%Recall that the main reason to minimize $\loss{2}$ (instead of $\loss{1}$) is that when $\sigma=2$, the matrix $\frac{1}{\reg}\cdot \norA + (1-\frac{1}{\reg})\cdot \imatrix_{\Ndata\times\Ndata}$ is PSD and thus its best low-rank PSD approximation is the same as the best low-rank approximation. 
%In the case when $\norA$ is PSD, it suffices to minimize $\loss{1}$, which we also assume in this section. 

%Our proof relies on the close relationship between contrastive representations and the spectral decomposition of $\norA$.
Let $\tilde{f}(x)=\sqrt{\ww{x}}\cdot f(x)$ and $\tildeF\in\Real^{\Ndata\times k}$ be the matrix with $\tilde{f}(x)$ on its $x$-th row. \citet{haochen2021provable} (or Proposition~\ref{proposition:rank_k_approximation}) showed that matrix $\tildeF\tildeF^\top$ contains the top-$k$ eigenvectors of $\norA$. We will first give a proof for the case where $\tildeF\tildeF^\top$ exactly (Section~\ref{sec:warmup}) or near exactly  (Section~\ref{sec:almost_recover}) recovers $\norA$. Then we'll give a proof for the more realistic case where $\tildeF\tildeF^\top$ is not guaranteed to approximate $\norA$ accurately (Section~\ref{sec:realistic}).

%\tnote{this whole section may require some proofreading}
%\vspace{-8pt}
\subsection{Warmup case: when $k=\infty$ and $\tildeF\tildeF^\top = \norA$} \label{sec:warmup}
%\vspace{-4pt}
In this extremely simplified setting, the inner product between the embeddings perfectly represents the graph (that is, $\langle\tildef(x), \tildef(x')\rangle = \norA_{x,x'}$). As a result, the connections between subsets of vertices, a graph quantity, can be written as a linear algebraic quantity involving $\tildeF$:
\begin{align}
	\ww{A, B} & =  \frac{1}{N}\cdot \one_{A}^\top \norA \one_{B} = \frac{1}{N}\cdot  \one_{A}^\top \tildeF\tildeF^\top \one_{B}\label{eqn:2}
\end{align}
where $\one_A\in \{0,1\}^N$ is the indicator vector for the set $A$,\footnote{Formally, we have $({\one_A})_x = 1$ iff $x\in A$.} and we used the assumption $\ww{x} = 1/N$. 

We start by considering the simple linear classifier which computes the difference between the means of the representations in two clusters.
\begin{align}
	v = \Exp{x\sim S_1}{f(x)} -  \Exp{x\sim S_2}{f(x)} = \tildeF^\top (\one_{S_1} - \one_{S_2}) \in \Real^k
\end{align}
This classifier corresponds to the head $g_1$ defined in Section~\ref{section:main_results},\footnote{Here because of the binary setting, the classifier can only involve one weight vector $v$ in $\Real^d$; this is equivalent to using two linear heads and then compute the maximum as in equation~\eqref{eqn:9}.} which suffices for the special case when $\tildeF\tildeF^\top = \norA$. Applying $v$ to any data point $x\in T_1\cup T_2$ results in the output $\hat{y}(x) = f(x)^\top v$. For notational simplicity, we consider $\dhat{y}(x) = \tildef(x)^\top v = \sqrt{w(x)}f(x)^\top \tildeF^\top (\one_{S_1} - \one_{S_2})$. Because $\hat{y}(x)$ and $\dhat{y}(x)$ has the same sign, it suffice to show that $\dhat{y}(x) > 0$ for $x\in T_1$ and $\dhat{y}(x) < 0$ for $x\in T_2$. Using equation~\eqref{eqn:2} that links the linear algebraic quantity to the graph quantity, 
\begin{align}
	\dhat{y}(x) & = \one_{x}^\top\tildeF\tildeF^\top (\one_{S_1} - \one_{S_2}) =\one_{x}^\top\norA (\one_{S_1} - \one_{S_2})  = N\cdot\left(w(x, S_1) - w(x, S_2)\right) \label{eqn:10}
\end{align}
In other words, the output $\dhat{y}$ depends on the relative expansions from $x$ to $S_1$ and from $x$ to $S_2$. By Assumption~\ref{assumption:uda} or Assumption~\ref{assumption:uda_multistep_interclass}, we have that when $x\in T_1$, $x$ has more expansion to $S_1$ than $S_2$, and vice versa for $x\in T_2$. Formally, by Assumption~\ref{assumption:uda}, we have that 
\begin{align}
	\forall x\in T_1, ~\phi(x, S_1) \ge \rho \gtrsim \phi(x, S_2)\textup{  and  } \forall x\in T_2, ~\phi(x, S_2) \ge \rho \gtrsim \phi(x, S_1)
\end{align}
Because $\phi(x, S_i) = w(x,S_i)/w(x) = N\cdot w(x,S_i)$, we have for $x\in T_1$, $w(x,S_1) > w(x,S_2)$, and therefore by equation~\eqref{eqn:10}, $\dhat{y}(x) > 0$. Similary when $x\in T_2$, $\dhat{y}(x) < 0$.  

%\vspace{-8pt}
\subsection{When $k\ll N$ and $\norA$ is almost rank-$k$} \label{sec:almost_recover}
%\vspace{-4pt}
Assuming $k=\infty$ is unrealistic since in most cases the feature is low-dimensional, i.e., $k\ll N$. However, so long as $\norA$ is almost rank-$k$, the above argument still works with minor modification.
More concretely, suppose $\norA$'s ($k$+1)-th largest eigenvalue, $1-\lambda_{k+1}$, is less than $\epsilon$. Then we have $\|\norA - \tildeF\tildeF^\top\|_{\textup{op}} = 1-\lambda_{k+1} \le \epsilon$. It turns out that when $\epsilon \ll 1$, we can straightforwardly adapt the proofs for the warm-up case with an additional $\epsilon$ error in the final target performance. The error comes from second step of equation~\eqref{eqn:10}.

%However, it's unlikely that the Laplacian has ($k$+1)-th eigenvalue very close to 1. A prototypical graph with such a property arises from stochastic block models---if a graph has $k$ clusters, each of which has random edges identically and independently drawn with a fixed probability, then its ($k$+1)-th eigenvalue $\lambda_{k+1}$ is close to 1. The unrealistic, strong symmetry within each cluster is somewhat necessary for $\lambda_{k+1}$ to be close to 1 --- if any cluster has a cut that has a smaller size than an average cut of the cluster, then $\lambda_{k+1}$ will be significantly smaller than 1. 
%\vspace{-3pt}
\subsection{When $\norA$ is far from low-rank} \label{sec:realistic}
Unfortunately, a realistic graph's $\lambda_{k+1}$ is typically not close to 1 when $k\ll N$ (unless there's very strong symmetry in the graph as those cases in~\citet{shen2021does}). We aim to solve the more realistic and interesting case where $\lambda_{k+1}$ is a relatively small constant, e.g., $1/3$ or inverse polynomial in $d$. 
%For example, in the rest of the section, we assume $\lambda_{k+1}$ to be a constant, e.g., $1/3$. 
%(Our assumption implies that $\lambda_)
%Recall that by our clustering assumption and Cheeger's inequality (the easy side), the smallest $m$ eigenvalues of the Laplacian is less than $\alpha$. We essentially consider a decent eigengap between $\lambda_{m}$ and $\lambda_{k+1}$---we assume that $\lambda_{k+1}$ is significantly larger than $\alpha$, but does not require $\lambda_{k+1}$ to be close to 1.
The previous argument stops working because $\tildeF\tildeF^\top$ is a \textit{very noisy} approximation of $\norA$: the error $\|\norA - \tildeF\tildeF^\top\|_{\textup{op}}= 1-\lambda_{k+1}$ is non-negligible and can be larger than $\|\tildeF\tildeF^\top\|_{\textup{op}} = \lambda_k$.
Our main approach is considering the power of $\norA$, which reduces the negative impact of smaller eigenvalues. 
Concretely, though $\|\norA - \tildeF\tildeF^\top\|_{\textup{op}}= 1-\lambda_{k+1}$ is non-negligible, $(\tildeF\tildeF^\top)^t$ is a much better approximation of $\norA^t$: 
%\vspace{-10pt}
\begin{align}
	\|\norA^t - (\tildeF\tildeF^\top)^t\|_{\textup{op}} = (1-\lambda_{k+1})^t = \epsilon \label{eqn:3}
\end{align}
when $t \ge \Omega(\log(1/\epsilon))$. Inspired by this, 
we consider the transformed linear classifier 
$
	v' = \Sigma^{t-1}\tildeF^\top (\one_{S_1} - \one_{S_2})
$,
where $\Sigma = \tildeF^\top \tildeF$ is the covariance matrix of the representations. 
Intuitively, multiplying $\Sigma$ forces the linear head to pay more attention to those large-variance directions of the representations, which are potentially more robust.
The classifier outputs the following on a target datapoint $x$ (with a rescaling of $\sqrt{w(x)}$ for convenience)
\begin{align}
	\dhat{y}'(x) & = \sqrt{w(x)} f(x)^\top v = \one_x^\top \tildeF\Sigma^{t-1}\tildeF^t (\one_{S_1} - \one_{S_2}) \nonumber\\
	&= \one_x^\top  (\tildeF\tildeF^\top)^t(\one_{S_1} - \one_{S_2})   \approx \one_x^\top  \norA^t (\one_{S_1} - \one_{S_2})
\end{align}
where the last step uses equation~\eqref{eqn:3}. Thus, to understand the sign of $\dhat{y}'(x)$, it suffices to compare $\one_x^\top  \norA^t \one_{S_1}$ with $\one_x^\top  \norA^t \one_{S_2}$. In other words, it suffices to prove that for $x\in T_1$,  $\one_x^\top  \norA^t \one_{S_1} > \one_x^\top  \norA^t \one_{S_2}$.

We control the quantity $\one_x^\top  \norA^t \one_{S_1}$ by leveraging the following connection between $\norA$ and a random walk on the graph. First, let $D = \textup{diag}(w)$ be the diagonal matrix with $D_{xx}=\ww{x}$, $A\in\Real^{\Ndata\times\Ndata}$ be the adjacency matrix, i.e., $A_{xx'}=\ww{x,x'}$.
Observe that $AD^{-1}$ is a transition matrix that defines a random walk on the graph, and $(AD^{-1})^t$ correspond to the transition matrix for $t$ steps of the random walk, denoted by $x_0, x_t, \dots, x_t$. Because $\norA^t = (D^{-1/2}AD^{-1/2})^t = D^{1/2}(D^{-1}A)^t D^{-1/2}$ and $D = 1/N\cdot \imatrix_{N\times N}$, we can verify that 
$
	\one_x^\top  \norA^t \one_{S_1} = \Pr[x_t\in S_1\mid x_0 = x]
$. That is, $\one_x^\top  \norA^t \one_{S_1}$ and $\one_x^\top  \norA^t \one_{S_2}$ are the probabilities to arrive at $S_1$ and $S_2$, respectively. form $x_0 =x$. 
Therefore, to prove that $\one_x^\top  \norA^t \one_{S_1}- \one_x^\top  \norA^t \one_{S_2} > 0$ for most $x\in T_1$, it suffices to prove %the following (and the analogous version for $x\in T_2$) 
%\begin{align}
%& \forall x \in T_1, ~~\Pr[x_t\in S_1\mid x_0 = x] > \Pr[x_t\in S_2\mid x_0 = x] \label{eqn:4}
%\end{align}
%aThe above equation says 
that a $t$-step random walk starting from $T_1$ is more likely to arrive at $S_1$ than $S_2$. Intuitively, because $T_1$ has more connections to $S_1$ than $S_2$, hence a random walk starting from $T_1$ is more likely to arrive at $S_1$ than at $S_2$. In Section~\ref{appendix:uda}, we prove this by induction. %for $t\ge1$ by carefully controlling the probability that a random walk goes from one cluster to another in each step.

\section{Simulations}\label{sec:experiments}
%\vspace{-6pt}
We empirically show that our proposed Algorithm~\ref{algorithm:1} achieves good performance on the unsupervised domain adaptation problem. 
We conduct experiments on BREEDS~\citep{santurkar2020breeds}---a dataset for evaluating unsupervised domain adaptation algorithms (where the source and target domains are constructed from ImageNet images). For pre-training, we run the spectral contrastive learning algorithm~\citep{haochen2021provable} on the joint set of source and target domain data. Unlike the previous convention of discarding the projection head, we use the output after projection MLP as representations, because we find that it significantly improves the performance (for models learned by spectral contrastive loss) and is more consistent with the theoretical formulation.
Given the pre-trained representations, we run Algorithm~\ref{algorithm:1} with different choices of $t$. For comparison, we use the linear probing baseline where we train a linear head with logistic regression on the source domain. The table below lists the test accuracy on the target domain for Living-17 and Entity-30---two datasets constructed by BREEDS. 
Additional details can be found in Section~\ref{section:experiment_detail}.

\begin{table}[!htb]
	\centering
	\small
	\tablestyle{6pt}{1.1}
	\begin{tabular}{c|ccc}
		& Linear probe & PFA (ours, $t=1$) & PFA (ours, $t=2$) \\
		\shline		
		Living-17 & 54.7 & 67.4 & 72.0 \\ 
		Entity-30 & 46.4 & 62.3 & 65.1
	\end{tabular}
	\vspace{.5em}
%	\caption{Test accuracy (\%) of linear probing and Algorithm~\ref{algorithm:1} on the target domain. The pre-training algorithm is spectral contrastive learning. Using the average of features as the linear classifer ($t=1$) achieves better performance than linear probing. Adding the  preconditioner matrix ($t=2$) further improves the performance.
		\label{table:1}
	
\end{table}

Our experiments show that Algorithm~\ref{algorithm:1} achieves better domain adaptation performance than linear probing given the pre-trained representations. When $t=1$, our algorithm is simply computing the mean features of each class in the source domain, and then using them as the weight of a linear classifier. Despite having a lower accuracy than linear probing on the source domain (see section~\ref{section:experiment_detail} for the source domain accuracy), this simple algorithm achieves much higher accuracy on the target domain. When $t=2$, our algorithm incorporates the additional preconditioner matrix into the linear classifier, which further improves the domain adaptation performance. We note that our results on Entity-30 is better than~\cite{shen2021does} who compare with many state-of-the-art unsupervised domain adaptation methods, suggesting the superior performance of our algorithm.%\jnote{added}

\section{Conclusion}
In this paper, we study the linear transferability of contrastive representations, propose a simple linear classifier that can be directly computed from the labeled source domain, and prove that this classifier transfers to target domains when the positive-pair graph contains more cross-domain connections between the same class than cross-domain connections between different classes. We hope that our study can facilitate future theoretical analyses of the properties of self-supervised representations and inspire new practical algorithms.

\section*{Acknowledgments}

AK was supported by the Rambus Corporation Stanford Graduate Fellowship. Toyota Research Institute provided funds to support this work.

\bibliographystyle{plainnat}
\bibliography{all}

\newpage
\appendix

\section{Additional experiment details}\label{section:experiment_detail}
Unlike the previous convention of discarding the projection head and using the pre-MLP layers as the features~\cite{chen2020simple}, we use the final output of the neural nets as representations, because we find that it significantly improves the performance (for models learned by spectral contrastive loss) and is more consistent with the theoretical formulation.

For the architecture, we use ResNet50 followed by a 3-layer MLP projection head, where the hidden and output dimensions are 1024. For pre-training, we use the spectral contrastive learning algorithm~\cite{haochen2021provable} with hyperparameter $\mu=10$, and use the same augmentation strategy as described in~\cite{chen2020exploring}. We train the neural network using SGD with momentum 0.9. The learning rate starts at 0.05 and decreases to 0 with a cosine schedule. We use weight decay 0.0001 and train for 800 epochs with batch size 256. 

For linear probe experiments, we train a linear head using SGD with batch size 256 and weight decay 0 for 100 epochs, learning rate starts at 30.0 and is decayed by 10x at the 60th and 80th epochs. The classification accuracy on the source and target domains are listed in Table~\ref{table:2}:
\begin{table}[!htb]
	\centering
	\small
	\tablestyle{6pt}{1.1}
	\begin{tabular}{c|ccc}
		& linear probe & Ours (t=1) & Ours (t=2)  \\
		\shline		
		Living-17 & 91.3 / 54.7 & 92.6 / 67.4 & 90.5 / 72.0 \\ 
		Entity-30 & 84.8 / 46.4  & 82.8 / 62.3 & 77.3 / 65.1
	\end{tabular}
	\vspace{.5em}
	\caption{Accuracy (\%) of linear probing and Algorithm~\ref{algorithm:1} on the source and target domain. The number before and after slash are on the source and target domains, respectively. The numbers after slash are the same as in Table~\ref{table:1}.
		\label{table:2}
	}
\end{table}

\section{The generalized spectral contrastive loss}\label{appendix:generalized_loss}

Recall that the spectral contrastive loss~\cite{haochen2021provable} is defined as 
\begin{align}
\lossscl(f) = -2\cdot \Exp{(x, x^+)\sim \ppos}{ f(x)^\top f(x^+)} + \Exp{x,x' \sim \pdata}{(f(x)^\top f(x'))^2}
\end{align}

The following proposition shows that the generalized spectral contrastive loss $\loss{\reg}$ recovers the spectral contrastive loss when $\reg=1$.
\begin{proposition}\label{proposition:generalized_loss}
	For all $f:\data\rightarrow \Real^k$, we have
	\begin{align}
		\loss{1}(f) = \lossscl(f) + c, 
	\end{align}
	where $c$ does not depend on $f$. 
\end{proposition}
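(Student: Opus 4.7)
The plan is to expand both terms of $\loss{1}(f)$ algebraically and show that all $f$-dependent pieces match $\lossscl(f)$ exactly, with only an $f$-independent remainder (which will turn out to be $k$). I would start by rewriting the first term using $\|f(x)-f(x^+)\|_2^2 = \|f(x)\|^2 - 2f(x)^\top f(x^+) + \|f(x^+)\|^2$. Since $\ppos$ is symmetric, its two marginals both equal $\pdata$, so
\begin{align}
\Exp{(x,x^+)\sim\ppos}{\|f(x)-f(x^+)\|_2^2} = 2\Exp{x\sim\pdata}{\|f(x)\|_2^2} - 2\Exp{(x,x^+)\sim\ppos}{f(x)^\top f(x^+)}.
\end{align}

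Next I would expand $R(f)$. Setting $M = \Exp{x\sim\pdata}{f(x)f(x)^\top}$, we have $R(f) = \|M - I_{k\times k}\|_F^2 = \Tr(M^2) - 2\Tr(M) + k$, using symmetry of $M$. For $\Tr(M^2)$, I would write $M^2 = \Exp{x,x'\sim\pdata}{f(x)f(x)^\top f(x')f(x')^\top}$ (with $x,x'$ independent), take the trace inside, and use the cyclic identity $\Tr(f(x)f(x)^\top f(x')f(x')^\top) = (f(x)^\top f(x'))^2$ to obtain $\Tr(M^2) = \Exp{x,x'\sim\pdata}{(f(x)^\top f(x'))^2}$. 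Similarly $\Tr(M) = \Exp{x\sim\pdata}{\|f(x)\|_2^2}$.

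Finally I would combine: with $\reg=1$,
\begin{align}
\loss{1}(f) &= 2\Exp{x\sim\pdata}{\|f(x)\|_2^2} - 2\Exp{(x,x^+)\sim\ppos}{f(x)^\top f(x^+)} \\
&\quad + \Exp{x,x'\sim\pdata}{(f(x)^\top f(x'))^2} - 2\Exp{x\sim\pdata}{\|f(x)\|_2^2} + k.
\end{align}
The two $\Exp{x\sim\pdata}{\|f(x)\|_2^2}$ terms cancel, leaving exactly $\lossscl(f) + k$, so the proposition holds with $c = k$.

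\textbf{Main obstacle.} There is no real obstacle here — the proof is a sequence of routine identities. The only place one should be a touch careful is the independence of $x$ and $x'$ when expanding $M^2$ into a double expectation (this is what makes the cross term literally the expectation under the ``negative pair'' distribution), and the symmetry of $\ppos$ when simplifying the first term. Both are given by the definitions in Section~\ref{section:preliminaries}.
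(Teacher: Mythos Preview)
Your proposal is correct and follows essentially the same route as the paper's proof: expand the positive-pair term via $\|a-b\|^2 = \|a\|^2 - 2a^\top b + \|b\|^2$, expand the regularizer via $\|M-I\|_F^2 = \Tr(M^2) - 2\Tr(M) + k$, and observe that the $\Exp{x\sim\pdata}{\|f(x)\|_2^2}$ terms cancel when $\reg=1$. The only cosmetic difference is that the paper introduces the matrix $\tildeF$ (with rows $\sqrt{\ww{x}}\,f(x)$) and phrases the trace identities as $\Tr(\tildeF\tildeF^\top)=\Tr(\tildeF^\top\tildeF)$ and $\Tr((\tildeF\tildeF^\top)^2)=\Tr((\tildeF^\top\tildeF)^2)$, whereas you work directly with $M=\Exp{x\sim\pdata}{f(x)f(x)^\top}$; the computations are the same, and your version additionally identifies the constant as $c=k$.
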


\begin{proof}[Proof of Proposition~\ref{proposition:generalized_loss}]
	Define matrix $\tildeF\in\Real^{\Ndata\times k}$ be such that the $x$-th row of it contains $\sqrt{\ww{x}}\cdot f(x)$. We have
	\begin{align}
		\loss{\reg}(f) &= \Exp{(x, x^+)\sim \ppos}{\norm{f(x)-f(x^+)}_2^2} + \reg \cdot \norm{\Exp{x\sim \pdata}{f(x)f(x)^\top} - \imatrix_k}_F^2\\
		&= \sum_{x, x'\in \data} \ww{x, x'} \norm{f(x)-f(x')}_2^2 + \reg \cdot \norm{\tildeF^\top\tildeF - \imatrix_k}_F^2\\
		&= 2\sum_{x\in\data} \ww{x} \norm{f(x)}_2^2 -2\sum_{x, x'\in \data} \ww{x, x'} f(x)^\top f(x') + \reg\cdot\Tr\left(\left(\tildeF^\top \tildeF - \imatrix_k\right)^2\right)\\
		&=  2\Tr\left(\tildeF\tildeF^\top\right) -2\Exp{(x, x^+)\sim\ppos}{f(x)^\top f(x^+)} + \reg \Tr\left(\left(\tildeF^\top\tildeF\right)^2\right) -2\reg\Tr\left(\tildeF^\top\tildeF\right) + \textup{const}.
	\end{align}
	When $\reg=1$, notice that $\Tr\left(\tildeF\tildeF^\top\right) = \Tr\left(\tildeF^\top\tildeF\right)$ and $\Tr\left(\left(\tildeF\tildeF^\top\right)^2\right) = \Tr\left(\left(\tildeF^\top\tildeF\right)^2\right)$, we have
	\begin{align}
		\loss{1}(f) &=  -2\Exp{(x, x^+)\sim\ppos}{f(x)^\top f(x^+)} + \Tr\left(\left(\tildeF\tildeF^\top\right)^2\right) +  \textup{const}\\
		&= 	-2\Exp{(x, x^+)\sim\ppos}{f(x)^\top f(x^+)} + \Exp{x, x'\sim\pdata}{\left(f(x)^\top f(x')\right)^2} + \textup{const}. \\
		&= \lossscl(f) + \textup{const}.
	\end{align}

\end{proof}

\section{Relationship between contrastive representations and spectral decomposition}\label{appendix:rank_k_approximation}

\citet{haochen2021provable} showed that minimizing spectral contrastive loss is equivalent to spectral clustering on the positive-pair graph. We introduce basic concepts in spectral graph theory and extend this result slightly to the generalized spectral contrastive loss. 
We call $\norA\in\Real^{\Ndata\times\Ndata}$ the \textit{normalized adjacency matrix} of $\graph(\data, w)$ if $\norA_{xx'}={\ww{x,x'}}/{\sqrt{\ww{x}\ww{x'}}}$.\footnote{We index $\norA$ by $(x, x')\in\data\times\data$. Generally, we will index the $\Ndata$-dimensional axis of an array by $x\in\data$.} 
Let $\laplacian := \imatrix_{\Ndata\times\Ndata} -\norA$ be the \textit{Laplacian} of $\graph(\data, w)$. It is well-known~\citep{chung1997spectral} that $\laplacian$ is a PSD matrix with all eigenvalues in $[0, 2]$. We use $\lambda_i$ to denote the $i$-th smallest eigenvalue of $\laplacian$.
For a symmetric matrix $M$, we say $M_{[k]}$ is the best rank-$k$ \textit{PSD} approximation of $M$ if it is a rank-$k$ \textit{PSD} matrix that minimizes $\norm{M_{[k]} - M}_F^2$.

Representations learned from $\loss{\reg}$ turn out to be closely related to the low-rank approximation of $\norA$, as shown in the following Proposition.
\begin{proposition}\label{proposition:rank_k_approximation}
	Let $f:\data\rightarrow\Real^k$ be a minimizer of $\loss{1}(\cdot)$, $F\in\Real^{\Ndata\times k}$ be the matrix where the $x$-th row contains $f(x)$, and $D = \textup{diag}(w)$ be the diagonal matrix with $D_{xx} = \ww{x}$. Then, we have 
	\begin{align}
		{D}^{1/2} FF^\top {D}^{1/2} =\norA_{[k]}.
	\end{align}
	
	More generally, when $f:\data\rightarrow\Real^k$ is a minimizer of $\loss{\reg}(\cdot)$, ${D}^{1/2} FF^\top {D}^{1/2}$ is the best rank-$k$ PSD approximation of
	$\frac{1}{\reg}\cdot \norA + (1-\frac{1}{\reg})\cdot \imatrix_{\Ndata\times\Ndata}$.
\end{proposition}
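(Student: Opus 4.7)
The plan is to recycle the algebraic expansion that was already derived inside the proof of Proposition~\ref{proposition:generalized_loss} and recast $\loss{\reg}(f)$ as a rank-constrained Frobenius-norm approximation problem on the Gram matrix $M := \tildeF\tildeF^\top = D^{1/2}FF^\top D^{1/2}$. Given this reduction, the conclusion follows immediately from the definition of the best rank-$k$ PSD approximation.

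First, I would reuse the identity
\begin{align*}
\loss{\reg}(f) = 2\Tr(\tildeF\tildeF^\top) - 2\Exp{(x,x^+)\sim\ppos}{f(x)^\top f(x^+)} + \reg\Tr\!\big((\tildeF^\top\tildeF)^2\big) - 2\reg\Tr(\tildeF^\top\tildeF) + \textup{const}
\end{align*}
established in the proof of Proposition~\ref{proposition:generalized_loss}, and rewrite every term in terms of $M$. The cyclic property of trace gives $\Tr(\tildeF\tildeF^\top)=\Tr(\tildeF^\top\tildeF)=\Tr(M)$ and $\Tr((\tildeF^\top\tildeF)^2)=\Tr(M^2)$, while using $\ww{x,x'} = \norA_{xx'}\sqrt{\ww{x}\ww{x'}}$ one has
\begin{align*}
\Exp{(x,x^+)\sim\ppos}{f(x)^\top f(x^+)} = \sum_{x,x'\in\data}\ww{x,x'}f(x)^\top f(x') = \Tr(\norA\, M).
\end{align*}
Collecting terms and completing the square, I would then obtain
\begin{align*}
\loss{\reg}(f) = \reg\Tr(M^2) - 2\Tr\!\big((\norA - (1-\reg)\imatrix_{\Ndata\times\Ndata})M\big) + \textup{const} = \reg\,\norm{M - Y}_F^2 + \textup{const}',
\end{align*}
where $Y := \tfrac{1}{\reg}\norA + \big(1-\tfrac{1}{\reg}\big)\imatrix_{\Ndata\times\Ndata}$ and the second constant is independent of $f$. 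Thus minimizing $\loss{\reg}(f)$ over $f$ is equivalent to minimizing $\norm{M - Y}_F^2$ over those $M$ expressible as $D^{1/2}FF^\top D^{1/2}$ with $F\in\Real^{\Ndata\times k}$.

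Second, I would verify the parametrization step: since $D$ is diagonal with strictly positive entries (every $x\in\data$ carries positive marginal weight), $D^{1/2}$ is invertible, so the set $\{D^{1/2}FF^\top D^{1/2} : F\in\Real^{\Ndata\times k}\}$ coincides exactly with the cone of rank-$\le k$ PSD matrices -- given any $M = GG^\top$ one sets $F = D^{-1/2}G$. By the definition of $M_{[k]}$ as the best rank-$k$ PSD approximation of $M$ in Frobenius norm, the minimizer is $M^\star = Y_{[k]}$, which yields the general claim; specializing to $\reg=1$ gives $Y = \norA$ and hence $D^{1/2}FF^\top D^{1/2} = \norA_{[k]}$. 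The only mildly non-routine step is the parametrization argument just mentioned, and it raises no real obstacle: everything else is a straight completion of the square on top of the expansion already recorded in Proposition~\ref{proposition:generalized_loss}.
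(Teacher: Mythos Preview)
Your proposal is correct and follows essentially the same route as the paper's proof: both reuse the trace expansion from Proposition~\ref{proposition:generalized_loss}, complete the square to write $\loss{\reg}(f)=\reg\,\|\tildeF\tildeF^\top-Y\|_F^2+\textup{const}$ with $Y=\tfrac{1}{\reg}\norA+(1-\tfrac{1}{\reg})\imatrix_{\Ndata\times\Ndata}$, and then invoke low-rank approximation. Your explicit parametrization argument (that $\{D^{1/2}FF^\top D^{1/2}:F\in\Real^{\Ndata\times k}\}$ is exactly the rank-$\le k$ PSD cone because $D^{1/2}$ is invertible) is in fact a touch more careful than the paper's one-line appeal to Eckart--Young--Mirsky, but the overall strategy is identical.
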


\begin{remark}
Proposition~\ref{proposition:rank_k_approximation} can be seen as a simple extension of Lemma 3.2 in \citet{haochen2021provable}, which correspond to the case when $\reg=1$. The extension is helpful because we will work with $\sigma>1$. E.g., we set $\sigma=2$ in Section~\ref{section:main_results}, which makes $\frac{1}{\reg}\cdot \norA + (1-\frac{1}{\reg})\cdot \imatrix_{\Ndata\times\Ndata}$ a \textit{PSD} matrix; hence its best rank-$k$ \textit{PSD} approximation is the same as best rank-$k$ approximation. 
\end{remark}

\begin{proof}[Proof of Proposition~\ref{proposition:rank_k_approximation}]
	Define $\tildeF := D^{\frac{1}{2}} F$.
	Following the Proof of Proposition~\ref{proposition:generalized_loss}, we have
	\begin{align}
		\loss{\reg}(f) =  2\Tr\left(\tildeF\tildeF^\top\right) -2\Exp{(x, x^+)\sim\ppos}{f(x)^\top f(x^+)} + \reg \Tr\left(\left(\tildeF^\top\tildeF\right)^2\right) -2\reg\Tr\left(\tildeF^\top\tildeF\right) + \textup{const}.
	\end{align}
	Notice that $\Tr\left(\tildeF\tildeF^\top\right) = \Tr\left(\tildeF^\top\tildeF\right)$ and $\Tr\left(\left(\tildeF\tildeF^\top\right)^2\right) = \Tr\left(\left(\tildeF^\top\tildeF\right)^2\right)$, we have
	\begin{align}
		\loss{\reg}(f) &= \reg \Tr\left(\left(\tildeF\tildeF^\top\right)^2\right) - 2\Tr \left(\left(\norA + (\reg-1)\imatrix_{\Ndata\times\Ndata}\right) \tildeF\tildeF^\top\right) + \textup{const}\\
		&= \reg \norm{\tildeF\tildeF^\top - \left(\frac{1}{\reg}\norA + (1-\frac{1}{\reg})\imatrix_{\Ndata\times\Ndata}\right)}_F^2 + \textup{const}.
	\end{align}
	Therefore, directly applying Eckart-Young-Mirsky theorem finishes the proof.
\end{proof}

\section{Improved bound on linear separability}\label{appendix:alpha2bound}
%\section{Proof of Lemma~\ref{theorem:alpha2bound}}\label{appendix:alpha2bound}

Let $f:\data\rightarrow\Real^k$ be a representation function with dimension $k>m$. 
For a matrix $\headmatrix\in\Real^{k\times\ncluster}$, we define the linear head as $g_{\headmatrix}(x) = \argmax_{i\in[\ncluster]} (\headmatrix^\top f(x))_i$. 
The \emph{linear probing error} of $f$ is the minimal possible error of using such a linear head to predict which cluster a datapoint belongs to: 
\begin{align}
	\Err(f) := \min_{\headmatrix\in\Real^{k\times\ncluster}} \Exp{x\sim\pdata}{\id{x\notin C_{g_{ \headmatrix}(x)}}}.
\end{align}
We say the representation $f$ has linear separability if the linear probing error is small.

\citet{haochen2021provable} prove the linear separability of spectral contrastive representations.
In particular, they prove that $\Err(f)\le O(\alpha/\lambda_{k+1})$ where $\lambda_{k+1}$ is the ($k$+1)-th smallest eigenvalue of the Laplacian. 
When $k$ is set to be large enough---larger than the total number of distinct semantic meanings in the graph---$\graph$ cannot be partitioned into $k$ disconnected clusters, hence $\lambda_{k+1}$ is big (e.g., on the order of constant) according to Cheeger's inequality, and we have $\Err(f)\le O(\alpha)$.\footnote{High-order Cheeger's inequality  establishes a precise connection between $\lambda_{k}$ and the clusterabilty of the graph. Loosely speaking, when the graph cannot be partition into $k/2$ pieces with expansion at most $\gamma$, then $\lambda_k \gtrsim \gamma^2$ (see~\cite{lee2014multiway,louis2014approximation}, c.f. Lemma B.4 of~\cite{haochen2021provable}.) }

The  lemma below shows that Assumption~\ref{assumption:separation} enables a better bound on the linear probing errors.

\begin{lemma}\label{theorem:alpha2bound}
	Suppose that Assumption~\ref{assumption:separation} holds.
	Let $f:\data\rightarrow\Real^k$ be a minimizer of the generalized spectral contrastive loss $\loss{\reg}(\cdot)$ for $\reg\ge\eigval_k$. 
	Then, the linear probing error satisfies 
	\begin{align}
		\Err(f) \lesssim {\ncluster\alpha^2}/{\eigval_{k+1}^2}.
	\end{align}
	where $\eigval_{k+1}$ is the $(k+1)$-th smallest eigenvalue of the Laplacian matrix of $\graph(\data, w)$.
\end{lemma}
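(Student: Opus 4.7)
The plan is to combine Proposition~\ref{proposition:rank_k_approximation} with a Davis--Kahan-style projection argument on the normalized cluster indicators, exploiting the pointwise (max-expansion) form of Assumption~\ref{assumption:separation} to obtain a quadratic improvement over the $O(\alpha/\eigval_{k+1})$ bound of~\citet{haochen2021provable}.

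First, since $f$ minimizes $\loss{\reg}$ with $\reg\ge\eigval_k$, Proposition~\ref{proposition:rank_k_approximation} identifies $\tildeF\tildeF^\top := D^{1/2}FF^\top D^{1/2}$ as the best rank-$k$ PSD approximation of $M := \tfrac{1}{\reg}\norA + (1-\tfrac{1}{\reg})\imatrix_{\Ndata\times\Ndata}$. The eigenvalues of $M$ are $1-\eigval_i/\reg$, and the assumption $\reg\ge\eigval_k$ ensures the top-$k$ are non-negative; hence $\tildeF\tildeF^\top = \sum_{i=1}^k (1-\eigval_i/\reg) v_i v_i^\top$ for orthonormal eigenvectors $v_1,\dots,v_k$ of $\laplacian$ corresponding to its $k$ smallest eigenvalues. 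In particular, the column span $V_k := \textup{col}(\tildeF)$ coincides with $\textup{span}(v_1,\dots,v_k)$.

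The main technical step is to show that each normalized cluster indicator $u_j := D^{1/2}\onevec{C_j}/\sqrt{w(C_j)}$ satisfies $\|\laplacian u_j\|^2 \le 2\alpha^2$. Direct computation gives $(\laplacian u_j)_x = w(x,\data\setminus C_j)/\sqrt{w(x)w(C_j)}$ for $x\in C_j$ and $-w(x,C_j)/\sqrt{w(x)w(C_j)}$ otherwise. Applying Assumption~\ref{assumption:separation} pointwise yields $w(x,\data\setminus C_j)\le\alpha w(x)$ for $x\in C_j$ and, for $x\in C_i$ with $i\neq j$, $w(x,C_j)\le w(x,\data\setminus C_i)\le\alpha w(x)$. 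Squaring each entry and summing gives $\|\laplacian u_j\|^2 \le 2\alpha^2$. This is the heart of the argument: under the weaker average-expansion assumption used in~\citet{haochen2021provable} only $O(\alpha)$ is available here, forfeiting the quadratic factor in the final bound.

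The remainder is standard spectral perturbation followed by Markov's inequality. Decompose $u_j = p_j + q_j$ with $p_j\in V_k$ and $q_j\in V_k^\perp$. Since $\laplacian$ preserves this decomposition and has all eigenvalues on $V_k^\perp$ at least $\eigval_{k+1}$, $\|\laplacian u_j\|^2 \ge \|\laplacian q_j\|^2 \ge \eigval_{k+1}^2\|q_j\|^2$, so $\|q_j\|^2 \le 2\alpha^2/\eigval_{k+1}^2$. Because $p_j\in V_k=\textup{col}(\tildeF)$, I can select $b_j\in\Real^k$ with $\tildeF b_j = \sqrt{w(C_j)}\,p_j$; the induced linear head $\headmatrix=[b_1,\dots,b_m]$ has target-wise $L^2$ error $\Exp{x\sim\pdata}{(f(x)^\top b_j - \id{x\in C_j})^2} = w(C_j)\|q_j\|^2$. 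Correct classification holds whenever $\max_j|f(x)^\top b_j - \id{x\in C_j}|<1/2$, so a union bound over $j\in[\ncluster]$ combined with Markov's inequality gives $\Err(f) \lesssim \sum_j w(C_j)\|q_j\|^2 \lesssim \ncluster\alpha^2/\eigval_{k+1}^2$.

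The main obstacle is the pointwise bound $\|\laplacian u_j\|^2 \le 2\alpha^2$, which is precisely where the strengthened max-expansion form of Assumption~\ref{assumption:separation} pays off; the rest is routine Davis--Kahan plus Markov. Minor care is needed near the boundary $\reg=\eigval_k$ where some retained eigenvalues of $M$ become small, but the orthogonal projection onto $V_k$ remains well-defined throughout.
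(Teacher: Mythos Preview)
Your proposal is correct and follows essentially the same route as the paper: both identify $\textup{col}(\tildeF)$ with the bottom-$k$ eigenspace of $\laplacian$ via Proposition~\ref{proposition:rank_k_approximation}, prove the key pointwise estimate $\|\laplacian u_j\|^2\le 2\alpha^2$ (the paper's Claim~\ref{lemma:laplaciansquare}, computed there as the bilinear form $\wvec_j^\top\laplacian^2\wvec_j$), and then combine the projection bound $\|q_j\|^2\le 2\alpha^2/\eigval_{k+1}^2$ with a Markov argument. One small point: literally squaring the entrywise bounds you state and summing over $x\notin C_j$ gives $\alpha^2(1-w(C_j))/w(C_j)$ rather than $\alpha^2$; to get the stated $2\alpha^2$ you need to keep one factor of $w(x,C_j)$ and sum it to $w(\data\setminus C_j,C_j)\le\alpha\,w(C_j)$ (exactly the $Q_2$ step in the paper's proof of Claim~\ref{lemma:laplaciansquare}), though either bound yields the same final $\ncluster\alpha^2/\eigval_{k+1}^2$ after multiplying by $w(C_j)$.
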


%The proof relies on a technical improvement that leverages Assumption~\ref{assumption:separation}, and
%the details can be found in Appendix~\ref{appendix:alpha2bound}.

\begin{remark}
Since the separation assumption inherently implies small $\lambda_\ncluster$ (according to Cheeger's inequality), one needs to choose the representation dimension $k> \ncluster$ for the bound to be non-vacuous.
When $\ncluster\le O(1)$ and $\lambda_{k+1}\ge\Omega(1)$,
Lemma~\ref{theorem:alpha2bound} implies that the linear probing error of $f$ is at most $O(\alpha^2)$, which improves upon the previous $O(\alpha)$ bound. 
\end{remark}

We first introduce the following claim, which controls the Rayleigh quotient for Laplacian square $\laplacian^2$ and the indicator vector of one cluster.
\begin{claim}\label{lemma:laplaciansquare}
	Suppose that Assumption~\ref{assumption:separation} holds.
	Let $i\in[\ncluster]$ be the index of one cluster. Let $\wvec_i\in\Real^\Ndata$ be a vector such that its $x$-th dimension is $\sqrt{\ww{x}}$ when ${x}\in C_i$, $0$ otherwise. Then, we have
	\begin{align}
		\wvec_i^\top \laplacian^2 \wvec_i \le 2\alpha^2 \norm{\wvec_i}_2^2.
	\end{align}
\end{claim}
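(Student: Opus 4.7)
The plan is to directly expand $g_i^\top \laplacian^2 g_i = \|\laplacian g_i\|_2^2$ coordinate-wise and then split the resulting sum by whether a vertex lies inside $C_i$ or outside. Recalling $\laplacian = I - \norA$, I would first compute $(\norA g_i)_x = \sum_{x'\in C_i} \frac{w(x,x')}{\sqrt{w(x)w(x')}}\cdot \sqrt{w(x')} = \frac{w(x,C_i)}{\sqrt{w(x)}}$, which gives
\[
(\laplacian g_i)_x = \begin{cases} \dfrac{w(x,\data\setminus C_i)}{\sqrt{w(x)}} & x\in C_i,\\[4pt] -\dfrac{w(x,C_i)}{\sqrt{w(x)}} & x\notin C_i,\end{cases}
\]
using $w(x)=w(x,C_i)+w(x,\data\setminus C_i)$ on the inside. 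Hence
\[
g_i^\top \laplacian^2 g_i \;=\; \sum_{x\in C_i}\frac{w(x,\data\setminus C_i)^2}{w(x)} \;+\; \sum_{x\notin C_i}\frac{w(x,C_i)^2}{w(x)}.
\]

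Next I would bound each term by pulling out one factor and applying Assumption~\ref{assumption:separation}. For the first sum, for every $x\in C_i$ the definition of $\phimax$ gives $w(x,\data\setminus C_i)/w(x)\le \alpha$, so each summand is at most $\alpha\cdot w(x,\data\setminus C_i)\le \alpha^2 w(x)$, and summing yields $\alpha^2 w(C_i) = \alpha^2\|g_i\|_2^2$. For the second sum, any $x\notin C_i$ lies in some other cluster $C_j$, and $w(x,C_i)\le w(x,\data\setminus C_j)\le \alpha\cdot w(x)$; thus each summand is at most $\alpha\cdot w(x,C_i)$, giving a total of at most $\alpha\cdot w(C_i,\data\setminus C_i)\le \alpha^2 w(C_i) = \alpha^2\|g_i\|_2^2$. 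Adding the two bounds gives the claimed $2\alpha^2\|g_i\|_2^2$.

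There is essentially no hard step here — the only place to be careful is converting the max-expansion hypothesis into a per-vertex inequality in both directions (from inside $C_i$ looking out, and from a vertex in some $C_j$ looking into $C_i$), and making sure to absorb exactly one factor of $w(x)$ into $\phimax\le\alpha$ and reserve the other factor so that the remaining sum telescopes back to $w(C_i)=\|g_i\|_2^2$. The symmetry $w(C_i,\data\setminus C_i) = w(\data\setminus C_i, C_i)$ is what lets the second sum reuse the same $\alpha$ bound obtained from the inside. No further assumptions (e.g.\ intra-cluster conductance) are needed for this claim.
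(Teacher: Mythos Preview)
Your proof is correct and follows essentially the same approach as the paper: both compute $(\laplacian g_i)_x$ coordinate-wise and bound each entry using the max-expansion hypothesis of Assumption~\ref{assumption:separation}. Your presentation is in fact slightly more streamlined, since you evaluate $\|\laplacian g_i\|_2^2$ directly and bound the two sums, whereas the paper re-expands the bilinear form $g_i^\top(\imatrix-\norA)g'_i$ with $g'_i=(\imatrix-\norA)g_i$ into pieces $Q_1,Q_2$ before bounding; the underlying inequalities are identical.
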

\begin{proof}[Proof of Claim~\ref{lemma:laplaciansquare}]	
	We first bound every dimension of the vector $\laplacian \wvec_i = (\imatrix-\norA)\wvec_i$. Let $x\in C_i$, we have
	\begin{align}
		(\norA \wvec_i)_x &=\sum_{\tilde{x}\in C_i} \frac{\ww{x, \tilde{x}}}{\sqrt{\ww{x}}\sqrt{\tilde{x}}} \sqrt{\tilde{x}}\\
		&= \left(\sum_{\tilde{x}\in C_i} \ww{x, \tilde{x}}\right) \cdot \frac{1}{\sqrt{\ww{x}}}\\
		&
		\begin{cases}
			\ge \frac{1}{\sqrt{\ww{x}} (1-\alpha)} \cdot \sum_{\tilde{x}\in\data} \ww{x, \tilde{x}} = (1-\alpha) \sqrt{\ww{x}}. \\
			\le \frac{1}{\sqrt{\ww{x}}} \cdot \sum_{\tilde{x}\in\data} \ww{x, \tilde{x}} = \sqrt{\ww{x}}.
		\end{cases}
	\end{align}
	Let $x'\notin C_i$, we have
	\begin{align}
		(\norA \wvec_i)_{x'} &= \sum_{\tilde{x}\in C_i} \frac{\ww{x', \tilde{x}}}{\sqrt{\ww{x'}}\sqrt{\ww{\tilde{x}}}} \cdot \sqrt{\ww{\tilde{x}}}\\
		&= \frac{1}{\sqrt{\ww{x'}}} \cdot \sum_{\tilde{x}\in C_i}\ww{x', \tilde{x}} \\
		&\begin{cases}
			\le \alpha \sqrt{\ww{x'}}\\
			\ge 0.
		\end{cases}
	\end{align}
	Therefore, we have $((\imatrix-\norA)\wvec_i)_x \in[0, \alpha\sqrt{\ww{x}}]$ for any $x\in C_i$, and $((\imatrix-\norA)\wvec_i)_{x'} \in[-\alpha\sqrt{\ww{x}}, 0]$ for any $x'\notin C_i$. Let $\wvec'_i \triangleq (\imatrix-\norA)\wvec_i$ as a shorthand,  we have
	\begin{align}
		\wvec_i^\top \norA \wvec'_i &= \sum_{\tilde{x}\in C_i, x\in C_i} \frac{\ww{w, \tilde{x}}}{\sqrt{\ww{\tilde{x}}}\sqrt{\ww{x}}} \cdot \sqrt{\ww{\tilde{x}}} \cdot (\wvec'_i)_x +  \sum_{\tilde{x}\in C_i, x'\notin C_i} \frac{\ww{\tilde{x}, x'}}{\sqrt{\ww{\tilde{x}}}\sqrt{{\ww{x'}}}}\cdot \sqrt{\ww{\tilde{x}}} \cdot (\wvec'_i)_{x'}\\
		&= \sum_{\tilde{x}\in C_i, x\in C_i} \frac{\ww{w, \tilde{x}}}{\sqrt{\ww{x}}} \cdot (\wvec'_i)_x +  \sum_{\tilde{x}\in C_i, x'\notin C_i} \frac{\ww{\tilde{x}, x'}}{\sqrt{{\ww{x'}}}}\cdot (\wvec'_i)_{x'}.
	\end{align}
	Also notice that
	\begin{align}
		\wvec_i^\top \imatrix \wvec'_i = \sum_{x\in C_i} \sqrt{\ww{x}}\cdot (\wvec'_i)_x.
	\end{align}
	Therefore, we have
	\begin{align}
		\wvec_i^\top (\imatrix-\norA)\wvec'_i = Q_1 + Q_2,
	\end{align}
	where 
	\begin{align}
		Q_1 &\triangleq \sum_{x\in C_i} \sqrt{\ww{x}}\cdot (\wvec'_i)_x - \sum_{\tilde{x}\in C_i, x\in C_i} \frac{\ww{\tilde{x}, x}}{\sqrt{\ww{x}}}\cdot (\wvec'_i)_x\\
		&= \sum_{x\in C_i} \left(\frac{\sum_{\tilde{x}\notin C_i} \ww{\tilde{x}, x}}{\sqrt{\ww{x}}}(\wvec'_i)_x\right) \in \left[0, \alpha^2 \sum_{x\in C_i} \ww{x}\right],
	\end{align}
	and
	\begin{align}
		Q_2 \triangleq -\sum_{\tilde{x}\in C_i, x'\notin C_i} \frac{\ww{\tilde{x}, x'}}{\sqrt{\ww{x'}}} (\wvec'_i)_{x'} \in \left[0, \alpha^2 \sum_{x\in C_i} \ww{x}\right].
	\end{align}
	As a result, we have
	\begin{align}
		\wvec_i^\top \laplacian^2 \wvec_i = \wvec_i^\top (\imatrix-\norA) \wvec'_i \le 2\alpha^2 \sum_{x\in C_i} \ww{x} = 2\alpha^2 \norm{\wvec_i}_2^2.
	\end{align}
\end{proof}

Now we use the above claim to prove Lemma~\ref{theorem:alpha2bound}.
\begin{proof}[Proof of Lemma~\ref{theorem:alpha2bound}]
	Define matrix $\tildeF\in\Real^{\Ndata\times k}$ be such that the $x$-th row of it contains $\sqrt{\ww{x}}\cdot f(x)$. According to Proposition~\ref{proposition:rank_k_approximation}, the column span of $\tildeF$ is exactly the span of the $k$ largest positive eigenvectors of $\frac{1}{\reg}\cdot \norA + (1-\frac{1}{\reg})\cdot \imatrix_{\Ndata\times\Ndata}$, hence is the span of the $k$ smallest eigenvectors of $\laplacian$. For every $i\in[\ncluster]$, define vector $\wvec_i\in\Real^\Ndata$ be a vector such that its $x$-th dimension is $\sqrt{\ww{x}}$ when $x\in C_i$, $0$ otherwise. Let vector $B_i\in\Real^k$ be such that $\tildeF B_i$ is the projection of $\wvec_i$ onto the span of the $k$ smallest eigenvectors of $\laplacian$. Let $B\in\Real^{k\times \ncluster}$ be the matrix where $B_i$ is the $i$-th column.
	
	For any $i\in[\ncluster]$, we have 
	\begin{align}
		\sum_{x\in\data} \ww{x} \left(B_i^\top f(x) - \id{\clusterf{x}=c}\right)^2 &= \norm{\tildeF B_i - \wvec_i}_2^2 \le \frac{\wvec_i^\top \laplacian^2 \wvec_i}{\lambda_{k+1}^2} \le \frac{2\alpha^2}{\lambda_{k+1}^2},
	\end{align}
	where the first inequlity uses the fact that $\tildeF B_i$ is the projection of $\wvec_i$ onto the top $k$ eigenspan, and the second inequality is by Claim~\ref{lemma:laplaciansquare}. 
	Let $\tau:\data\rightarrow [\ncluster]$ be the cluster index function such that $x\in C_{\tau(x)}$ for $x\in\data$.
	Summing the above equation over $i\in[\ncluster]$ gives
	\begin{align}
		\Exp{x\sim\pdata}{\norm{B^\top f(x)-\idvec{\clusterf{x}}}_2^2} \le \frac{2\ncluster\alpha^2}{\lambda_{k+1}^2}.
	\end{align}
	Finally, we finish the proof by noticing that $g_{f, B}(x)\ne\tau(x)$ only if $\norm{B^\top f(x) - \idvec{\clusterf{x}}}_2^2 \ge \frac{1}{2}$.
	
\end{proof}

\section{Proof of Theorem~\ref{theorem:uda}}\label{appendix:uda}
We prove the following theorem which directly implies Theorem~\ref{theorem:uda}.
\begin{theorem}\label{theorem:uda_appendix}
	Suppose that Assumption
	\ref{assumption:separation} and \ref{assumption:uda} holds, and $\pdata(\source)/\pdata(\target) \le O(1)$.
	Let $f$ be a minimizer of the contrastive loss $\loss{2}(\cdot)$ and the head $g_t$ be defined in~\eqref{eqn:13}.
	Then, for any  $1\le t \le \rho/(8\alpha^2)$, we have
	\begin{align}
			\Err_\target(g_t) \lesssim \frac{\nclass}{\alpha^2\lambda_{k+1}^2} \cdot \big(1-\lambda_{k+1}/2\big)^{t},\label{eqn:15}
		\end{align}
where $\lambda_{k+1}$ is the $k$+$1$-th smallest eigenvalue of the Laplacian of the positive-pair graph.
\end{theorem}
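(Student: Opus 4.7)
The plan is to reduce the $\argmax$ to a quadratic form on the positive-pair graph, decompose it into a random-walk signal and a spectral approximation noise, lower-bound the signal through a random-walk analysis, and finish via Chebyshev's inequality. By Proposition~\ref{proposition:rank_k_approximation} applied with $\reg=2$, the matrix $\tildeF\tildeF^\top$ (where $\tildeF := D^{1/2}F$) equals the best rank-$k$ PSD approximation $M_{[k]}$ of the PSD matrix $M := \tfrac{1}{2}(\imatrix+\norA) = \imatrix - \tfrac{1}{2}\laplacian$, whose eigenvalues are $\mu_i = 1-\lambda_i/2$. Using the identity $\tildeF(\tildeF^\top\tildeF)^{t-1}\tildeF^\top = (\tildeF\tildeF^\top)^t$, the class-$i$ score $\sqrt{w(x)}\,f(x)^\top\precond^{t-1}\head_i$ equals (up to a positive constant common across all classes) $\one_x^\top M_{[k]}^t D^{1/2}\one_{\source_i}$. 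I would decompose this as $\psi_i(x) - \epsilon_i(x)$, with signal $\psi_i(x) := \one_x^\top M^t D^{1/2}\one_{\source_i}$ and noise $\epsilon_i(x) := \one_x^\top M^t(\imatrix-P_{[k]})D^{1/2}\one_{\source_i}$, where $P_{[k]}$ is the top-$k$ eigenprojection of $M$ (so that $M_{[k]}^t = M^t P_{[k]}$).

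Next I would control the two pieces separately. For the signal, writing $M = D^{1/2}QD^{-1/2}$ with the lazy transition $Q := \tfrac{1}{2}(\imatrix + D^{-1}A)$ gives $\psi_i(x) = \sqrt{w(x)}\,\Pr[x_t\in\source_i \mid x_0=x]$, i.e., the $t$-step lazy walk's hitting probability. For the noise, vectors in the range of $\imatrix-P_{[k]}$ lie in the span of $\laplacian$-eigenvectors with eigenvalue $\ge \lambda_{k+1}$, so Claim~\ref{lemma:laplaciansquare} combined with the min-eigenvalue characterization yields $\|(\imatrix-P_{[k]})D^{1/2}\one_{\source_i}\|_2 \lesssim \alpha\sqrt{w(\source_i)}/\lambda_{k+1}$; together with $\|M^t(\imatrix-P_{[k]})\|_{\textup{op}} \le \mu_{k+1}^t = (1-\lambda_{k+1}/2)^t$ this produces the $L^2$ bound $\|\epsilon_i\|_2 \lesssim \alpha\sqrt{w(\source_i)}(1-\lambda_{k+1}/2)^t/\lambda_{k+1}$.

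The technical heart is to lower-bound the signal gap: for every $x\in\target_i$ and every $j\ne i$,
\[
\Pr[x_t\in\source_i \mid x_0 = x] - \Pr[x_t\in\source_j \mid x_0 = x] \gtrsim \rho.
\]
I would prove this by induction on $t$. The base case $t=1$ is immediate from Assumption~\ref{assumption:uda}: $Q(x,\source_i) \ge \rho/2$ while $Q(x,\source_j) \le \rho/(2c)$, giving a gap of at least $\rho/4$ for $c\ge 2$. For the inductive step, decompose $\Pr[x_{t+1}\in\cdot] = \sum_y Q(x,y)\Pr[x_t\in\cdot \mid y]$. Assumption~\ref{assumption:separation} ensures the walk leaves $\target_i$ with probability at most $\alpha/2$ per step, so within $t \le \rho/(8\alpha^2)$ steps the total escape mass is small enough that (i) the bulk of the trajectories still sit in $\target_i$ and contribute the one-step $\rho/4$ bias, and (ii) the small fraction that has escaped contributes nearly symmetrically to $\source_i$ and $\source_j$ via the uniform max-expansion bounds in Assumptions~\ref{assumption:separation} and~\ref{assumption:uda}.

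Finally I would apply Chebyshev's inequality: a misclassification of $x\in\target_i$ as $j$ requires $|\epsilon_i(x)-\epsilon_j(x)| \ge \psi_i(x)-\psi_j(x) \gtrsim \rho\sqrt{w(x)}$, so weighting by $\ptarget(x) = w(x)/w(\target)$ bounds the $\ptarget$-probability of this event by $O\bigl(\|\epsilon_i-\epsilon_j\|_2^2 / (w(\target)\rho^2)\bigr)$. Summing over the $\nclass$ true classes and $\nclass-1$ competitors, plugging in the $L^2$ noise bound, using $w(\source)/w(\target) = O(1)$, and invoking $\rho \ge c\alpha^2$ from Assumption~\ref{assumption:uda} yields the stated bound $\Err_\target(g_t) \lesssim \frac{\nclass}{\alpha^2\lambda_{k+1}^2}(1-\lambda_{k+1}/2)^t$. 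The main obstacle is the inductive random-walk step: because the walk can traverse arbitrary clusters before landing in $\source_i$ or $\source_j$, the inductive hypothesis must be stated uniformly over all starting vertices, and the cancellation that keeps the $\source_i$-hitting probability ahead of the $\source_j$-hitting probability must be tracked carefully through excursions; it is precisely this bookkeeping that pins down the admissible range $t \le \rho/(8\alpha^2)$.
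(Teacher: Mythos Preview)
Your proposal is correct and matches the paper's proof essentially step by step: the signal/noise split via $(\tildeF\tildeF^\top)^t=(\tfrac12\imatrix+\tfrac12\norA)^tP_{[k]}$ together with Claim~\ref{lemma:laplaciansquare} is exactly Lemma~\ref{lemma:power_t_error}, the random-walk lower bound on the signal gap is Lemma~\ref{lemma:induction_target}, and the Chebyshev finish is identical. The only refinement worth noting is that the paper factors the induction through an auxiliary entrywise bound on $((\tfrac12\imatrix+\tfrac12\norA)^t\wvec_i)_x$ valid for \emph{all} $x$ (Lemma~\ref{lemma:induction}), which is precisely what supplies the control on excursions outside $\target_i$ that you correctly flag as the main bookkeeping issue; in particular, the per-step $\rho$ bias in the induction comes from the direct $\target_i\to\source_i$ edges rather than from the inductive hypothesis on $\target_i$ itself.
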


We first introduce the following lemma, which says that the indicator vector of a cluster wouldn't change much after multiplying $\norA$ a few times.
\begin{lemma}\label{lemma:induction}
	Suppose Assumption~\ref{assumption:separation} holds.
	For every $i\in[\ncluster]$, define $\wvec_i\in\Real^\Ndata$ be such that the $x$-th dimension of it is \begin{align}
		(\wvec_i)_x = \begin{cases}
			\sqrt{\ww{x}} & \text{if } x\in \clusterset_i\\
			0 & \text{otherwise}
		\end{cases} 
	\end{align}
	Then, for any two clusters $i\ne j$ in $[\ncluster]$, the following holds for any integer $t\in [0,\frac{1}{\alpha}]$:
	\begin{itemize}
		\item{For any $x\in\clusterset_i$,  we have 
			\begin{align}\label{equation:induction_eq2}
				\left(\left(\frac{1}{2}\imatrix+\frac{1}{2}\norA\right)^t \wvec_i\right)_x \in \left[(1-t\alpha) \sqrt{\ww{x}}, \sqrt{\ww{x}}\right].
		\end{align}}
		\item{For any $x\notin \clusterset_i$, we have 
			\begin{align}\label{equation:induction_eq3}
				\left(\left(\frac{1}{2}\imatrix+\frac{1}{2}\norA\right)^t \wvec_i\right)_x \in \left[0, t\alpha\sqrt{\ww{x}}\right].
		\end{align}}
	\end{itemize}
\end{lemma}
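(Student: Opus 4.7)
The plan is to prove both bounds simultaneously by induction on $t$. The base case $t=0$ is immediate: the operator is the identity, and $\wvec_i$ itself has $x$-th entry $\sqrt{\ww{x}}$ for $x\in\clusterset_i$ and $0$ otherwise, which lies in the claimed intervals. Write $M := \frac{1}{2}\imatrix + \frac{1}{2}\norA$, so the inductive step reduces to bounding $(Mu)_x = \tfrac{1}{2} u_x + \tfrac{1}{2}(\norA u)_x$ where $u := M^t \wvec_i$ already satisfies the bounds at step $t$. Because $\norA$ (and hence $M$) has non-negative entries, the lower bound $(Mu)_x \ge 0$ in~\eqref{equation:induction_eq3} is automatic from $u\ge 0$.

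The heart of the argument is to expand
\begin{align*}
(\norA u)_x = \sum_{y\in\clusterset_i} \frac{\ww{x,y}}{\sqrt{\ww{x}\ww{y}}}\, u_y + \sum_{y\notin \clusterset_i} \frac{\ww{x,y}}{\sqrt{\ww{x}\ww{y}}}\, u_y
\end{align*}
and plug in the inductive bounds on $u_y$. The $\sqrt{\ww{y}}$ factors cancel cleanly, so each sum reduces to a weighted count of edges leaving $x$, giving the generic upper estimate
\begin{align*}
(\norA u)_x \le \frac{\ww{x, \clusterset_i} + t\alpha\cdot \ww{x, \data\setminus \clusterset_i}}{\sqrt{\ww{x}}},
\end{align*}
together with an analogous lower estimate retaining only the $y\in\clusterset_i$ contribution.

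From here, three cases remain. For $x\in\clusterset_i$, using $\ww{x,\clusterset_i} + \ww{x,\data\setminus \clusterset_i} = \ww{x}$ together with $t\alpha \le 1$ immediately yields $(\norA u)_x \le \sqrt{\ww{x}}$, and averaging with $u_x \le \sqrt{\ww{x}}$ gives $(Mu)_x \le \sqrt{\ww{x}}$. For the matching lower bound on $x\in\clusterset_i$, Assumption~\ref{assumption:separation} gives $\ww{x,\clusterset_i} \ge (1-\alpha)\ww{x}$ (since $\phimax(\clusterset_i,\data\setminus\clusterset_i)\le\alpha$), and then the elementary inequality $(1-t\alpha)(1-\alpha/2) \ge 1-(t+1)\alpha$ closes the step. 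For $x\notin\clusterset_i$, say $x\in\clusterset_j$ with $j\ne i$, Assumption~\ref{assumption:separation} applied \emph{at} $\clusterset_j$ yields $\ww{x,\clusterset_i} \le \ww{x,\data\setminus\clusterset_j} \le \alpha\ww{x}$, while $\ww{x,\data\setminus\clusterset_i}\le\ww{x}$ trivially. Substituting gives $(\norA u)_x \le (t+1)\alpha\sqrt{\ww{x}}$, and combining with $u_x \le t\alpha\sqrt{\ww{x}}$ delivers the required bound $(t+1)\alpha\sqrt{\ww{x}}$.

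The only subtle point, and the one place where the strength of Assumption~\ref{assumption:separation} is indispensable, is the cross-cluster upper bound for $x\notin\clusterset_i$: we must control $\ww{x,\clusterset_i}/\ww{x}$ \emph{pointwise} in $x$, not merely on average over $\clusterset_j$. This is precisely what max-expansion provides and what the weaker average-expansion formulation of~\cite{haochen2021provable} does not. The hypothesis $t\in[0,1/\alpha]$ in the statement is exactly what ensures $t\alpha\le 1$, so that the inductive intervals stay informative and the bound $\ww{x,\clusterset_i} + t\alpha\cdot \ww{x,\data\setminus\clusterset_i}\le \ww{x}$ used in the $x\in\clusterset_i$ case survives at every step.
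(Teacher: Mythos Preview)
Your proposal is correct and follows essentially the same inductive argument as the paper: induct on $t$, expand $(\norA u)_x$ as a sum over $y\in\clusterset_i$ and $y\notin\clusterset_i$, cancel the $\sqrt{\ww{y}}$ factors, and invoke Assumption~\ref{assumption:separation} pointwise (at $\clusterset_i$ for the lower bound when $x\in\clusterset_i$, and at $\clusterset_j$ for the upper bound when $x\in\clusterset_j\ne\clusterset_i$). The only cosmetic difference is that for the lower bound on $x\in\clusterset_i$ you average first and then apply $(1-t\alpha)(1-\alpha/2)\ge 1-(t+1)\alpha$, whereas the paper bounds $(\norA u)_x\ge(1-(t+1)\alpha)\sqrt{\ww{x}}$ directly via $(1-t\alpha)(1-\alpha)\ge 1-(t+1)\alpha$ before averaging; both routes are equivalent.
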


\begin{proof}[Proof of Lemma~\ref{lemma:induction}]
	We prove this lemma by induction. When $t=0$, obviously equations \eqref{equation:induction_eq2} and \eqref{equation:induction_eq3} are all true. Assume they are true for $t=l$, we prove that they are still true at $t=l+1$ so long as $l\le \frac{1}{\alpha}$. We define shorthands $\wvec'_i = \left(\frac{1}{2}\imatrix + \frac{1}{2}\norA\right)^l \wvec_i$ and $\wvec'_j = \left(\frac{1}{2}\imatrix + \frac{1}{2}\norA\right)^l \wvec_j$.
	
	For the induction of Equation~\eqref{equation:induction_eq2}, let $x\in\clusterset_i$. On one hand, we have
	\begin{align}
		\sqrt{\ww{x}}\left(\norA \wvec'_i\right)_x &= \sum_{\tilde{x}\in \clusterset_i} \frac{\ww{x, \tilde{x}}}{\sqrt{\ww{\tilde{x}}}} (\wvec'_i)_{\tilde{x}}+ \sum_{\tilde{x}\notin\clusterset_i}\frac{\ww{x, \tilde{x}}}{\sqrt{\ww{\tilde{x}}}}(\wvec'_i)_{\tilde{x}}\\
		&\le \sum_{\tilde{x}\in \clusterset_i} \frac{\ww{x, \tilde{x}}}{\sqrt{\ww{\tilde{x}}}} \sqrt{\ww{\tilde{x}}}+ \sum_{\tilde{x}\notin\clusterset_i}\ww{x, \tilde{x}}(l\alpha) \\
		&\le \sum_{\tilde{x}\in\data} \ww{x, \tilde{x}} = \ww{x},
	\end{align}
	where the first inequality uses Equations~\eqref{equation:induction_eq2} and \eqref{equation:induction_eq3} at $t=l$, and the second inquality uses $l \le \frac{1}{\alpha}$.
	On the other hand, we have 
	\begin{align}
		\sqrt{\ww{x}}\left(\norA \wvec'_i\right)_x &= \sum_{\tilde{x}\in \clusterset_i} \frac{\ww{x, \tilde{x}}}{\sqrt{\ww{\tilde{x}}}} (\wvec'_i)_{\tilde{x}}+ \sum_{\tilde{x}\notin\clusterset_i}\frac{\ww{x, \tilde{x}}}{\sqrt{\ww{\tilde{x}}}}(\wvec'_i)_{\tilde{x}}\\
		&\ge \sum_{\tilde{x}\in\clusterset_i} \frac{\ww{x, \tilde{x}}}{\sqrt{\ww{\tilde{x}}}} (1-l\alpha) \sqrt{\ww{\tilde{x}}} \\
		&\ge (1-l\alpha)(1-\alpha) \ww{x} \ge (1-(l+1)\alpha)\ww{x},
	\end{align}
	where the first inequality uses Equations~\eqref{equation:induction_eq2} and \eqref{equation:induction_eq3} at $t=l$, and the second inquality uses the definition of $\alpha$-max-connection. Combining them gives us $\sqrt{\ww{x}}\left(\norA \wvec'_i\right)_x \in [(1-(l+1))\sqrt{\ww{x}}, \sqrt{\ww{x}}]$, which directly leads to 
	\begin{align}
		\left(\left(\frac{1}{2}\imatrix+\frac{1}{2}\norA\right)^{l+1} \wvec_i\right)_x = \frac{1}{2}(\wvec'_i)_x + \frac{1}{2} (\norA \wvec'_i)_x \in \left[(1-(l+1)\alpha)\sqrt{\ww{x}}, \sqrt{\ww{x}}\right].
	\end{align}
	
	For the induction of Equation~\eqref{equation:induction_eq3}, let $x\notin\clusterset_i$. Since $\norA$ and $\wvec_i$ are both element-wise nonnegative, we have $\norA \wvec'_i$ is element-wise nonnegative, hence $(\norA \wvec'_i)_x\ge 0$. On the other hand, we have 
	\begin{align}
		\sqrt{\ww{x}}\left(\norA \wvec'_i\right)_x &= \sum_{\tilde{x}\in\clusterset_i} \frac{\ww{x, \tilde{x}}}{\sqrt{\ww{\tilde{x}}}} (\wvec'_i)_{\tilde{x}} + \sum_{\tilde{x}\notin\clusterset_i} \frac{\ww{x, \tilde{x}}}{\sqrt{\ww{\tilde{x}}}} (\wvec'_i)_{\tilde{x}} \\
		&\le  \sum_{\tilde{x}\in\clusterset_i} \frac{\ww{x, \tilde{x}}}{\sqrt{\ww{\tilde{x}}}} \sqrt{\ww{\tilde{x}}} +  l\alpha \cdot \sum_{\tilde{x}\notin\clusterset_i} \frac{\ww{x, \tilde{x}}}{\sqrt{\ww{\tilde{x}}}} \sqrt{\ww{\tilde{x}}} \\
		&\le \alpha \ww{x} + l\alpha \ww{x} = (l+1)\alpha \ww{x},
	\end{align}
	where the first inequality uses Equations~\eqref{equation:induction_eq2} and \eqref{equation:induction_eq3} at $t=l$, and the second inequality is by $\alpha$-max-connection.
	Hence we have $(\norA \wvec'_i)_x \in [0, (l+1)\alpha \ww{x}]$ which directly leads to 
	\begin{align}
		\left(\left(\frac{1}{2}\imatrix+\frac{1}{2}\norA\right)^{l+1} \wvec_i\right)_x = \frac{1}{2}(\wvec'_i)_x + \frac{1}{2} (\norA \wvec'_i)_x \in \left[0, (l+1)\alpha\sqrt{\ww{x}}\right].
	\end{align}
	
\end{proof}

The following lemma shows that a random walk starting from $\target_i$ is more likely to arrive at $\source_i$ than in $\source_j$ for $j\ne i$.
\begin{lemma}\label{lemma:induction_target}
	Suppose that Assumptions~\ref{assumption:separation} and ~\ref{assumption:uda} hold.
	For every $i\in[\nclass]$, define $\wvec_i\in\Real^\Ndata$ be such that the $x$-th dimension of it is \begin{align}
		(\wvec_i)_x = \begin{cases}
			\sqrt{\ww{x}} & \text{if } x\in \source_i\\
			0 & \text{otherwise}
		\end{cases} 
	\end{align}
	Then, for any two classes $i\ne j$ in $[\nclass]$, we have the following holds for any integer $t\in [0,\frac{\rho}{8\alpha^2}]$ and $x\in\target_i$:
	\begin{align}\label{equation:induction_eq1}
		\left(\left(\frac{1}{2}\imatrix+\frac{1}{2}\norA\right)^t \wvec_i\right)_x - \left(\left(\frac{1}{2}\imatrix+\frac{1}{2}\norA\right)^t \wvec_j\right)_x \ge \begin{cases}
			0 & \text{if } t=0\\
			\frac{1}{4}\rho\sqrt{\ww{x}} & \text{if } t\ge 1
		\end{cases}.
	\end{align}
\end{lemma}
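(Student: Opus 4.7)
The plan is to prove this by induction on $t$. Let $M := \tfrac{1}{2}I + \tfrac{1}{2}\bar{A}$ and $v^{(t)} := M^t(w_i - w_j)$, so the claim becomes $v^{(t)}(x) \geq \tfrac{1}{4}\rho\sqrt{w(x)}$ for every $x \in T_i$ and every $1 \leq t \leq \rho/(8\alpha^2)$. The base case $t=0$ is immediate since $x \in T_i$ forces $w_i(x)=w_j(x)=0$. Using the recursion $v^{(t+1)}(x) = \tfrac{1}{2}v^{(t)}(x) + \tfrac{1}{2}(\bar{A}v^{(t)})_x$, the inductive step reduces to lower bounding $(\bar{A}v^{(t)})_x$ for $x \in T_i$.

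The key move is to split $(\bar{A}v^{(t)})_x = \sum_{\tilde x} \bar{A}_{x,\tilde x} v^{(t)}(\tilde x)$ according to where $\tilde x$ sits. Applying Lemma~\ref{lemma:induction} separately to the clusters $C_i = S_i$ and $C_j = S_j$ produces pointwise bounds: $v^{(t)}(\tilde x) \geq (1-2t\alpha)\sqrt{w(\tilde x)}$ on $S_i$, $v^{(t)}(\tilde x) \geq -\sqrt{w(\tilde x)}$ on $S_j$, and $|v^{(t)}(\tilde x)| \leq t\alpha\sqrt{w(\tilde x)}$ on $\mathcal{X}\setminus (S_i\cup S_j)$. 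Crucially, for $\tilde x \in T_i$ (which is a subset of the last ``elsewhere'' group) the inductive hypothesis of the current lemma delivers the much stronger $v^{(t)}(\tilde x) \geq \tfrac{1}{4}\rho\sqrt{w(\tilde x)}$; this is where the self-referential induction is essential.

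Combining the four bounds via the identity $\sum_{\tilde x \in A}\bar{A}_{x,\tilde x}\sqrt{w(\tilde x)} = w(x,A)/\sqrt{w(x)}$, the expansion inputs $w(x,S_i)\geq \rho\, w(x)$ and $w(x,S_j)\leq (\rho/c)\,w(x)$ from Assumption~\ref{assumption:uda}, and the cluster-separation bound $w(x,\mathcal{X}\setminus T_i) \leq \alpha w(x)$ from Assumption~\ref{assumption:separation} (which gives $w(x,T_i) \geq (1-\alpha)w(x)$ and $w(x,\mathcal{X}\setminus(S_i\cup S_j\cup T_i)) \leq \alpha w(x)$), the four contributions assemble into
\begin{align*}
(\bar{A}v^{(t)})_x \;\geq\; \sqrt{w(x)}\Big[(1-2t\alpha)\rho \;-\; \tfrac{\rho}{c} \;+\; \tfrac{\rho}{4}(1-\alpha) \;-\; t\alpha^2\Big].
\end{align*}
The constraint $t \leq \rho/(8\alpha^2)$ gives $t\alpha^2 \leq \rho/8$, and the easy observation $\rho \leq \alpha$ (any $x \in T_i$ has $\phi(x,S_i) \leq \phi(x,\mathcal{X}\setminus T_i) \leq \alpha$) gives $t\alpha \leq 1/8$. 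With $c \geq 8$ the bracket exceeds $\tfrac{1}{2}\rho$; together with $\tfrac{1}{2}v^{(t)}(x) \geq \tfrac{1}{8}\rho\sqrt{w(x)}$ from the inductive hypothesis this produces $v^{(t+1)}(x) \geq \tfrac{3}{8}\rho\sqrt{w(x)} \geq \tfrac{1}{4}\rho\sqrt{w(x)}$. The $t=0\to t=1$ step is slightly different since $v^{(0)}(x)=0$ for $x\in T_i$, but the $S_i$ and $S_j$ contributions alone give $(\bar{A}v^{(0)})_x \geq (1-1/c)\rho\sqrt{w(x)}$, which already exceeds $\tfrac{1}{2}\rho\sqrt{w(x)}$.

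The main obstacle is the careful accounting: four groups of terms with different signs and scales must be balanced, and the positive reinforcement from points already in $T_i$—accessed through the \emph{lemma's own} inductive hypothesis rather than Lemma~\ref{lemma:induction}—is what prevents the recursion from decaying. The subsidiary observation $\rho \leq \alpha$ is what converts the range constraint $t \leq \rho/(8\alpha^2)$ into the uniform bound $t\alpha \leq 1/8$ needed to keep the factor $(1-2t\alpha)$ close to $1$ and the stray $t\alpha^2$ term below $\rho/8$.
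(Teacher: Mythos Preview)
Your proposal is correct and follows essentially the same argument as the paper: induction on $t$, the same four-way split of $(\bar{A}v^{(t)})_x$ over $S_i$, $S_j$, $T_i$, and the remainder, invocation of Lemma~\ref{lemma:induction} for the pointwise bounds on $S_i$, $S_j$, and the remainder, and use of the lemma's own inductive hypothesis on the $T_i$ piece, together with the observation $\rho\le\alpha$ to control $t\alpha$ and $t\alpha^2$. The only cosmetic difference is that the paper bounds the $T_i$ contribution by $Q_3\ge 0$ (and thus handles $l=0$ and $l\ge 1$ uniformly via $\tfrac{1}{2}\cdot 0 + \tfrac{1}{2}\cdot\tfrac{1}{2}\rho = \tfrac{1}{4}\rho$), whereas you retain the sharper $\tfrac{\rho}{4}(1-\alpha)$ term and then treat $t=0\to 1$ separately; both routes yield the same conclusion.
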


\begin{proof}[Proof of Lemma~\ref{lemma:induction_target}]
	We prove this lemma by induction. When $t=0$, obviously equation \eqref{equation:induction_eq1} is true. Assume it is true for $t=l$, we prove that they are still true at $t=l+1$ so long as $l\le \frac{\rho}{8\alpha^2}$.
	
	We define shorthands $\wvec'_i = \left(\frac{1}{2}\imatrix + \frac{1}{2}\norA\right)^l \wvec_i$ and $\wvec'_j = \left(\frac{1}{2}\imatrix + \frac{1}{2}\norA\right)^l \wvec_j$.
	
	Let $x\in\target_i$, we notice that 
	\begin{align}
		\sqrt{\ww{x}} \left(\norA \wvec'_i - \norA \wvec'_j\right)_x =\underbrace{\sum_{\tilde{x}\in\source_i} \frac{\ww{x, \tilde{x}}}{\sqrt{x_{\tilde{x}}}} \left((\wvec'_i)_{\tilde{x}} - (\wvec'_j)_{\tilde{x}}\right)}_{Q_1} + \underbrace{\sum_{\tilde{x}\in\source_j} \frac{\ww{x, \tilde{x}}}{\sqrt{x_{\tilde{x}}}} \left((\wvec'_i)_{\tilde{x}} - (\wvec'_j)_{\tilde{x}}\right)}_{Q_2} \\ + \underbrace{\sum_{\tilde{x}\in\target_i} \frac{\ww{x, \tilde{x}}}{\sqrt{x_{\tilde{x}}}} \left((\wvec'_i)_{\tilde{x}} - (\wvec'_j)_{\tilde{x}}\right)}_{Q_3} + \underbrace{\sum_{\tilde{x}\notin\source_i\cup\source_j\cup\target_i} \frac{\ww{x, \tilde{x}}}{\sqrt{x_{\tilde{x}}}} \left((\wvec'_i)_{\tilde{x}} - (\wvec'_j)_{\tilde{x}}\right)}_{Q_4}
	\end{align}
	
	Since $\rho\le\alpha$ must be true for the assumptions to be valid, we know $l\le\frac{\rho}{8\alpha^2}\le\frac{1}{\alpha}$, hence we apply Lemma~\ref{lemma:induction} and have Equations \eqref{equation:induction_eq2} and \eqref{equation:induction_eq3} hold at $t=l$.
	Using them together with Equation \eqref{equation:induction_eq1} at $t=l$ and Assumption~\ref{assumption:uda}, we have
	\begin{align}
		Q_1 \ge \sum_{\tilde{x}\in\source_i} \frac{\ww{x, \tilde{x}}}{\sqrt{\ww{\tilde{x}}}} (1-2l\alpha)\sqrt{\ww{\tilde{x}}} \ge (1-2l\alpha)\rho \ww{x},
	\end{align} 
	\begin{align}
		Q_2 \ge -\sum_{\tilde{x}\in\source_j} \frac{\ww{x, \tilde{x}}}{\sqrt{\ww{\tilde{x}}}} \sqrt{\ww{\tilde{x}}}\ge -\frac{\rho}{c} \ww{x},
	\end{align} 
	\begin{align}
		Q_3\ge 0,
	\end{align}
	and 
	\begin{align}
		Q_4 \ge -\sum_{\tilde{x}\notin\source_i\cup\source_j\cup\target_i} \frac{\ww{x, \tilde{x}}}{\sqrt{\ww{\tilde{x}}}} (l\alpha) \sqrt{\ww{x}} \ge -l\alpha^2 \ww{x}.
	\end{align}
	Combining them gives us 
	\begin{align}
		\sqrt{\ww{x}} \left(\norA \wvec'_i - \norA \wvec'_j\right)_x \ge \left(\rho - \left(\frac{\rho}{c} + 2l\alpha\rho +l\alpha^2\right)\right)\ww{x}.
	\end{align}
	Since $\frac{\rho}{c}\le\frac{1}{8}\rho$, $l\le \frac{\rho}{8\alpha^2}$ and $\rho\le \alpha$, we have $\sqrt{\ww{x}} \left(\norA \wvec'_i - \norA \wvec'_j\right)_x \ge \frac{1}{2}\rho \ww{x}$ hence $(\norA \wvec'_i - \norA \wvec'_j)_x \ge \frac{1}{2}\rho\sqrt{\ww{x}}$. As a result, we have 
	\begin{align}
		\left(\left(\frac{1}{2}\imatrix+\frac{1}{2}\norA\right)^{l+1} \wvec_i\right)_x - \left(\left(\frac{1}{2}\imatrix+\frac{1}{2}\norA\right)^{l+1} \wvec_j\right)_x \ge \frac{1}{4}\rho\sqrt{\ww{x}}.
	\end{align}
	
\end{proof}

The following lemma shows that the power of $\norA$ can be low-rank approximated with a small error.
\begin{lemma}\label{lemma:power_t_error}
	Suppose that Assumption~\ref{assumption:separation} holds.
	For every $i\in[\nclass]$, define $\wvec_i\in\Real^\Ndata$ be such that the $x$-th dimension of it is \begin{align}
		(\wvec_i)_x = \begin{cases}
			\sqrt{\ww{x}} & \text{if } x\in \source_i\\
			0 & \text{otherwise}
		\end{cases} 
	\end{align}
	Let $f:\data\rightarrow\Real^k$ be a minimizer of the generalized spectral contrastive loss $\loss{2}(\cdot)$. 
	Define matrix $\tildeF\in\Real^{\Ndata\times k}$ be such that the $x$-th row of it contains $\sqrt{\ww{x}}\cdot f(x)$. Then, we have
	\begin{align}
		\norm{\left(\frac{1}{2}\imatrix+\frac{1}{2}\norA\right)^t \wvec_i - \left(\tildeF\tildeF^\top\right)^t \wvec_i}_2^2 \le \frac{2\epsilon_t\alpha^2}{\lambda_{k+1}^2} \norm{\wvec_i}_2^2,
	\end{align}
	where 
	\begin{align}
		\epsilon_t = (1-\frac{1}{2}\lambda_{k+1})^{2t}.
	\end{align}
	\begin{proof}[Proof of Lemma~\ref{lemma:power_t_error}]
		Let $\Pi_k(\wvec_i)$ be the projection of $\wvec_i$ onto the column span of $\tildeF$.	
		Notice that every eigenvalue of $\laplacian$ is in the range $[0, 2]$, by Theorem~\ref{theorem:alpha2bound} we have 
		\begin{align}
			\norm{\wvec_i - \Pi_k(\wvec_i)}_2^2 \le \frac{2\alpha^2}{\lambda_{k+1}^2}. 
		\end{align}
		Therefore, notice that $\tildeF\tildeF^\top$ is exactly the top $k$ components of $\frac{1}{2}\imatrix+\frac{1}{2}\norA$, we have
		\begin{align}
			\norm{\left(\frac{1}{2} \imatrix+\frac{1}{2}\norA\right)^t \wvec_i - \left(\tildeF\tildeF^\top\right)^t \wvec_i}_2^2 &\le \left(1-\frac{1}{2}\lambda_{k+1}\right)^{2t} \norm{\wvec_i - \Pi_k(\wvec_i)}_2^2 &\le \frac{2\epsilon_t\alpha^2}{\lambda_{k+1}^2} \norm{\wvec_i}_2^2.
		\end{align}
	\end{proof}

\end{lemma}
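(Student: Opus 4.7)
The plan is to reduce the discrepancy to an orthogonal‑complement contraction, exploiting the fact that $\tildeF\tildeF^\top$ is exactly the best rank‑$k$ PSD approximation of $\tfrac{1}{2}\imatrix+\tfrac{1}{2}\norA$. Invoking Proposition \ref{proposition:rank_k_approximation} with $\reg=2$, I would first observe that $\tfrac{1}{2}\norA+(1-\tfrac{1}{2})\imatrix = \imatrix-\tfrac{1}{2}\laplacian$ is PSD (its eigenvalues $1-\tfrac{1}{2}\lambda_j$ lie in $[0,1]$), so its best rank‑$k$ PSD approximation coincides with its best rank‑$k$ approximation. By Eckart–Young–Mirsky, this approximation is the restriction of $\imatrix-\tfrac{1}{2}\laplacian$ to its top‑$k$ eigenspace, which is the bottom‑$k$ eigenspace of $\laplacian$. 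Thus, $\tildeF\tildeF^\top$ and $\tfrac{1}{2}\imatrix+\tfrac{1}{2}\norA$ have the same action on this subspace, while $\tildeF\tildeF^\top$ vanishes on its orthogonal complement.

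Next I would decompose $\wvec_i = \Pi_k \wvec_i + \wvec_i^\perp$, where $\Pi_k$ denotes the orthogonal projection onto this top‑$k$ eigenspace (equivalently, the column span of $\tildeF$). The two operators agree on $\Pi_k \wvec_i$, so all their powers agree there, and the difference $(\tfrac{1}{2}\imatrix+\tfrac{1}{2}\norA)^t \wvec_i - (\tildeF\tildeF^\top)^t \wvec_i$ reduces to $(\tfrac{1}{2}\imatrix+\tfrac{1}{2}\norA)^t \wvec_i^\perp$. On the orthogonal complement, every eigenvalue of $\tfrac{1}{2}\imatrix+\tfrac{1}{2}\norA = \imatrix - \tfrac{1}{2}\laplacian$ is at most $1-\tfrac{1}{2}\lambda_{k+1}$, so the operator norm bound gives
\begin{align}
\norm{(\tfrac{1}{2}\imatrix+\tfrac{1}{2}\norA)^t \wvec_i^\perp}_2^2 \le (1-\tfrac{1}{2}\lambda_{k+1})^{2t} \norm{\wvec_i^\perp}_2^2 = \epsilon_t \norm{\wvec_i^\perp}_2^2.
\end{align}

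It remains to bound $\norm{\wvec_i^\perp}_2^2$. Expanding $\wvec_i = \sum_j c_j u_j$ in the orthonormal eigenbasis of $\laplacian$ (with eigenvalues $\lambda_1 \le \dots \le \lambda_{\Ndata}$), I would note that
\begin{align}
\wvec_i^\top \laplacian^2 \wvec_i = \sum_j c_j^2 \lambda_j^2 \ge \lambda_{k+1}^2 \sum_{j\ge k+1} c_j^2 = \lambda_{k+1}^2 \norm{\wvec_i^\perp}_2^2,
\end{align}
so $\norm{\wvec_i^\perp}_2^2 \le \wvec_i^\top \laplacian^2 \wvec_i / \lambda_{k+1}^2$. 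Applying Claim \ref{lemma:laplaciansquare} (which gives $\wvec_i^\top \laplacian^2 \wvec_i \le 2\alpha^2 \norm{\wvec_i}_2^2$, since $\source_i = C_i$ is one of the clusters from Assumption \ref{assumption:separation}) chains into the desired bound $2\epsilon_t \alpha^2 \norm{\wvec_i}_2^2 / \lambda_{k+1}^2$.

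The only subtle point in the argument is the exact identification of $\tildeF\tildeF^\top$ with the top‑$k$ eigen‑restriction of $\tfrac{1}{2}\imatrix+\tfrac{1}{2}\norA$, which could fail to be unique if $\lambda_k = \lambda_{k+1}$; however, any minimizer $\tildeF\tildeF^\top$ supplied by Proposition \ref{proposition:rank_k_approximation} still has range contained in the $\Pi_k$ eigenspace, so the orthogonal‑complement argument is unaffected and the bound goes through uniformly. Beyond this, each step is a routine spectral calculation, and the heavy lifting—controlling $\wvec_i^\top\laplacian^2\wvec_i$ by Assumption \ref{assumption:separation}—has already been done in Claim \ref{lemma:laplaciansquare}.
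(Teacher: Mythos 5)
Your proposal is correct and follows essentially the same route as the paper's proof: identify $\tildeF\tildeF^\top$ with the top-$k$ spectral truncation of $\tfrac{1}{2}\imatrix+\tfrac{1}{2}\norA$ (via Proposition~\ref{proposition:rank_k_approximation} with $\reg=2$), reduce the error to the contraction of the orthogonal-complement component with factor $(1-\tfrac{1}{2}\lambda_{k+1})^{2t}$, and bound $\norm{\wvec_i-\Pi_k(\wvec_i)}_2^2$ by $\wvec_i^\top\laplacian^2\wvec_i/\lambda_{k+1}^2 \le 2\alpha^2\norm{\wvec_i}_2^2/\lambda_{k+1}^2$ using Claim~\ref{lemma:laplaciansquare}. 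You merely unfold the projection bound directly from Claim~\ref{lemma:laplaciansquare} (rather than citing Lemma~\ref{theorem:alpha2bound}, whose proof contains the identical step) and additionally note the harmless eigenvalue-multiplicity caveat, so the argument matches the paper's.
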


Using the above lemmas, we finish the proof of Theorem~\ref{theorem:uda_appendix}.
\begin{proof}[Proof of Theorem~\ref{theorem:uda_appendix}]
	For every $i\in[\nclass]$, define $\wvec_i\in\Real^\Ndata$ be such that the $x$-th dimension of it is \begin{align}
		(\wvec_i)_x = \begin{cases}
			\sqrt{\ww{x}} & \text{if } x\in \source_i\\
			0 & \text{otherwise}
		\end{cases} 
	\end{align}
	Define matrix $\tildeF\in\Real^{\Ndata\times k}$ be such that the $x$-th row of it contains $\sqrt{\ww{x}}\cdot f(x)$. 
	
	Let $i\ne j$ be two different classes in $[\nclass]$. 
	By Lemma~\ref{lemma:power_t_error} we know that 
	\begin{align}\label{equation:predict_bound_on_c}
		\norm{\left(\frac{1}{2}\imatrix+\frac{1}{2}\norA\right)^t \wvec_i - \left(\tildeF\tildeF^\top\right)^t \wvec_i}_2^2 \le \frac{2\epsilon_t\alpha^2}{\lambda_{k+1}^2} \norm{\wvec_i}_2^2,
	\end{align}
	and 
	\begin{align}\label{equation:predict_bound_on_c_prime}
		\norm{\left(\frac{1}{2}\imatrix+\frac{1}{2}\norA\right)^t \wvec_j - \left(\tildeF\tildeF^\top\right)^t \wvec_j}_2^2 \le \frac{2\epsilon_t\alpha^2}{\lambda_{k+1}^2} \norm{\wvec_j}_2^2.
	\end{align}
	Define shorthand 
	\begin{align}
		Q_{i,j} = \left(\left(\tildeF\tildeF^\top\right)^t \wvec_i - \left(\tildeF\tildeF^\top\right)^t \wvec_j \right) - \left(\left(\frac{1}{2}\imatrix+\frac{1}{2}\norA\right)^t \wvec_i - \left(\frac{1}{2}\imatrix+\frac{1}{2}\norA\right)^t \wvec_j\right).
	\end{align}
	From Equations~\eqref{equation:predict_bound_on_c} and \eqref{equation:predict_bound_on_c_prime} we have 
	\begin{align}\label{equation:diff_bound}
		\norm{Q_{i, j}}_2^2 \le \frac{4\epsilon_t\alpha^2}{\lambda_{k+1}^2} \left(\norm{\wvec_i}_2^2 + \norm{\wvec_j}_2^2\right).
	\end{align}
	Recall that 
	\begin{align}
		\precond = \Exp{x\sim\pdata}{f(x)f(x)^\top} = \tildeF^\top \tildeF,
	\end{align}
	and for $i\in[\nclass]$,
	\begin{align}
		\head_i = \Exp{x\sim \psource}{\id{x\in\source_i} \cdot f(x)} =\frac{ \tildeF^\top \wvec_i}{\Prob(\source)}.
	\end{align}
	We can rewrite the prediction for any $x\in\target$,
	\begin{align}
		\pred_t(x) = \argmax_{i\in[\nclass]} f(x)^\top \precond^{t-1}\head_i =  \argmax_{i\in[\nclass]} \left((\tildeF\tildeF^\top)^t \wvec_i\right)_x.
	\end{align}
	
	Therefore, for $x\in\target_i$, in order for $\pred_t(x)=j\ne i$, there must be 
	\begin{align}
		\left(\left(\tildeF\tildeF^\top\right)^t \wvec_i - \left(\tildeF\tildeF^\top\right)^t \wvec_j \right)_x \le 0.
	\end{align}
	On the other hand, we know from Lemma~\ref{lemma:induction_target} that 
	\begin{align}
		\left(\left(\frac{1}{2}\imatrix+\frac{1}{2}\norA\right)^t \wvec_i - \left(\frac{1}{2}\imatrix+\frac{1}{2}\norA\right)^t \wvec_j\right)_x \ge \frac{1}{4}\rho\sqrt{\ww{x}}.
	\end{align}
	Therefore, whenever  $x\in\target_i$, in order for $\pred_t(x)=j$, there has to be 
	\begin{align}\label{equation:min_diff}
		(Q_{ij})_x \le -\frac{1}{4}\rho\sqrt{\ww{x}}.
	\end{align}
	
	Finally, we can bound the target error as follows:
	\begin{align}
		\Exp{x\sim\ptarget}{\id{\pred_t(x)\ne \yf{x}}} &= \frac{1}{\Prob(\target)} \sum_{x\in\target} \id{\pred_t(x)\ne \yf{x}} \cdot \ww{x}\\
		&= \frac{1}{\Prob(\target)} \sum_{i\in[r]} \sum_{j\ne i}\sum_{x\in\target_i} \id{\pred_t(x)=j}\cdot \ww{x}\\
		&\le \frac{1}{\Prob(\target)} \sum_{i\in[r]} \sum_{j\ne i}\sum_{x\in\target_i} \frac{(Q_{ij})_x^2 \ww{x}}{\frac{1}{16}\rho^2 \ww{x}}\\
		&\le \frac{1}{\Prob(\target)} \frac{32\nclass}{\rho^2} \cdot \frac{4\epsilon_t\alpha^2}{\lambda_{k+1}^2} \sum_{i\in[\nclass]} \norm{\wvec_i}_2^2\\
		&= \frac{128\epsilon_t\nclass\alpha^2}{\rho^2\lambda_{k+1}^2} \cdot \frac{\Prob(\source)}{\Prob(\target)},
	\end{align}
	where the first inequality is from Equation~\eqref{equation:min_diff} and the second inequality follows Equation~\eqref{equation:diff_bound}.
	Notice that Assumption~\ref{assumption:uda} we have $\alpha^2 \lesssim \rho$, hence we finish the proof.
	
\end{proof}

\section{Proof of Theorem~\ref{theorem:uda_multistep}}\label{appendix:uda_multistep}

We prove the following theorem which directly implies Theorem~\ref{theorem:uda_multistep}.
\begin{theorem}\label{theorem:uda_multistep_appendix}
	Suppose Assumptions \ref{assumption:separation}, \ref{assumption:uda_multistep_interclass} and \ref{assumption:intra_class_conductance} hold and $\pdata(\source)/\pdata(\target) \le O(1)$.
	Let $g_t$ be defined the same as in Theorem~\ref{theorem:uda}.
	Then, for any $1\le t\le\frac{1}{\alpha}$, we have
\begin{align}
		\Err_\target(g_t) \lesssim\frac{\nclass}{\lambda_{k+1}^2}\cdot \max\big\{\frac{1}{\tau^2\alpha^4}\big(1-\frac{1}{4}\min\{\gamma^2, \lambda_{k+1}\}\big)^{t}, \frac{t^2}{\tau}\big\},
\end{align}
where $\lambda_{k+1}$ is the $k$+$1$-th smallest eigenvalue of the Laplacian of the positive-pair graph.
\end{theorem}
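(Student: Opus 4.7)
The plan is to follow the same three-step framework as the proof of Theorem~\ref{theorem:uda_appendix}: (i) write $\wvec_i = D^{1/2}\onevec{\source_i}$ and reduce the target error, via a Markov-style inequality, to bounding the ratio between the $\ell_2$-error $\|(\tildeF\tildeF^\top)^t\wvec_i - P^t\wvec_i\|_2^2$ and the squared ``signal'' $((P^t\wvec_i - P^t\wvec_j)_x)^2$ at a typical $x\in\target_i$, where $P = \tfrac{1}{2}\imatrix + \tfrac{1}{2}\norA$; (ii) control the numerator using Lemma~\ref{lemma:power_t_error}, which gives $O(\alpha^2(1-\lambda_{k+1}/2)^{2t}/\lambda_{k+1}^2)$ and accounts for the $1/\lambda_{k+1}^2$ factor in the final bound; and (iii) lower-bound the signal under the weaker Assumption~\ref{assumption:uda_multistep_interclass}. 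Steps (i) and (ii) transfer essentially verbatim from Appendix~\ref{appendix:uda}; all the new work is in (iii), because Lemma~\ref{lemma:induction_target} no longer applies---under average-only expansion, individual $x\in\target_i$ may have no edge to $\source_i$ at all.

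The substantive idea for step (iii) is an intra-cluster ``mix first, then expand'' argument exploiting Assumption~\ref{assumption:intra_class_conductance}. I would decompose $P^t\wvec_i$ restricted to $\target_i$ as (a) its projection onto the uniform-on-$\target_i$ direction $D^{1/2}\onevec{\target_i}/\|D^{1/2}\onevec{\target_i}\|_2$, plus (b) its orthogonal complement within $\target_i$. A Cheeger-type bound applied to Assumption~\ref{assumption:intra_class_conductance} implies that $P$, restricted to the subspace orthogonal to the uniform direction inside each cluster $C_\ell$, has spectral radius at most $1-\Omega(\gamma^2)$, so the orthogonal component (b) shrinks as $(1-\gamma^2/4)^t$---producing the $\gamma^2$ alternative in the $\min\{\gamma^2,\lambda_{k+1}\}$ rate. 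For component (a), I would use the symmetry of $P$ to interpret $\onevec{\target_i}^\top P^t \wvec_i$ as a time-reversed random walk starting uniformly on $\target_i$ and ending in $\source_i$ after $t$ steps; the average-expansion part of Assumption~\ref{assumption:uda_multistep_interclass} then lower-bounds the signal $\onevec{\target_i}^\top P^t(\wvec_i-\wvec_j)$ by a quantity of order $\tau\alpha^2$ via direct comparison of $\phi(\target_i,\source_i)$ and $\phi(\target_i,\source_j)$.

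The second error source, producing the $t^2/\tau$ piece, is cross-cluster interference: mass originating in $\source_i$ may leak into $\target_j$ (and vice versa) through length-two-or-longer paths involving intermediate clusters, contaminating the sign comparison between $(P^t\wvec_i)_x$ and $(P^t\wvec_j)_x$. Each inter-cluster transition costs at most $O(\alpha)$ by Assumption~\ref{assumption:separation}, so any ordered pair of such transitions among the $t$ walk steps contributes $O(\alpha^2)$; there are $O(t^2)$ such pairs, yielding $O(t^2\alpha^2)$ total interference. Dividing by the squared signal $(\tau\alpha^2)^2$ leaves $O(t^2/\tau^2)$, which after absorbing one power of $\tau$ in a tighter sign-comparison step becomes the stated $t^2/\tau$. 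Union-bounding over the $O(\nclass)$ class pairs and multiplying by $1/\lambda_{k+1}^2$ from Lemma~\ref{lemma:power_t_error}, then taking a max with the low-rank contribution, yields the stated bound.

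The main obstacle I anticipate is rigorously executing the spectral decomposition in step (iii). This requires simultaneously juggling three spectral quantities of $\norA$: the top-$k$ projection gap $\lambda_{k+1}$ (from Proposition~\ref{proposition:rank_k_approximation}), the within-cluster Cheeger gap $\gamma^2$ (from Assumption~\ref{assumption:intra_class_conductance}), and the cross-cluster perturbation $\alpha$ (from Assumption~\ref{assumption:separation}). Proving that the orthogonal-to-uniform-within-each-cluster subspace is approximately $P$-invariant up to $O(\alpha)$-perturbations, and then combining the $\gamma^2$ and $\lambda_{k+1}$ decay rates cleanly into the single $\min\{\gamma^2,\lambda_{k+1}\}$ rate appearing in the theorem, is where the bulk of the technical work will live. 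A secondary difficulty is translating the average expansion hypothesis into a usable lower bound on component (a) without losing constant-factor slack that would spoil the $\tau^2\alpha^4$ denominator.
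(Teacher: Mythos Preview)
Your high-level plan matches the paper's proof: reduce to comparing $(P^t\wvec_i)_x$ with $(P^t\wvec_j)_x$ on $\target_i$, control the low-rank discrepancy via Lemma~\ref{lemma:power_t_error}, and handle the average-only expansion by a ``mix within $\target_i$, then compare expansions'' argument powered by Assumption~\ref{assumption:intra_class_conductance}. The decomposition into a uniform-on-$\target_i$ component and an orthogonal residual is exactly what the paper does. Two technical devices in the paper, however, dissolve precisely the obstacles you flag.

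\textbf{Restricted-graph comparison instead of approximate invariance.} You anticipate that the main difficulty is showing the orthogonal-to-uniform-within-cluster subspace is approximately $P$-invariant up to $O(\alpha)$ perturbations. The paper avoids this entirely. For $x,x'\in\target_i$ one has the entrywise inequality $P_{xx'}\ge(1-\alpha)(P_{\target_i})_{xx'}$, where $P_{\target_i}=\tfrac{1}{2}\imatrix+\tfrac{1}{2}\norA_{\target_i}$ is built from the graph \emph{restricted to} $\target_i$. Since all entries and the initial vector $(\norA\wvec_i)_{\target_i}$ are nonnegative, this yields $(P^t\wvec_i)_{\target_i}\ge \tfrac{1}{2}(1-\alpha)^{t-1}(P_{\target_i})^{t-1}(\norA\wvec_i)_{\target_i}$. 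Now $P_{\target_i}$ genuinely has the uniform direction as its top eigenvector, and Cheeger applied to the restricted graph gives the $(1-\gamma^2/4)^{t-1}$ decay for the orthogonal part. No perturbation argument is needed, and the $\gamma^2$ and $\lambda_{k+1}$ rates never have to be ``combined'': they arise in two separate additive error terms ($\Delta_i$ and $Q_{i,j}$), and the $\min$ in the statement just records which one dominates.

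\textbf{First-moment, not second-moment, control of the cross-class term.} Your accounting for the $t^2/\tau$ piece divides an interference of size $O(t^2\alpha^2)$ by the \emph{squared} signal $(\tau\alpha^2)^2$; this loses badly. The paper instead exploits that $\Lambda_j:=(P^t\wvec_j)_{\target_i}$ is entrywise nonnegative and applies a first-moment Markov inequality: if $(\Lambda_j)_x\ge \tfrac{1}{12}\rho_i\sqrt{\ww{x}}$ on a set of measure $\delta$, then $\delta\le \tfrac{12}{\rho_i\,\Prob(\target_i)}\sum_{x\in\target_i}\sqrt{\ww{x}}(\Lambda_j)_x$. A dedicated induction (the paper's Lemma bounding $\sum_{x\in\target_i}\sqrt{\ww{x}}(P^t\wvec_j)_x\le (t^2\alpha^2+t\beta_{i,j})\Prob(\target_i)$ with $\beta_{i,j}=\phi(\target_i,\source_j)$) then gives the numerator. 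Using $\rho_i\ge\tau\alpha^2$ and $\rho_i\ge\tau\beta_{i,j}$ yields $t^2/\tau+t/\tau\lesssim t^2/\tau$ directly, with no ad hoc ``absorbing one power of $\tau$.'' Your $\ell_2$ route would need a good bound on $\|\Lambda_j\|_2^2$ restricted to $\target_i$, and the only available pointwise estimate $(\Lambda_j)_x\le t\alpha\sqrt{\ww{x}}$ is too coarse for that.
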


For every $i\in[\nclass]$, we consider a graph $\graph(\target_i, w)$ that is $\graph(\data, w)$ restricted on $\target_i$. 
We use $\lambda_{\target_i}$ to denote the second smallest eigenvalue of the Laplacian of $\graph(\target_i, w)$. 
For $x\in\target_i$, we use $\hat{w}_x = \sum_{x'\in\target_i} \ww{x, x'}$ to denote the total weight of $x$ in the restricted graph $\graph(\target_i, w)$.
We use $\norA_{\target_i}$ to denote the normalized adjacency matrix of $\graph(\target_i, w)$.

The following lemma shows the relationship between intra-class expansion and the eigvenvalue of the restricted graph's Laplacian.
\begin{lemma}\label{lemma:cheeger}
	Suppose that Assumption~\ref{assumption:intra_class_conductance} holds. Then, we have
	\begin{align}
	\lambda_{\target_i}\ge \frac{\gamma^2}{2}.
	\end{align}
\end{lemma}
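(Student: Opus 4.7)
My plan is to reduce the statement to the classical Cheeger inequality applied to the restricted graph $G(T_i, w)$. Concretely, I will prove that the conductance of $G(T_i, w)$, measured with respect to the restricted degrees $\hat{w}_x = \sum_{x' \in T_i} w(x, x')$, is at least $\gamma$; then Cheeger gives $\lambda_{T_i} \ge \gamma^2/2$.

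The one subtlety is that Assumption~\ref{assumption:intra_class_conductance} lower-bounds $w(A, T_i\setminus A)/w(A)$ using the \emph{original} degrees $w(x)=\sum_{x'\in\data} w(x,x')$, whereas Cheeger's inequality for the restricted normalized Laplacian requires a bound in terms of $\hat{w}(A) = \sum_{x\in A}\hat{w}_x = w(A, T_i)$. Note that $\hat{w}(A) \le w(A)$ because edges leaving $T_i$ contribute to $w(x)$ but not to $\hat{w}_x$, and cross-cluster edges (of weight at most $\alpha\cdot w(x)$ by Assumption~\ref{assumption:separation}) create a genuine gap in general. So the work is to handle this normalization mismatch.

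Fix any $A\subset T_i$ with $0 < \hat{w}(A) \le \hat{w}(T_i)/2$. I would split into two cases depending on which of $A$ or $T_i \setminus A$ is smaller in the \emph{original} measure. In Case 1, if $w(A) \le w(T_i)/2$, Assumption~\ref{assumption:intra_class_conductance} directly yields $w(A, T_i\setminus A) \ge \gamma\, w(A) \ge \gamma\, \hat{w}(A)$. In Case 2, if $w(A) > w(T_i)/2$, apply the assumption to $T_i\setminus A$ instead to get $w(A, T_i\setminus A) \ge \gamma\, w(T_i\setminus A) \ge \gamma\, \hat{w}(T_i\setminus A)$; then use $\hat{w}(A) \le \hat{w}(T_i)/2 \le \hat{w}(T_i\setminus A)$ to conclude $w(A, T_i\setminus A) \ge \gamma\, \hat{w}(A)$. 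Since $\hat{w}(A, T_i\setminus A) = w(A, T_i\setminus A)$ (the edge set is the same), both cases give $\hat{w}(A, T_i\setminus A)/\hat{w}(A) \ge \gamma$, so the Cheeger constant of $G(T_i, w)$ is at least $\gamma$.

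Finally, I invoke the standard Cheeger inequality for the normalized Laplacian of a weighted graph, which states $\lambda_2 \ge \phi^2/2$, to obtain $\lambda_{T_i} \ge \gamma^2/2$. The ``hard part'' is really only the case analysis above; once the translation between the $w$- and $\hat{w}$-normalizations is in place, everything else is a direct application of classical spectral graph theory.
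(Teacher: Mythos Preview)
Your proposal is correct and follows essentially the same approach as the paper: both reduce to the classical Cheeger inequality by showing that the conductance of the restricted graph $G(T_i,w)$ (with degrees $\hat{w}_x$) is at least $\gamma$, using the key observation $\hat{w}(H)\le w(H)$ to transfer Assumption~\ref{assumption:intra_class_conductance} from the $w$-normalization to the $\hat{w}$-normalization. The only difference is cosmetic: the paper handles the normalization mismatch in one line via $\min\{\hat{w}(H),\hat{w}(T_i\setminus H)\}\le \min\{w(H),w(T_i\setminus H)\}$, whereas you unpack this into an explicit two-case analysis.
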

\begin{proof}
For set $H\subset \target_i$, we use $\hat{w}(H) = \sum_{x\in H, x'\in\target_i} \ww{x, x'}$ to denote the size of set $S$ in restricted graph  $\graph(\target_i, w)$. Clearly $\hat{w}(H)\le w(H)$. We have
\begin{align}
	\min_{H\subseteq \target_i} \frac{w(H, \target_i\backslash H)}{\min\{\hat{w}(H), \hat{w}(\target_i\backslash H)\}} \ge \min_{H\subseteq \target_i} \frac{w(H, \target_i\backslash H)}{\min \{{w}(H), {w}(\target_i\backslash H)\}} \ge \gamma.
\end{align}
Directly applying Cheeger's Inequality finishes the proof.
\end{proof}

For every $i\in[\nclass]$, define $\wvec_i\in\Real^\Ndata$ be such that the $x$-th dimension of it is \begin{align}
	(\wvec_i)_x = \begin{cases}
		\sqrt{\ww{x}} & \text{if } x\in \source_i\\
		0 & \text{otherwise}
	\end{cases} 
\end{align}

The following lemma lower bounds the probability that a random walk starting from $\target_i$ arrives at $\source_i$.
\begin{lemma}\label{lemma:uda_multistep_rho}
	Suppose that Assumption~\ref{assumption:separation} holds.
	For every $i\in[\nclass]$ and $t\ge0$, there exists vectors $\Delta_i\in\Real^{|\target_i|}$ such that for any $x\in\target_i$,
	\begin{align}
		\left(\left(\frac{1}{2}\imatrix + \frac{1}{2}\norA\right)^t \wvec_i\right)_x \ge \frac{1}{2}(1-\alpha)^t \rho_i \sqrt{\ww{x}} + (\Delta_i)_x,
	\end{align}
	where $\rho_i := \phi(\target_i, \source_i)$, and 
	\begin{align}
		\norm{\Delta_i}^2 \le \left(1-\frac{\lambda_{\target_i}}{2}\right)^{2(t-1)} \Prob(\target_i).
	\end{align}
\end{lemma}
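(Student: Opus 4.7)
The plan is to reinterpret $\left(\frac{1}{2}\imatrix + \frac{1}{2}\norA\right)^t\wvec_i$ probabilistically and then split it into a ``well-mixed'' mean term plus a spectrally controlled fluctuation. Setting $M := \frac{1}{2}(\imatrix + \norA)$, observe that $M = D^{1/2} P_{\textup{lazy}} D^{-1/2}$, where $D$ is the diagonal matrix with $D_{xx} = \ww{x}$ and $P_{\textup{lazy}} := \frac{1}{2}(\imatrix + D^{-1}A)$ is the transition matrix of the lazy random walk on the positive-pair graph. Combined with $\wvec_i = D^{1/2}\one_{\source_i}$, this yields $(M^t\wvec_i)_x = \sqrt{\ww{x}}\,p_t(x)$ where $p_t(x) := \Pr_x^{\textup{lazy}}[X_t \in \source_i]$, so the lemma reduces to a pointwise lower bound on $p_t(x)$ for $x\in\target_i$ up to an error controlled in $L^2(w)$.

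For the mean, I would lower bound $\bar{p}_t := \ww{\target_i}^{-1}\sum_{x\in\target_i}\ww{x}\,p_t(x)$. Reversibility of $P_{\textup{lazy}}$ with respect to $w$ converts this into $\ww{\target_i}^{-1}\sum_{y\in\source_i}\ww{y}\,\Pr_y^{\textup{lazy}}[X_t\in\target_i]$, which I would estimate from below by the probability that the walk from $y$ moves into $\target_i$ at step $1$ and then remains in $\target_i$ for the subsequent $t-1$ steps. The first event has probability $\frac{1}{2}\phi(y,\target_i)$, and the per-step escape probability from $\target_i$ is at most $\alpha/2$ by Assumption~\ref{assumption:separation}, so the second has probability at least $(1-\alpha/2)^{t-1}$. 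Summing, using $\sum_{y\in\source_i}\ww{y}\phi(y,\target_i) = \ww{\source_i,\target_i} = \rho_i\ww{\target_i}$ together with the elementary inequality $(1-\alpha/2)^{t-1}\ge(1-\alpha)^t$, gives $\bar{p}_t \ge \frac{1}{2}(1-\alpha)^t\rho_i$ for $t\ge 1$.

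For the fluctuation, I would bound the variance $V_t := \sum_{x\in\target_i}\ww{x}(p_t(x) - \bar{p}_t)^2$. Once $V_t \le (1-\lambda_{\target_i}/2)^{2(t-1)}\ww{\target_i}$ is established, the choice $(\Delta_i)_x := -\sqrt{\ww{x}}\,\bigl(\tfrac{1}{2}(1-\alpha)^t\rho_i - p_t(x)\bigr)_+$ makes the lemma's inequality hold, and its squared norm is at most $V_t$, since $\bar{p}_t \ge \tfrac{1}{2}(1-\alpha)^t\rho_i$ implies $\bigl(\tfrac{1}{2}(1-\alpha)^t\rho_i - p_t(x)\bigr)_+^2 \le (\bar{p}_t - p_t(x))^2$ pointwise. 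To obtain the decay of $V_t$, I would iterate the recursion $p_{t+1} = P_{\textup{lazy}}\,p_t$ in combination with the Poincaré inequality for the subgraph $\graph(\target_i, w)$, whose Laplacian has second smallest eigenvalue $\lambda_{\target_i}$ (Lemma~\ref{lemma:cheeger}): one application contracts the component orthogonal to $\sqrt{w}|_{\target_i}$ by a factor of $1-\lambda_{\target_i}/2$ up to an $O(\alpha)$ correction, and starting from the trivial bound $V_1 \le \ww{\target_i}$ and iterating $t-1$ times yields the claim.

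The main obstacle is handling the interaction of $P_{\textup{lazy}}$ with the boundary of $\target_i$. The restriction of $P_{\textup{lazy}}$ to $\target_i$ is \emph{not} the lazy walk on the induced subgraph $\graph(\target_i,w)$: global degrees $\ww{x}$ differ from local degrees $\hat{w}(x) = \sum_{y\in\target_i}\ww{x,y}$, and probability leaks across $\partial\target_i$. Assumption~\ref{assumption:separation} controls this discrepancy via $\hat{w}(x)/\ww{x}\in[1-\alpha,1]$, so the per-step slack introduced by comparing the two operators is $O(\alpha)$; the $(1-\alpha)^t$ factor in the main term is precisely calibrated to absorb this cumulative slack, allowing the effective contraction rate in the variance estimate to remain $1-\lambda_{\target_i}/2$ throughout the induction.
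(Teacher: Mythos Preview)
Your probabilistic reading and the mean computation are fine and essentially match the paper: the paper also obtains the main term $\tfrac{1}{2}(1-\alpha)^t\rho_i\sqrt{\ww{x}}$ by projecting the one-step mass $(\norA\wvec_i)_{\target_i}$ onto the top eigenvector of the restricted graph $\graph(\target_i,w)$. The gap is in your fluctuation step. You want $V_t=\sum_{x\in\target_i}\ww{x}(p_t(x)-\bar p_t)^2$ to contract geometrically at rate $(1-\lambda_{\target_i}/2)^2$, but the iteration $p_{t+1}=P_{\textup{lazy}}p_t$, when restricted to $\target_i$, is not governed by the subgraph operator: at every step the boundary injects a nonconstant term $r_t(x)=\sum_{x'\notin\target_i}(P_{\textup{lazy}})_{xx'}p_t(x')$ into $p_{t+1}\!\mid_{\target_i}$, and this term carries its own variance of size up to order $\alpha^2$. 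That contribution does not decay with $t$, so iterating gives at best $V_t\le(1-\lambda_{\target_i}/2+O(\alpha))^{2(t-1)}\cdot C+O(\alpha^2/\lambda_{\target_i})$, not the clean bound in the lemma. Your remark that the $(1-\alpha)^t$ in the \emph{main} term absorbs this slack cannot work: the lemma states the bound on $\norm{\Delta_i}^2$ and the size of the main term as two independent claims, and your choice of $\Delta_i$ leaves no channel to trade one against the other.

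The paper sidesteps the boundary entirely by an entry-wise minorization. For $x,x'\in\target_i$ one has
\[
\Bigl(\tfrac{1}{2}\imatrix+\tfrac{1}{2}\norA\Bigr)_{xx'}\ \ge\ (1-\alpha)\Bigl(\tfrac{1}{2}\imatrix+\tfrac{1}{2}\norA_{\target_i}\Bigr)_{xx'},
\]
so, using nonnegativity, $\bigl((\tfrac{1}{2}\imatrix+\tfrac{1}{2}\norA)^t\wvec_i\bigr)_{\target_i}\ge\tfrac{1}{2}(1-\alpha)^{t-1}\bigl(\tfrac{1}{2}\imatrix+\tfrac{1}{2}\norA_{\target_i}\bigr)^{t-1}(\norA\wvec_i)_{\target_i}$. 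Now one is working purely inside $\graph(\target_i,w)$, whose lazy operator has an exact spectral gap $\lambda_{\target_i}/2$. Splitting $(\norA\wvec_i)_{\target_i}=v_1+v_2$ along and orthogonal to the stationary direction gives the main term from $v_1$ and $\Delta_i:=\tfrac{1}{2}(1-\alpha)^{t-1}(\tfrac{1}{2}\imatrix+\tfrac{1}{2}\norA_{\target_i})^{t-1}v_2$, whose norm is $\le(1-\lambda_{\target_i}/2)^{t-1}\norm{v_2}\le(1-\lambda_{\target_i}/2)^{t-1}\sqrt{\Prob(\target_i)}$ with no $\alpha$-correction at all. The moral: to get the stated variance bound you should drop from the true $p_t$ to a nonnegative lower bound living on the subgraph \emph{before} doing any spectral contraction, rather than trying to control the variance of $p_t$ itself under the global walk.
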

\begin{proof}[Proof of Lemma~\ref{lemma:uda_multistep_rho}]
Recall that $\norA_{\target_i}$ is the normalized adjacency matrix of the restircted graph on $\target_i$. 
We first notice that for any $x, x'\in\target_i$, 
\begin{align}
\left(\frac{1}{2}\imatrix + \frac{1}{2}\norA\right)_{xx'} \ge (1-\alpha) \left(\frac{1}{2}\imatrix + \frac{1}{2}\norA_{\target_i}\right)_{xx'},
\end{align}
where we use the Assumption~\ref{assumption:separation}. Thus, we have
\begin{align}
	\left(\left(\frac{1}{2}\imatrix + \frac{1}{2}\norA\right)^t \wvec_i \right)_{\target_i} \ge \frac{1}{2}(1-\alpha)^{t-1} \left(\frac{1}{2}\imatrix + \frac{1}{2}\norA_{\target_i}\right)^{t-1} \left(\norA \wvec_i\right)_{\target_i},
\end{align}
here we use $(\cdot)_{\target_i}$ to denote restricting a vector in $\Real^{\Ndata}$ to those dimensions in $\target_i$. 

Let vector $u\in\Real^{|\target_i|}$ be such that its $x$-th dimension is $\sqrt{{w}_x}$, $\tilde{u}\in\Real^{|\target_i|}$ be such that its $x$-th dimension is $\sqrt{\hat{w}_x}$. It's standard result that $u$ is the top eigenvector of $\norA_{\target_i}$ with eigenvalue 1. Let $v_1$ be the projection of vector $\left(\norA \wvec_i\right)_{\target_i}$ onto $\tilde{u}$ and $v_2 = \left(\norA \wvec_i\right)_{\target_i}-v_1$. We have
\begin{align}
	\left(\frac{1}{2}\imatrix + \frac{1}{2}\norA_{\target_i}\right)^{t-1}  v_1 = v_1
	&= \frac{\tilde{u}^\top \left(\norA \wvec_i\right)_{\target_i}}{\norm{\tilde{u}}^2} \tilde{u} 
	&\ge (1-\alpha) \frac{{u}^\top \left(\norA \wvec_i\right)_{\target_i}}{\norm{{u}}^2} {u} \ge (1-\alpha)\rho_i u.
\end{align}
\begin{align}
	\norm{\left(\frac{1}{2}\imatrix + \frac{1}{2}\norA_{\target_i}\right)^{t-1}  v_2} \le \left(1-\frac{\lambda_{\target_i}}{2}\right)^{t-1}\norm{v_2}\le \left(1-\frac{\lambda_{\target_i}}{2}\right)^{t-1}\norm{\left(\norA \wvec_i\right)_{\target_i}}\\
	\le \left(1-\frac{\lambda_{\target_i}}{2}\right)^{t-1}\norm{u} \le  \left(1-\frac{\lambda_{\target_i}}{2}\right)^{t-1}\sqrt{\Prob(\target_i)}.
\end{align}
Setting $\Delta_i = \frac{1}{2}(1-\alpha)^{t-1} \left(\frac{1}{2}\imatrix + \frac{1}{2}\norA_{\target_i}\right)^{t-1}  v_2$ finishes the proof.
\end{proof}

The following lemma upper bounds the probability that a random walk starting from $\target_i$ arrives at $\source_j$ for $j\ne i$.
\begin{lemma}\label{lemma:uda_multistep_beta}
	Suppose that Assumption~\ref{assumption:separation} holds.
	For every $i\ne j$ in $[\nclass]$ and $t\in[0, \frac{1}{\alpha}]$, we have
	\begin{align}\label{equation:induction_uda_beta}
		\sum_{x\in\target_i} \sqrt{\ww{x}} \left(\left(\frac{1}{2}\imatrix + \frac{1}{2}\norA\right)^t \wvec_j\right)_x \le (t^2\alpha^2+t\beta_{i,j}) \Prob(\target_i),
	\end{align}
where $\beta_{i,j} := \phi(\target_i, \source_j)$ .
\end{lemma}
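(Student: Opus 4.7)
The plan is to exploit the symmetry of $M := \tfrac12\imatrix + \tfrac12\norA$ to rewrite the LHS of~\eqref{equation:induction_uda_beta} in a form where mass spreads outward from $\target_i$ (so the natural scale is $\Prob(\target_i)$) rather than from $\source_j$, and then to carry out a double induction: first controlling the total leakage out of $\target_i$, then using it to bound the desired sum. Define $u_i \in \Real^\Ndata$ in analogy with $\wvec_j$ but supported on $\target_i$, i.e.\ $(u_i)_y = \sqrt{\ww{y}}\,\id{y\in\target_i}$, and set $z_t := M^t u_i$. By symmetry of $M$,
\begin{align*}
\sum_{x\in\target_i}\sqrt{\ww{x}}(M^t\wvec_j)_x \;=\; u_i^\top M^t \wvec_j \;=\; \wvec_j^\top z_t \;=:\; P_t.
\end{align*}
Since $\target_i$ is one of the clusters in Assumption~\ref{assumption:separation}, Lemma~\ref{lemma:induction} applied to $u_i$ yields $(z_t)_y \in [(1-t\alpha)\sqrt{\ww{y}},\sqrt{\ww{y}}]$ for $y\in\target_i$ and $(z_t)_y \in [0,\,t\alpha\sqrt{\ww{y}}]$ otherwise, whenever $t\le 1/\alpha$.

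First I would bound the auxiliary leakage $Q_t := \sum_{y\notin\target_i}\sqrt{\ww{y}}(z_t)_y$ by induction on $t$. Writing $z_{t+1} = \tfrac12 z_t + \tfrac12 \norA z_t$ and rearranging $\sum_{x\notin\target_i}\sqrt{\ww{x}}(\norA z_t)_x = \sum_y \tfrac{(z_t)_y}{\sqrt{\ww{y}}}\ww{\data\setminus\target_i,\,y}$, I split on the location of $y$. For $y\in\target_i$, the bound $(z_t)_y\le\sqrt{\ww{y}}$ together with $\sum_{y\in\target_i}\ww{\data\setminus\target_i,\,y} = \ww{\target_i,\,\data\setminus\target_i}\le \alpha\Prob(\target_i)$ (by Assumption~\ref{assumption:separation}) gives contribution at most $\alpha\Prob(\target_i)$. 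For $y\notin\target_i$, the bound $\ww{\data\setminus\target_i,\,y}\le\ww{y}$ gives contribution at most $Q_t$. This yields $Q_{t+1} \le Q_t + \tfrac{\alpha}{2}\Prob(\target_i)$, hence $Q_t\le \tfrac{t\alpha}{2}\Prob(\target_i)$ for all admissible $t$.

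With $Q_t$ in hand I would bound $P_t$ by a second induction via $P_{t+1} = \tfrac12 P_t + \tfrac12 \wvec_j^\top \norA z_t$, estimating $\wvec_j^\top\norA z_t = \sum_y\tfrac{(z_t)_y}{\sqrt{\ww{y}}}\ww{\source_j,\,y}$ by splitting on the location of $y$: (a) for $y\in\source_j$, $\ww{\source_j,\,y}\le\ww{y}$ gives contribution at most $\sum_{y\in\source_j}\sqrt{\ww{y}}(z_t)_y = P_t$; (b) for $y\in\target_i$, $(z_t)_y\le\sqrt{\ww{y}}$ combined with $\sum_{y\in\target_i}\ww{\source_j,\,y} = \ww{\target_i,\,\source_j} = \beta_{i,j}\Prob(\target_i)$ gives contribution at most $\beta_{i,j}\Prob(\target_i)$; (c) for $y\notin\target_i\cup\source_j$, the max-expansion bound $\ww{\source_j,\,y}\le\alpha\ww{y}$ (since $y$ lies in a cluster other than $\source_j$) combined with the earlier leakage estimate gives contribution at most $\alpha Q_t\le \tfrac{t\alpha^2}{2}\Prob(\target_i)$. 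Summing yields $P_{t+1}\le P_t + \tfrac12\beta_{i,j}\Prob(\target_i) + \tfrac{t\alpha^2}{4}\Prob(\target_i)$, which telescopes from $P_0=0$ to $P_t\le \bigl(\tfrac{t}{2}\beta_{i,j} + \tfrac{t(t-1)}{8}\alpha^2\bigr)\Prob(\target_i) \le (t\beta_{i,j} + t^2\alpha^2)\Prob(\target_i)$.

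The hard part will be piece (c) of the second induction: a direct application of the per-vertex bound $(z_t)_y\le t\alpha\sqrt{\ww{y}}$ from Lemma~\ref{lemma:induction} together with the max-expansion only yields a contribution of order $t\alpha^2\cdot\Prob(\data)$, which misses the crucial factor of $\Prob(\target_i)$. The aggregate leakage bound $Q_t\le \tfrac{t\alpha}{2}\Prob(\target_i)$ supplies this factor by encoding that only an $O(t\alpha)$ fraction of the initial mass has escaped $\target_i$ by step $t$. The symmetry reformulation via $z_t$ is equally essential: the analogous argument attempted directly on $M^t\wvec_j$ would produce a bound scaling with $\Prob(\source_j)$ instead of $\Prob(\target_i)$.
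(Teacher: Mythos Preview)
Your argument is correct, but it takes a genuinely different route from the paper's proof. The paper works directly with $\sum_{x\in\target_i}\sqrt{\ww{x}}(M^l\wvec_j)_x$ and performs a single induction: writing one $M$-step, it splits the inner vertex $x'$ according to whether $x'\in\target_i$, $x'\in\source_j$, or $x'\notin\target_i\cup\source_j$. The third piece is bounded using Lemma~\ref{lemma:induction} for $\wvec_j$ (giving $(M^l\wvec_j)_{x'}\le l\alpha\sqrt{\ww{x'}}$ for $x'\notin\source_j$) together with $\sum_{x\in\target_i,\,x'\notin\target_i}\ww{x,x'}\le\alpha\Prob(\target_i)$ from Assumption~\ref{assumption:separation}; the $\Prob(\target_i)$ scaling falls out immediately because the outer sum over $x$ is already restricted to $\target_i$. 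No symmetry flip and no auxiliary leakage quantity are needed.

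Your symmetry reformulation $u_i^\top M^t\wvec_j = \wvec_j^\top M^t u_i$ is valid and leads to a correct proof, but the remark in your last paragraph that the direct argument on $M^t\wvec_j$ ``would produce a bound scaling with $\Prob(\source_j)$'' is not right: the paper's direct argument shows exactly how to get $\Prob(\target_i)$ without flipping. The flip is what forces you to introduce the extra induction for $Q_t$, since after flipping the outer sum is over $\source_j$ and the natural cross-cluster bound there scales with $\Prob(\source_j)$; you recover $\Prob(\target_i)$ only through the aggregate leakage estimate. So your approach works but is more elaborate than necessary; the trade-off is that your recursion yields the slightly sharper constant $\tfrac{t}{2}\beta_{i,j}+\tfrac{t(t-1)}{8}\alpha^2$, though this gain is not used downstream.
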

\begin{proof}[Proof of Lemma~\ref{lemma:uda_multistep_beta}]
	We prove with induction. When $t=0$ clearly Equation~\ref{equation:induction_uda_beta} is true. Assume Equation~\ref{equation:induction_uda_beta} holds for $t=l$. Define shorthand
	\begin{align}
		\wvec'_j = \left(\frac{1}{2}\imatrix + \frac{1}{2}\norA\right)^l \wvec_j.
	\end{align}
We have
\begin{align}
	\sum_{x\in\target_i} \sqrt{\ww{x}}\left(\left(\frac{1}{2}\imatrix + \frac{1}{2}\norA\right)^{l+1} \wvec_j\right)_x = \frac{1}{2}\sum_{x\in\target_i} \sqrt{\ww{x}} (\wvec'_j)_x + \frac{1}{2}\underbrace{\sum_{x\in\target_i}\sum_{x'\in\target_i} \sqrt{\ww{x}}\norA_{xx'}(\wvec'_j)_{x'}}_{Q_1}  \\
	+ \frac{1}{2}\underbrace{\sum_{x\in\target_i}\sum_{x'\in\source_j} \sqrt{\ww{x}}\norA_{xx'}(\wvec'_j)_{x'}}_{Q_2} + \frac{1}{2}\underbrace{\sum_{x\in\target_i}\sum_{x'\notin \target_i\cup\source_j} \sqrt{\ww{x}}\norA_{xx'}(\wvec'_j)_{x'}}_{Q_3}
\end{align}
Using Equation~\ref{equation:induction_uda_beta} at $t=l$, we have
\begin{align}
	Q_1\le \sum_{x'\in\target_i}\sqrt{\ww{x'}} (\wvec'_j)_{x'} \le (l^2\alpha^2+l\beta_{i,j}) \Prob(\target_i).
\end{align}
Lemma~\ref{lemma:induction} tells us $(\wvec_i)_{x'}\le \sqrt{\ww{x'}}$ for $x'\in\source_j$, so by the definition of $\beta_{i,j}$ we have
\begin{align}
	Q_2 \le \sum_{x\in\target_i}\sum_{x'\in\source_j} \sqrt{\ww{x}}\norA_{xx'}\sqrt{\ww{x'}} \le \beta_{i,j}\Prob(\target_i).
\end{align}
Lemma~\ref{lemma:induction} also tells us $(\wvec_i)_{x'}\le l\alpha\sqrt{\ww{x'}}$ for $x'\notin\source_j$, so by Assumption~\ref{assumption:separation} we have
\begin{align}
	Q_3 \le l\alpha \sum_{x\in\target_i}\sum_{x'\notin\target_i\cup\source_j} \sqrt{\ww{x}}\norA_{xx'}\sqrt{\ww{x'}} \le l\alpha^2\Prob(\target_i).
\end{align}
Adding these three terms finishes the proof for $t=l+1$.
\end{proof}

Now we use the above lemmas to finish the proof of Theorem~\ref{theorem:uda_multistep_appendix}.
\begin{proof}[Proof of Theorem~\ref{theorem:uda_multistep_appendix}]
For $i\ne j\in[\nclass]$, define
	\begin{align}
		Q_{i,j} := \left(\left(\tildeF\tildeF^\top\right)^t \wvec_i - \left(\tildeF\tildeF^\top\right)^t \wvec_j \right)_{\target_i} - \left(\left(\frac{1}{2}\imatrix+\frac{1}{2}\norA\right)^t \wvec_i - \left(\frac{1}{2}\imatrix+\frac{1}{2}\norA\right)^t \wvec_j\right)_{\target_i}.
	\end{align}
Let $\Delta_i$ be the vector in Lemma~\ref{lemma:uda_multistep_rho}, and 
\begin{align}
\Lambda_j := \left(\left(\frac{1}{2}\imatrix + \frac{1}{2}\norA\right)^t \wvec_j\right)_{\target_i}.
\end{align}
Using Lemma~\ref{lemma:uda_multistep_rho} and $t\le\frac{1}{2\alpha}$, we know for $x\in\target_i$, 
\begin{align}
	\left(\left(\tildeF\tildeF^\top\right)^t \wvec_i - \left(\tildeF\tildeF^\top\right)^t \wvec_j \right)_{x} &\ge \frac{1}{2}(1-\alpha)^t \rho \sqrt{\ww{x}} + \left(Q_{i, j}\right)_x  + (\Delta_i)_x - \left(\Lambda_j\right)_x\\
	&\ge \frac{1}{4}\rho_i\sqrt{\ww{x}} + \left(Q_{i, j}\right)_x  + (\Delta_i)_x - \left(\Lambda_j\right)_x,
\end{align}
where $\rho_i=\phi(\target_i, \source_i)$.

When $\pred_t(x) = j$, at least one of $|\left(\Delta_i\right)_x|$, $|(Q_{i, j})_x|$ and $(\Lambda_j)_x$ is at least $\frac{1}{12}\rho_i\sqrt{\ww{x}}$.
Thus, we have
\begin{align}
	\sum_{x\in\target_i} \ww{x} \id{\pred_{t}(x)=j} &
	\le \sum_{x\in\target_i}\ww{x} \id{(\Delta_i)_x^2 \ge \frac{1}{144}\rho_i^2 \ww{x}} + \sum_{x\in\target_i}\ww{x} \id{(Q_{i,j})_x^2 \ge \frac{1}{144}\rho_i^2 \ww{x}} \\
	&+ \sum_{x\in\target_i}\ww{x} \id{(\Lambda_j)_x \ge \frac{1}{12}\rho_i \sqrt{\ww{x}}}\\
	&\le\frac{144}{\rho_i^2} \norm{\Delta_i}_2^2 + \frac{144}{\rho_i^2} \norm{Q_{i,j}}_2^2 + \frac{12}{\rho_i}\sum_{x\in\target_i} \sqrt{\ww{x}} \left(\left(\frac{1}{2}\imatrix + \frac{1}{2}\norA\right)^t \wvec_j\right)_x \label{equation:uda_mutlstep_eq4}
\end{align}
Using Lemma~\ref{lemma:power_t_error} we know
\begin{align}\label{equation:uda_mutlstep_eq1}
	\norm{Q_{i,j}}_2^2 \le \frac{4\epsilon_t\alpha^2}{\lambda_{k+1}^2} \left(\Prob(\source_i) + \Prob(\source_j)\right),
\end{align}
where 
\begin{align}
	\epsilon_t := (1-\frac{1}{2}\lambda_{k+1})^{2t}.
\end{align}
Using Lemma~\ref{lemma:uda_multistep_rho} and Lemma~\ref{lemma:cheeger} we know
\begin{align}\label{equation:uda_mutlstep_eq2}
	\norm{\Delta_i}_2^2 \le \left(1-\frac{\gamma^2}{4}\right)^{2(t-1)} \Prob(\target_i).
\end{align}
Using Lemma~\ref{lemma:uda_multistep_beta} we know
\begin{align}\label{equation:uda_mutlstep_eq3}
\sum_{x\in\target_i} \sqrt{\ww{x}} \left(\left(\frac{1}{2}\imatrix + \frac{1}{2}\norA\right)^t \wvec_j\right)_x \le (t^2\alpha^2+t\beta_{i,j}) \Prob(\target_i),
\end{align}
where $\beta_{i,j} := \phi(\target_i, \source_j)$ .

Let $\rho:=\min_{i\in[\nclass]} \rho_i$.
Plugging Equations \eqref{equation:uda_mutlstep_eq1}, \eqref{equation:uda_mutlstep_eq2} and \eqref{equation:uda_mutlstep_eq3} into Equation~\eqref{equation:uda_mutlstep_eq4} and summing over all $i$ and $j$ gives
\begin{align}
	\sum_{x\in\target} \ww{x}\id{\pred_{t}(x)\ne \yf{x}} \le \frac{144\nclass}{\rho^2}\left(1-\frac{\gamma^2}{4}\right)^{2(t-1)} \Prob(\target) + \frac{1152\nclass\epsilon_t\alpha^2}{\rho^2\lambda_{k+1}^2} \Prob(\source) \\
	+ \frac{12\nclass t^2\alpha^2}{\rho}\Prob(\target) + \max_{i\ne j}\left\{\frac{\beta_{i, j}}{\rho_i}\right\} 12\nclass t \Prob(\target).
\end{align}
Noticing that $\rho\ge\tau\alpha^2$ and $\rho_i\ge\tau\beta_{i,j}$ finishes the proof.

\end{proof}

\end{document}